\newtheorem{theorem}{Theorem}
\newtheorem{lemma}[theorem]{Lemma}
\theoremstyle{definition}
\newtheorem{definition}[theorem]{Definition}
\newtheorem*{terminology*}{Terminology}
\title{A Complete Criterion for Value of Information in Soluble Influence Diagrams}
\author{
    Chris van Merwijk*,\textsuperscript{\rm 1}
    Ryan Carey*,\textsuperscript{\rm 1}
    Tom Everitt\textsuperscript{\rm 2}
}
\begin{document}

\maketitle

\begin{abstract}
Influence diagrams have recently been used to analyse the safety and fairness properties of AI systems.
A key building block for this analysis is a graphical criterion for value of information (VoI).
This paper establishes the first complete graphical criterion for VoI in influence diagrams with multiple decisions.
Along the way, we establish two important techniques for proving properties of multi-decision influence diagrams:
ID homomorphisms are structure-preserving transformations of influence diagrams, while a Tree of Systems is a collection of paths that captures how information and control can flow in an influence diagram. %

\end{abstract}

\section{Introduction}

One approach to analysing the safety and fairness of AI systems is to represent them using variants of Bayesian networks \citep{Everitt2019modeling, Kusner2017}.
Influence diagrams (IDs) can be viewed an extension of Bayesian networks for representing agents \citep{Howard2005,Everitt2021agent}.
This graphical perspective offers a concise view of key relationships, that abstracts away from much of the internal complexity of modern-day AI systems.

Once a decision problem is represented graphically, key aspects can be summarised.
One well-studied concept is the \emph{value of information} (VoI) \citep{Howard1966}, which describes how much more utility 
an agent is able to obtain if it can observe a variable in its environment, compared with if it cannot.
Other summary concepts includes ``materiality'', ``value of control'', 
``response incentives''.~\looseness=-1

These concepts have been used to analyse the
redirectability \citep{everitt2019tampering,Holtman2020} of AI systems,
fairness \citep{Everitt2021agent,Ashurst2022}, 
ambitiousness \citep{cohen2020unambitious},
and %
the safety of reward learning systems
\citep{Armstrong2020pitfalls,Everitt2019modeling,langlois2021rl,Evans2021,Farquhar2022}.
Typically, this analysis involves applying \emph{graphical criteria}, that indicate which properties can or
cannot occur in a given diagram, based on the graph structure alone.
Graphical criteria are useful because they
enable qualitative judgements 
even when the precise functional relationships between variables
are unknown or unspecified.
~\looseness=-1

For the single-decision case, complete criteria have been established for all four of the aforementioned concepts \citep{Everitt2021agent}.
However, many AI applications such as reinforcement learning involve an agent making multiple decisions.
For the multi-decision case, multiple criteria for VoI have been proposed
\citep{Nielsen1999,Shachter1998,nilsson2000evaluating},
but none proven complete.
\looseness=-1

\begin{figure}
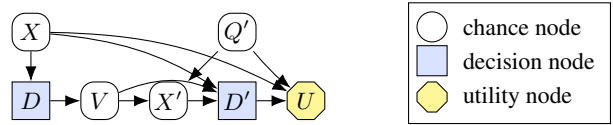

  \centering \setcompactsize
    
    \vspace*{3mm}
    \begin{influence-diagram}
    \setcompactsize
    \setcompactsize[node distance=0.1cm]
    \tikzset{
    every node/.append style = {node distance=5mm},
    }
        \node (Xs) [%
        ]{$X$};
        \node (Ds) [below = 0.4 of Xs, decision,%
        ]{$D$};
        \node (V) [right = 0.4 of Ds,%
        ]{$V$};
        \node (Xsp) [right = 0.4 of V,%
        ]{$X'$};
        \node (Dsp) [right = 0.4 of Xsp,decision,%
        ]{$D'$};
        \node (Qsp) [above = 0.4 of Dsp]{$Q'$};
        \node (U) [right=0.4 of Dsp,utility,%
        ]{$U$};
        \edge {Xs}{Ds};
        \path (Xs) edge[->, bend left=13] (Dsp);
        \path (Xs) edge[->, bend left=13] (U);
        \edge {Ds}{V};
        \edge {V}{Xsp};
        \path (V) edge[->, bend left=23] (Dsp);
        \edge {Xsp}{Dsp};
        \edge {Dsp}{U};
        \edge []{Qsp}{Xsp};
        \path (Qsp) edge[->, bend right=5] (U);
        
\cidlegend[right = of U, yshift=5mm, xshift=1mm]{
  \legendrow              {}          {chance node} \\
  \legendrow              {decision}  {decision node}\\
  \legendrow              {utility}   {utility node}}
    \end{influence-diagram}
    \caption{
    Does $X$ has positive value of information for $D$?
    }\label{fig:no-voi-graph}
  \end{figure}

This means that for some graphs, it is not known whether a node can have positive VoI.
For example, in \cref{fig:no-voi-graph}, it is not known whether it can be valuable for $D$
to observe $X$.
Specifically, the edge $X \to D$ does not meet the criterion of \emph{nonrequisiteness} used by \citet{nilsson2000evaluating}, 
so we cannot rule out that it contains valuable information.
However, the procedure that is used to prove completeness in the single-decision setting \citep{Everitt2021agent} does not establish positive VoI.

We prove that the graphical criterion of \citet{nilsson2000evaluating} is complete,  in that any environmental variable not guaranteed to have zero VoI by their criterion must have positive VoI in some compatible ID.
In the course of the proof, we develop several tools for reasoning about soluble IDs.
In summary, our main contributions are:
\begin{itemize}
    \item \textbf{ID homomorphisms}. These allow us to transform an ID into another with similar properties, that may be more easily analysed (\cref{sec:CID-homomorphisms-main}).
    \item \textbf{Trees of systems}. A system is a set of paths that make information valuable to a decision.
    A tree of systems describes how those paths traverse other decisions (\cref{sec:tree-main}).
    \looseness=-1
    \item \textbf{A complete VoI criterion}. We prove the criterion in \cref{sec:voi-completeness-main}. In \cref{sec:applications} we explain why this criterion may be useful, how it may be used in an AI safety application, and share an open source implementation.
\end{itemize}

\section{Setup}\label{sec:setup}

\ryan{Or ``
all of the results transfer to a regular influence diagram setting.
footnote: 
the only difference is that if edges do not match the direction of
causation, then a node may be deemed valuable to control, when controlling
it is not in-fact useful \citep[Appendix A]{everitt2021}
''}

Limited memory influence diagrams (also called LIMIDs) are graphical models containing 
decision and utility nodes, used to model decision-making problems \citep{Howard1966,nilsson2000evaluating}.

\begin{definition}[Limited memory influence diagram graph; \citealp{nilsson2000evaluating}]
    \label{def:cid}
    A \emph{(limited memory) ID graph}  is a directed acyclic graph
$\calG\!=\!(\sV,E)$ where the vertex set $\sV$ is partitioned into \emph{chance-} ($\sX$), \emph{decision-} ($\sD$), and \emph{utility
nodes} ($\sU$).
Utility nodes lack children.~\looseness=-1
\end{definition}

Since all of the influence diagram graphs in this paper have limited memory, 
we will consistently refer to them simply as \emph{influence diagram} (ID) graphs.
We denote the parents, descendants,\chris{And ancestors?} and family of a node $V \in \sV$
as $\Pa(V),\Desc(V)$, and $\Fa(V)=\Pa(V) \cup \{V\}$.
For $Y \in \sV$, \chris{Remove "For $Y \in \sV$,"?} we denote an edge by $V \to Y$, 
and a directed path 
by $V \pathto Y$.

To specify the precise statistical relationships, rather than just their structure, 
we will use a model that attaches probability distributions to the variables 
in an ID graph.

\begin{definition}%
    \label{def:cim}
    An \emph{influence diagram} (ID) is  a tuple $\calM = (\sG, \dom, P)$ where $\sG$ is an ID graph, $\dom(X)$ is a finite domain for each node $X$ in $\sG$ that is real-valued for utility nodes, and $P( X | \Pa(X))$ is a conditional probability distribution (CPD) for each chance and utility node $X$ in $\sG$.
    We will say that $\calM$ is \emph{compatible with} $\calG$, or simply that $\calM$ is an ID \emph{on} $\calG$.~\looseness=-1
\end{definition}

The decision-making task is to maximize the sum of expected utilities by selecting a CPD $\pi^D(D | \Pa(D))$,
called a \emph{decision rule}, 
for each decision $D \in \sD$. 
A \emph{policy} $\spi=\{\pi^D\}_{D \in \sD}$ consists of one decision rule for each decision.
Once the policy is specified, 
this induces joint probability distribution $P^\calM_\spi$ over all the variables.
We denote expectations by $\EE^\calM_\spi$ and omit the superscript 
when clear from context.
A policy $\spi$ is called \emph{optimal} if it
maximises $\EE_{\spi}[\totutilvar]$, where
$\totutilvar \coloneqq \sum_{U \in \sU}{U}$.
~\looseness=-1 %
Throughout this paper, we use subscripts for
policies, 
and superscripts for indexing.
A lowercase $v \in \dom(V)$ denotes an outcome of $V$.

Some past work has assumed ``no-forgetting'', meaning that every decision $d$ 
is allowed to depend on the value $v$ of any past decision $D'$ or 
its observations $\Pa(D')$, even when that variable $V \in \Fa(D')$ is not a parent of the current decision ($V \not \in \Pa(D)$) \citep{shachter1986evaluating}.
In contrast, we follow the more flexible convention of limited memory IDs \citep{nilsson2000evaluating}, 
by explicitly indicating whether a decision $d$ can depend on the value of an observation or decision $v$ 
by the presence (or absence) of an edge $V \to D$, just as we would do with any variable
that is not associated with a past decision.

Within the space of limited memory IDs, this paper focuses on \emph{soluble} IDs \citep{nilsson2000evaluating}, 
also known as IDs with ``sufficient recall'' \citep{Milch2008}.
The solubility assumption requires that 
it is always possible to choose an optimal decision rule without knowing what decision rules were followed by past decisions.
The formal definition uses $d$-separation.~\looseness=-1

\begin{definition}[d-separation; \citet{Verma1988soundness}]%
\label{def:d-separation}
    A path $p$ is \emph{blocked} by a set of nodes $\sZ$ if $p$ contains a collider $X \to W \gets Y$, such that neither $W$ nor any of its descendants are in $\sZ$, 
    or $p$ contains a chain $X \to W \to Y$ or fork $X \gets W \to Y$ where $W$ is in $\sZ$.
    If $p$ is not blocked, then it is \emph{active}.
    For disjoint sets $\sX$, $\sY$, $\sZ$, the set $\sZ$ is said to \emph{d-separate} $\sX$ from $\sY$,
    ${(\sX \dsep \sY \mid \sZ)}$ if $\sZ$ blocks every path
    from a node in $\sX$ to a node in $\sY$. Sets that are not d-separated are
    called \emph{d-connected}.~\looseness=-1
\end{definition}

\begin{definition}[Solubility; \citet{nilsson2000evaluating}]
  For an ID graph $\calG$ 
  let the \emph{mapping extension} $\calG'$ be 
  a modified version of $\calG$ where a chance node parent $\Pi^i$ is added to each decision $D^i$.
  Then $\calG$ is 
  \emph{soluble} if there exists an ordering $D^1, \dots, D^n$ over the decisions, such that in the mapping extension $\calG'$, for all $i$:~\looseness=-1 
  \[
    \sPi^{<i} \dsep \sU(D^i) \mid \Fa(D^i)
  \]
  where $\sPi^{<i}:=\{\Pi^j\mid j < i\}$ and
  $\sU(D^i) := \sU\cap\Desc(D^i)$.
\end{definition}

We will subsequently only consider ID graphs that are soluble.
Solubility is entailed by the popular more restrictive ``no forgetting'' assumption, where
the decision-maker remembers previous decisions and observations \citep{shachter1986evaluating,Shachter2016}:
in no forgetting, 
the family $\Fa(D^i)$ includes $\Fa(D^j)$ for $j<i$, so every policy node $\Pi^j$ is $d$-separated from $\sV \setminus \Fa(D^j) \supseteq \sU \cap \Desc^{D^j}$.
However, solubility is more general, for example \cref{fig:no-voi-graph} is soluble, even though past decisions are forgotten.

\section{Value of Information} \label{sec:voi-part-1}
The VoI of a variable
\ryan{Add ``is a widely studied property that [cite cite]''?}
indicates how much the attainable expected utility increases when a variable is observed compared to when it is not:

\begin{restatable}[Value of  Information; \citet{Howard1966}]{definition}{defvoi}
For an ID $\calM$ and $X \!\not \in \Desc_D$,
\ryan{I added a bit about it being a nondescendant.}
let $\calM_{X \!\not\! \to\! D}$ and $\calM_{X \!\to\! D}$ be
$\calM$ modified by respectively removing and adding the edge $X \!\to\! D$.
Then, the \emph{value of information} of $X$ for $D$ is:~\looseness=-1 
\[
\max_\spi \EE^{\calM_{X \to D}}_\spi[\totutilvar]
-
\max_\spi \EE^{\calM_{X \not \to D}}_\spi[\totutilvar]
.\]
\end{restatable}

This is closely related to the concept of \emph{materiality}; an observation $X \in \Pa(D)$ is called 
material if its VoI is positive.

The graphical criterion for VoI that
we will use iteratively removes information links that cannot contain useful information,
based on a condition called \emph{nonrequisiteness}.
If $X\dsep \sU(D^i)\mid \Fa(D^i)\setminus\{X\}$, then both $X$ and the information link $X\to D^i$ 
are called \emph{nonrequisite}, otherwise, they are \emph{requisite}.
Intuitively, nonrequisite links contain no information about 
influencable utility nodes, so the attainable expected utility is not 
decreased by their removal.
Removing one nonrequisite observation link can make a previously requisite information link nonrequisite, 
so the criterion involves iterative removal of nonrequisite links.
The criterion was first proposed by \citet{nilsson2000evaluating}, who also proved that it is sound. 
Formally, it is captured by what we calll a $d$-reduction:

\begin{definition}[$d$-reduction]
\label{def:d-reduction}
  The ID graph $\calG'$ is a \emph{$d$-reduction} of $\calG$ if $\calG'$ can be obtained from
  $\calG$ via a sequence $\calG=\calG^1,...,\calG^k=\calG'$
  where each $\calG^i,i>1$ differs from its predecessor $\calG^{i-1}$
  by the removal of one nonrequisite information link.
  A $d$-reduction is called \emph{minimal} if it lacks any nonrequisite information links.~\looseness=-1 
\end{definition}

For any ID graph $\calG$, there is only one minimal $d$-reduction \citep{nilsson2000evaluating},
i.e.\ the minimal $d$ reduction is independent of the order in which edges are removed.
We can therefore denote \emph{the \minimaldred} of $\calG$
as $\calG^*$.
Thus, \citet[Theorem 3]{nilsson2000evaluating} states that \emph{if} an ID graph $\calG$ contains $X \to D$
but %
$\calG^*$ does not, then $X$ has zero VoI in
every ID compatible with $\calG$.
Our completeness result replaces this with an \emph{if and only if} statement.
~\looseness=-1

\begin{restatable}[VoI Criterion]{theorem}{thmvoi} \label{thm:voi}
Let $\calG$ be a soluble ID graph containing an edge $X \to D$ from
chance node $X \in \sX$ to decision $D \in \sD$.
There exists an ID $\calM$ compatible with $\calG$ such that $X$ has strictly positive VoI for $D$ if and only if 
the minimal $d$-reduction contains $X \to D$.
\end{restatable}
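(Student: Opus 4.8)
The statement is an equivalence, so the plan is to treat its two directions separately. The \emph{only if} direction---that strictly positive VoI can occur only when $X \to D$ survives in the minimal $d$-reduction---is the soundness of the nonrequisiteness criterion, already available as \citet[Theorem 3]{nilsson2000evaluating}: if $\calG^*$ omits $X \to D$ then $X$ is nonrequisite after reduction and hence has zero VoI for $D$ in every compatible ID. I would simply invoke this. All of the work lies in the converse (completeness): assuming $X \to D \in \calG^*$, I must exhibit a single ID $\calM$ on $\calG$ in which observing $X$ strictly raises the attainable expected utility $\max_\spi \EE_\spi[\totutilvar]$.

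Because $X \to D$ is requisite in the minimal $d$-reduction, $X \not\dsep \sU(D) \mid \Fa(D)\setminus\{X\}$ in $\calG^*$, so there is a path $p$ from $X$ to some utility $U \in \sU(D)$ that is active given $\Fa(D)\setminus\{X\}$ and uses only requisite links. The plan is to implement a \emph{matching game} along $p$: let $X$ carry a uniformly distributed signal, make $U$ reward agreement between $D$ and that signal as it is propagated along $p$, and have each intermediate variable relay the signal deterministically, mixing in fresh independent noise at forks and colliders so that the signal is recoverable only by combining exactly the observations that $p$ makes active. When $D$ observes $X$ it can match the signal and collect the reward; when it does not, activeness given $\Fa(D)\setminus\{X\}$ ensures $X$'s signal is independent of everything $D$ sees, so no decision rule beats chance, producing the desired gap.

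The hard part, and the reason the multi-decision case has been open, is that $p$ may pass through other decision nodes that must be made to cooperate in relaying the signal; a relay decision can forward only information it actually observes, which imposes fresh requisiteness obligations recursively. This is exactly the role of the Tree of Systems: starting from $p$ (a system making $X$ valuable to $D$), I would grow a tree whose branches are the auxiliary paths that feed each relay decision the information it must forward. Solubility is the crucial enabler: it supplies an ordering of the decisions under which each $D^i$ chooses its rule knowing only $\Fa(D^i)$, so the relays can be defined consistently, and the tree is well-founded because $\calG$ is finite and solubility rules out circular dependence among the relays. I would use ID homomorphisms to transport $\calG$ to a more regular diagram on which the tree has a clean canonical shape, carry out the CPD construction there, and transfer positive VoI back along the homomorphism.

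Finally I would verify the strict inequality quantitatively. In $\calM_{X \to D}$ I lower-bound $\max_\spi \EE_\spi[\totutilvar]$ by the value of the explicit policy that has every relay forward its signal faithfully and has $D$ match $X$, which collects the full matching reward. In $\calM_{X \not\to D}$ I upper-bound the attainable value by observing that the signal is independent of all of $D$'s remaining observations, so $D$'s action is independent of the signal under every policy and the expected matching reward is strictly smaller. Extending the construction from $\calG^*$ to all of $\calG$ is harmless: the reduced-away parents are given CPDs that ignore them, so they carry no information about $X$ and cannot close the gap. Subtracting the two bounds yields strictly positive VoI, completing the converse.
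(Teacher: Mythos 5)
Your proposal is correct and follows essentially the same route as the paper: soundness by citing Nilsson and Lauritzen's Theorem~3, and completeness by extracting the active path guaranteed by requisiteness, recursively building a tree of systems to force relay decisions to cooperate, transforming to a canonical (normal form) graph via ID homomorphisms, parameterising with an XOR-style matching game, solving by backward induction under solubility, and transporting positive VoI back along the homomorphism. The quantitative gap argument you sketch at the end matches the paper's task-performance lemma and its independence claim $P[X \mid \Pa(D)\setminus\{X\}]=\tfrac{1}{2}$.
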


The VoI criterion is posed in terms of a graph $\calG$ that contains $X \to D$.
To analyse a graph that does not, %
one can simply add the edge $X \to D$ then apply the 
same criterion
as long as the new ID graph is soluble \citep{Shachter2016}.
\ryan{Is this the correct shachter cite?}

The proof will be given in \cref{sec:voi-completeness-main}, with details in \cref{sec:preliminaries-systems-and-trees,sec:model-definition}.
We note that this excludes the case of remembering a past decision $X \in \sD$, 
because 
Nilsson's criterion
is incomplete for this case.
For example, the simple ID graph with the edges $D\to D'\to U$ and $D\to U$, $D$ satisfies the graphical criterion of being requisite for $D'$, but $D'$ has zero VoI because it is possible for the decision $D$ to be deterministically assigned some optimal value. This means that there is no need for $D'$ to observe $D$.~\looseness=-1

\section{ID Homomorphisms} \label{sec:CID-homomorphisms-main}

To make the analysis easier, we will often want to transform an original ID graph into a more structured one.
Before describing the structure we will be aiming for, we consider the general question of when a modified ID graph retains important properties of the original.
To this end, we will define  the concept of an \emph{ID homomorphism}, 
which we then use to define a class of property-preserving ID transformations. (Proofs are supplied in \cref{sec:CID-homomorphisms}.)%

\newcommand{\preservesnodetypes}{Preserves node types}
\newcommand{\preserveslinks}{Preserves links}
\newcommand{\coversallinfolinks}{Covers all infolinks}
\newcommand{\combinesonlylinkeddecisions}{Combines only linked decisions}

\ryan{Should be able to condense the bullets from 8 lines to ~5}
\begin{restatable}[ID homomorphism]{definition}{defcidhomomorphism}
\label{def:CID-homomorphism}
For ID graphs $\calG\!\!=\!(\sV\!,E)$ and $\calG'\!\!=\!(\sV'\!,E')$, a map $h\colon\!\sV' \!\!\to\! \sV\!$ is an \emph{ID homomorphism} from $\sG'$ to $\sG$
iff:
\begin{enumerate}[label=(\alph*)]
    \item 
    (Preserves node types)
    $h$ maps each chance-, decision-, or utility-node to a node of
    the same type; 
    
    \item 
    (Preserves links)
    For every 
    $A \to B$ in $\calG'$
    either
    $h(A) \to h(B)$ is in $\calG$, or $h(A)=h(B)$; 

    \item 
    (Covers all information links)
    If $h(N) \to h(D)$ is in $\calG$ for $D\in \sD$, then $N \to D$ is in $\calG'$; and

    \item 
    (Combines only linked decisions)
    If $h(D_1)\!=\!h(D_2)$ for decisions $D^1\neq D^2$ in $ \calG'$
    then $\calG'$ contains $D^1 \!\!\to\! D^2$ or $D^2 \!\to\! D^1$.
\end{enumerate}
\end{restatable}

\begin{figure}
    \centering
    \begin{influence-diagram}
    \setcompactsize
    
    \node (Y1) {$Y$};
    \node (D1) [decision, right = of Y1] {$D$};
    \node (U1) [utility, below = of D1] {$U$};
    \edge{Y1}{D1};
    \edge{D1}{U1};
    \node (G1) [fit={(Y1) (D1) (U1)}, inner sep=2mm] {};
    \node [rectangle, draw=none, below = 0.1 of G1] {$\calG$\\ original};
    
    \node (D2) [decision, right = 1 of D1] {$D$};
    \node (U2) [utility, below = of D2] {$U$};
    \edge{D2}{U2};
    \node (G2) [fit={(D2) (U2)}, inner sep=2mm]
    {};
    \node [rectangle, draw=none, below = 0.1 of G2] {$\calG'$\\ remove $Y$};
    
    \node (D3) [decision, right = 1 of D2] {$D$};
    \node (D3p) [decision, right = of D3] {$D'$};
    \node (U3) [utility, below = of D3] {$U$};
    \edge{D3,D3p}{U3};
    \edge{D3}{D3p}
    \node (G3) [fit={(D3) (D3p) (U3)}, inner sep=2mm]
    {};
    \node [rectangle, draw=none, below = 0.1 of G3] {$\calG''$\\ duplicate $D$};
    
    \node (D4) [decision, right = 1 of D3p] {$D$};
    \node (D4p) [decision, right = of D4] {$D'$};
    \node (U4) [utility, below = of D4] {$U$};
    \edge{D4}{U4};
    \edge{D4}{D4p}
    \node (G4) [fit={(D4) (D4p) (U4)}, inner sep=2mm]
    {};
    \node [rectangle, draw=none, below = 0.1 of G4] {$\calG'''$\\ remove an edge};
    
    \path (D2) edge[->, bend right, blue, thick] (D1);
    \path (U2) edge[->, bend right, blue, thick] (U1);
    
    \path (D3) edge[->, bend right, green, thick] (D2);
    \path (D3p) edge[->, bend left, green, thick] (D2);
    \path (U3) edge[->, bend left, green, thick] (U2);
    
    \path (D4) edge[->, bend right, orange, thick] (D3);
    \path (D4p) edge[->, bend left, orange, thick] (D3p);
    \path (U4) edge[->, bend left, orange, thick] (U3);
    
    \end{influence-diagram}
    \caption{A sequence of homorphic transformations showing how $\calG$ can be homorphically transformed into $\calG'''$ by composition of \cref{le:21may19.2-CID-hom-from-node-copying-and-deleting,le:21may19.2-CID-hom-from-edge-pruning}.
    In the first step from $\calG$ to $\calG'$, $Y$ is removed; in the step from $\calG'$ to $\calG''$ a decision is duplicated; and in the final step from $\calG''$ to $\calG'''$, a link is removed.
    Since 
    the mapping at each step (blue, green, and orange respectively)
    meets the definition of an ID homomorphism, $\calG'''$ must be an ID homorphism of $\calG$ (\cref{le:20dec7.1-composition-of-CID-splits}).
    }
    \label{fig:homorphism}
\end{figure}

An ID homomorphism is analogous to the notion of graph homomorphism from graph theory, which essentially requires that edges are preserved along the map. 
An ID homomorphism additionally
requires that decisions in the two graphs have equivalent parents (c), and that split decisions are connected (d). This requirement maintains a direct correspondence between policies on the two graphs, so that, as we will see, ID homomorphisms preserve VoI.
Examples of ID homorphisms are given in \cref{fig:homorphism}.
~\looseness=-1

\ryan{Cite graph homomorphisms?}
\chris{we could add a citation to graph theory, Diestel, 2017. Though it's also just a ``well known concept" and has a wikipedia page.}
\ryan{This is where we should have remarks about the intuition of this definition. But I don't understand 
what is being said about adding edges from each node to itself.}

The following three lemmas establish properties that are preserved under
ID homorphisms. %
\begin{restatable}[Preserves Solubility]{lemma}{lecidhomsufficientrecall} \label{20nov25.1-CID-homomorphism-preserves-sufficient-recall-SR}
Let $\sG=(\sV,E)$ and $\sG'=(\sV',E')$ be ID graphs. If $\sG$ is soluble, and there exists a homomorphism $h\colon\sV'\to\sV$, then $\sG'$ is also soluble.
\end{restatable}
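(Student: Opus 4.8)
The plan is to transport the solubility ordering of $\sG$ to $\sG'$ along $h$, and then argue by contradiction: a failure of the solubility condition in $\sG'$ is witnessed by an active path in the mapping extension of $\sG'$, whose image under $h$ I will show witnesses a failure of solubility in $\sG$.

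First I would fix a solubility ordering $D^1,\dots,D^n$ for $\sG$, together with the mapping extension in which each $D^i$ gains a chance parent $\Pi^i$. Each decision of $\sG'$ maps under $h$ to some $D^i$ by property~(a), partitioning the decisions of $\sG'$ into \emph{fibers}. Within a fiber every pair of decisions is joined by an edge (property~(d)), and since $\sG'$ is acyclic this edge relation is a transitive tournament, hence a strict total order. I order the decisions of $\sG'$ lexicographically: primarily by the index $i$ of their image $D^i$, and secondarily by this within-fiber order. I then extend $h$ to the mapping extensions by sending the policy parent of a decision ${D'}^j$ with $h({D'}^j)=D^i$ to $\Pi^i$; the resulting $\hat h$ still sends each edge to an edge or collapses it (property~(b)), now including the new edges ${\Pi'}^j\to{D'}^j \mapsto \Pi^i\to D^i$. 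Two facts I will record for later: $\hat h$ maps descendants to descendants or to the node itself, and by properties~(b) and~(c) it maps $\Fa({D'}^j)$ into $\Fa(D^i)$, while every preimage of a parent of $D^i$ is a parent of ${D'}^j$.

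Now suppose the constructed ordering fails solubility for $\sG'$: for some ${D'}^j$ with $h({D'}^j)=D^i$ there is an active path $p$ in the mapping extension from a policy parent ${\Pi'}^{j'}\in{\sPi'}^{<j}$ to a utility $U'\in\sU({D'}^j)$, given $\Fa({D'}^j)$. Since $U'$ is a utility descendant of ${D'}^j$, its image lies in $\sU(D^i)$. I split on whether ${\Pi'}^{j'}$ shares the fiber of ${D'}^j$. In the same-fiber case the secondary order gives ${D'}^{j'}\to{D'}^j$, so ${D'}^{j'}\in\Fa({D'}^j)$; the path leaves ${\Pi'}^{j'}$ via ${\Pi'}^{j'}\to{D'}^{j'}$, and activity at the conditioned node ${D'}^{j'}$ forces a collider, i.e.\ the next node $X$ satisfies $X\to{D'}^{j'}$. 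But then property~(b) forces either $h(X)\in\Pa(D^i)$, whence $X\in\Pa({D'}^j)$ by~(c), or $h(X)=D^i$, whence $X\in\Pa({D'}^j)$ by~(d) and acyclicity; in both cases $X\in\Fa({D'}^j)$ is a non-collider, so $p$ is blocked---a contradiction. Hence ${D'}^{j'}$ lies in a strictly earlier fiber, $h({D'}^{j'})=D^{i'}$ with $i'<i$, so $\Pi^{i'}\in\sPi^{<i}$.

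It remains to push $p$ forward to an active path from $\Pi^{i'}$ to $\hat h(U')$ given $\Fa(D^i)$ in the mapping extension of $\sG$, which contradicts solubility of $\sG$ because $\Pi^{i'}\in\sPi^{<i}$ and $\hat h(U')\in\sU(D^i)$. Colliders cause no difficulty: edge directions are preserved off collapses, and since $\hat h$ carries both the conditioned set and descendants into $\Fa(D^i)$, every activated collider of $p$ maps to an activated collider. A non-collider of $p$ lies outside $\Fa({D'}^j)$, and property~(c) shows that its image can belong to $\Fa(D^i)$ only if it equals $D^i$ itself---that is, only at a fiber decision onto which $h$ collapses. Handling these collapsed, conditioned decision nodes is the crux: the image of $p$ is in general only a walk, and at such a node $D^i$ may appear as a blocked non-collider. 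I expect to resolve this by a dedicated argument---contracting each maximal fiber-segment of $p$ to a single visit to $D^i$ and checking, using the edges among fiber decisions from~(d) and the pulled-back parents from~(c), that each retained visit is either a collider (hence active, as $D^i\in\Fa(D^i)$) or can be removed by rerouting---after which a standard walk-to-path reduction yields the desired active path. This collapse-handling step is the main obstacle; the remaining verifications are routine consequences of properties~(a)--(d).
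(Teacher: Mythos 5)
Your construction coincides with the paper's own proof: the same lexicographic order $<'$ (image index first, within-fiber edge direction second), the same split into a same-fiber and a cross-fiber case, and the same strategy of pushing the offending path forward along $h$ to a walk with node repetition. Your same-fiber case is complete, and in fact more careful than the paper's (which skips the sub-case $h(Y)=h(D^1)$ that you settle via (d) and acyclicity). The genuine gap is exactly the one you flag and leave open in the cross-fiber case: when $p$ traverses a decision $V$ with $h(V)=D^i$ but ${D'}^j\to V$, the image walk visits $D^i\in\Fa(D^i)$ as a non-collider, so the walk-to-path reduction (\cref{21jan21.2-excising-loops-from-active-walks-to-get-active-paths}) cannot be invoked. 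Your sketched repair---contract each fiber segment and eliminate the bad visits ``by rerouting''---does not work as stated: after contraction such a visit is a fork out of $D^i$ into two of its children, and nothing in properties (a)--(d) supplies an alternative active route joining those two children around $D^i$. (You have isolated a real subtlety here: the paper's own cross-fiber case simply asserts that every non-collider of the image walk lies outside the conditioning set, which is precisely what fails in this configuration; but your proposal, unlike a proof, stops where the argument is still owed.)

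What closes the gap is an impossibility argument, not a rerouting: a decision in the fiber of ${D'}^j$ can never occur as a non-collider on an active $p$ at all. Suppose it did, and take the maximal run of consecutive fiber decisions of $p$ containing it. No edge can point from a non-fiber node $W$ into this run: such a $W$ would be a non-collider of $p$, yet (b) gives $h(W)\in\Pa(D^i)$ and then (c) gives $W\in\Pa({D'}^j)\subseteq\Fa({D'}^j)$, blocking $p$. Hence both boundary edges of the run point outward; the boundary fiber decisions are then non-colliders of $p$, hence outside $\Fa({D'}^j)$, hence children of ${D'}^j$ by (d) and acyclicity; so the flanking non-fiber node $W_1$ on the ${\Pi'}^{j'}$ side is a strict descendant of ${D'}^j$. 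A strict descendant of ${D'}^j$ can never be an \emph{active} collider given $\Fa({D'}^j)$, since a descendant in $\Fa({D'}^j)$ would close a directed cycle through ${D'}^j$; so $W_1$ must pass the arrow onward, and inductively the whole stretch of $p$ from the run back to the start is a directed chain of strict descendants of ${D'}^j$ pointing toward ${\Pi'}^{j'}$. That chain reaches ${D'}^{j'}$, which then has both of its path edges incoming (the chain edge and ${\Pi'}^{j'}\to{D'}^{j'}$): a collider that is a strict descendant of ${D'}^j$, hence inactive---so $p$ was blocked after all, a contradiction. With fiber decisions excluded as non-colliders of $p$, your remaining bookkeeping does go through: a contracted visit that is a collider of the reduced walk contains a collider of $p$ inside its segment, whose image therefore has a descendant in $\Fa(D^i)$, while a contracted visit that is a non-collider has a boundary node that is a non-collider of $p$, whose image avoids $\Fa(D^i)$ by (c) together with the exclusion just proved; then \cref{21jan21.2-excising-loops-from-active-walks-to-get-active-paths} yields an active path from $\Pi^{i'}$ to $\sU(D^i)$ given $\Fa(D^i)$ with $i'<i$, contradicting solubility of $\sG$.
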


\ryan{Probably we should uniformise to either $\calG'/\calM'$ or $\bar \calG \bar \calM$ throughout this section}

Given a homomorphism $h$ from $\sG'$ to $\sG$, we can define a notion of 
equivalence between IDs (and policies) on each graph.
Roughly, two IDs are equivalent if the domain of every node 
is a cartesian product of the domains of the nodes in its pre-image (or the 
sum, in the case of a utility node).
Formally:

\begin{definition}[Equivalence]
$\calM_\pi$ on $\calG$ and $\calM'_{\pi'}$ on $\calG_{\pi'}$ are \emph{equivalent} if
each non-utility node $N$ in $\sG$ has $\dom(N):=\bigtimes_{N^i \in h^{-1}(N)} \dom(N^i)$,
and $P^\calM_\pi(N\!=\!(n^1,...,n^k))=P^{\calM'}_{\pi'}(N^1\!=\!n^1,...,N^k\!=\!n^k)$, 
and each utility node has $P^\calM_\pi(U\!=\!u)=P^{\calM'}_{\pi'}(\sum_{U^i\in h^{-1}(U)} U^i\! =\! u)$.
\end{definition}

\begin{restatable}[Equivalence]{lemma}{lecidhomequivalence} \label{le:cidhom1-equivalence}
If there is an ID homomorphism $h$ from $\sG '$ to $\sG$,
then for any policy $\pi'$ in any ID $\calM'$ on $\sG '$ 
there is a policy $\pi$ in a ID $\calM$ on $\sG$ 
such that $\calM_\pi$ and $\calM'_{\pi'}$ are equivalent. 
\tom{what does it mean for two probability distributions to be equivalent?}
\end{restatable}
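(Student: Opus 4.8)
The plan is to construct $\calM$ and the policy $\pi$ explicitly, by \emph{bundling} the CPDs and decision rules of $\calM'$ along the fibres of $h$. Write $h^{-1}(N)=\{N^1,\dots,N^k\}$ for the preimage of a node $N$ of $\sG$; this set may be empty or a singleton. For a non-utility $N$ I set $\dom(N):=\bigtimes_{N^i\in h^{-1}(N)}\dom(N^i)$ as the equivalence definition demands (the empty product being a one-point set, so a node with empty fibre becomes trivial), and for a utility node $U$ I let $\dom(U)$ consist of the attainable sums $\sum_{U^i\in h^{-1}(U)}u^i$, with the empty sum giving the constant $0$. I then attach to each $N$ the product of the corresponding $\calM'$-factors over its fibre: a chance node gets $P^\calM(N=(n^1,\dots,n^k)\mid\Pa(N)):=\prod_i f_{N^i}$, where $f_{N^i}$ denotes the CPD of $N^i$ in $\calM'$; a decision node gets the decision rule $\pi^N$ defined as the analogous product of the rules ${\pi'}^{N^i}$; and a utility node $U$ gets $P^\calM(U=u\mid\Pa(U))$ equal to the distribution of $\sum_i U^i$ induced by the $\calM'$-CPDs of its fibre (which are conditionally independent given $\Pa(U)$, since utility nodes lack children and so never parent one another).

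I must then check that each bundled object is a genuine CPD or decision rule on $\sG$, that is, a function of the value of $N$ and of $\Pa(N)$ alone which normalises to $1$. The key input is property (b): if $A\to N^i$ is an edge of $\sG'$ then either $h(A)\to N$ lies in $\sG$, whence $h(A)\in\Pa(N)$, or $h(A)=N$, whence $A$ is another fibre element $N^j$. So every parent of every $N^i$ maps into $\Pa(N)\cup\{N\}$, and its value is recovered either as a coordinate of some parent's value in $\sG$ or as another coordinate of $N$'s own value; hence the bundled factor depends on nothing outside $N$ and $\Pa(N)$. Property (a) makes each fibre type-homogeneous, and since $\sG'$ is acyclic each fibre induces a sub-DAG, so $N^1,\dots,N^k$ admit a topological order. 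Summing a bundled chance or decision factor over the value of $N$ from the topologically last coordinate inward, each $f_{N^i}$ (resp.\ ${\pi'}^{N^i}$) sums to $1$ over $n^i$ with all its parents already fixed, so the whole product normalises to $1$; the utility factor is the law of a sum and is valid automatically.

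It then remains to read off equivalence from the two factorisations. By definition of an ID, $P^\calM_\pi$ is the product over $\sV$ of the bundled factors, and since the fibres $\{h^{-1}(N)\}_{N\in\sV}$ partition $\sV'$, regrouping each bundle into its constituents reproduces exactly the factorisation of $P^{\calM'}_{\pi'}$ over $\sV'$. Because utility nodes are childless they never occur as parents, so marginalising them out leaves the non-utility factors untouched; this gives $P^\calM_\pi(N=(n^1,\dots,n^k))=P^{\calM'}_{\pi'}(N^1=n^1,\dots,N^k=n^k)$ for every non-utility $N$ (indeed jointly across all such $N$), the first equivalence condition. Conditioning on the non-utility nodes, each $U$ in $\calM$ is by construction distributed as $\sum_i U^i$ with the $U^i$ drawn from their $\calM'$-CPDs, which yields $P^\calM_\pi(U=u)=P^{\calM'}_{\pi'}(\sum_i U^i=u)$, the utility condition.

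I expect the main obstacle to be the bookkeeping in the validity step: establishing through property (b) that no parent of a fibre element escapes $\Pa(N)\cup\{N\}$, and that the resulting product both depends only on the permitted variables and normalises correctly via the topological order within each fibre, while treating the degenerate cases (empty fibres as trivial nodes, and the utility-summation convention) uniformly. Once this is in place the equivalence of the joint laws is a direct regrouping of the two factorisations, using only properties (a) and (b) of the homomorphism.
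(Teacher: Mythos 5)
Your proposal is correct and is essentially the paper's own proof: the paper likewise constructs a ``ported'' ID $\calM$ by giving each non-utility node the product domain over its fibre $h^{-1}(N)$ and attaching the joint (product) conditional distribution of the fibre CPDs and decision rules, with property (b) of \cref{def:CID-homomorphism} guaranteeing that these factor over $\sG$. The only difference lies in the final verification, where the paper proves the distributional equality by induction along the graph, whereas you regroup the two factorisations directly and marginalise out the childless utility nodes --- an equivalent route, and your treatment of well-definedness and normalisation (topological order within each fibre, conditional independence of the utility fibre elements) is if anything more explicit than the paper's.
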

\ryan{Changed from $\tilde M$ to $M'$ here. May need to make corresponding change to proof in appendix}

In this case, we will call $\calM$ and $\pi$ the \emph{ID and policy transported along the homomorphism $h$}. In the appendix, we show that this correspondence between policies on $\calM'$ and $\calM$ is a bijection.
\ryan{The one-sentence explanation above isn't very explanatory.}
\chris{I just removed it}
Intuitively, if there is an ID homomorphism $\sG'\to \sG$, this means we have a particular way to \textit{fit an ID on $\sG'$ into $\sG$}, while preserving the information that the decisions can access.
The basis of this proof is that properties (c,d) of ID homomorphisms (\cref{def:CID-homomorphism}) require decisions to have precisely the same information in $\cal M$ as in $\cal M'$.~\looseness=-1

\ryan{``interpreted as'' feels a bit too informal to me.}

For our proof of \cref{thm:voi}, we will require that VoI is preserved under homomorphism.

\begin{restatable}[Preserves VoI]{lemma} {lecidhompreservesmateriality} \label{th:CID-homomorphism-preserves-Materiality}
    Let $h\colon\!\sG'\!\!\to\! \sG$ be an ID homomorphism.
    If $X'$ has positive VoI for $D'$ in an ID $\calM'$ on $\sG'$, then $X\!=\!h(X)$ has positive VoI for $D\!=\!h(D')$ in the transported ID $\calM\!=\!h(\calM')$.~\looseness=-1
\end{restatable}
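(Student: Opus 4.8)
The plan is to reduce the statement about \cref{th:CID-homomorphism-preserves-Materiality} to the already-established \cref{le:cidhom1-equivalence} (Equivalence) by comparing the VoI expression on $\calG'$ with the VoI expression on $\calG$ term by term. Recall that the VoI of $X'$ for $D'$ in $\calM'$ is
\[
\max_{\spi'} \EE^{\calM'_{X'\to D'}}_{\spi'}[\totutilvar]
-
\max_{\spi'} \EE^{\calM'_{X'\not\to D'}}_{\spi'}[\totutilvar],
\]
and analogously for $X=h(X')$, $D=h(D')$ in the transported ID $\calM=h(\calM')$. So the goal is to show that the transported ID $\calM$ witnesses positive VoI whenever $\calM'$ does. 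The natural strategy is to argue that the homomorphism $h$ descends to homomorphisms between the \emph{edge-modified} graphs, and that under $\calM=h(\calM')$ the two maximised expected utilities are preserved.

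\textbf{Step 1: Relate the modified graphs.}
First I would check that adding or removing the edge $X'\to D'$ in $\calG'$ corresponds correctly to adding or removing $X\to D$ in $\calG$. Concretely, I would verify that $h$ is still an ID homomorphism from $\calG'_{X'\to D'}$ to $\calG_{X\to D}$, and from $\calG'_{X'\not\to D'}$ to $\calG_{X\not\to D}$. This uses properties (b)--(d) of \cref{def:CID-homomorphism}: property (c) (Covers all information links) is the crucial one, since it guarantees that the information links at a decision in $\calG$ are \emph{exactly} the images of those in $\calG'$, so that modifying the single edge $X'\to D'$ upstairs matches modifying exactly $X\to D$ downstairs. (One must handle the edge cases where $h(X')=h(D')$ or where $X$ already has or lacks the edge, but property (c) together with $X'\not\in\Desc_{D'}$ should rule out the degenerate possibilities.)

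\textbf{Step 2: Transport and compare expected utilities.}
Given that $h$ is a homomorphism on each of the two modified graphs, \cref{le:cidhom1-equivalence} gives, for each $\spi'$, an equivalent transported policy $\spi=h(\spi')$ whose total-utility distribution matches: since the equivalence identifies the distribution of each utility node's sum with the sum of its preimages, we get $\EE^{\calM}_{\spi}[\totutilvar]=\EE^{\calM'}_{\spi'}[\totutilvar]$ on each modified graph. Using that the policy correspondence is a bijection (stated just after \cref{le:cidhom1-equivalence}), the suprema over policies agree on each side:
\[
\max_{\spi}\EE^{\calM_{X\to D}}_{\spi}[\totutilvar]=\max_{\spi'}\EE^{\calM'_{X'\to D'}}_{\spi'}[\totutilvar],
\]
and likewise for the $\not\to$ version. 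Subtracting, the VoI of $X$ for $D$ in $\calM$ equals the VoI of $X'$ for $D'$ in $\calM'$, which is positive by hypothesis.

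\textbf{Main obstacle.}
I expect the subtle point to be \emph{Step 1}: making the edge modification commute with $h$ cleanly. The transported ID $\calM=h(\calM')$ is defined relative to $\calG$, but the VoI definition requires evaluating it on $\calG_{X\to D}$ and $\calG_{X\not\to D}$, so I must confirm that transporting $\calM'$ along $h$ and \emph{then} modifying the edge yields the same object as transporting the edge-modified $\calM'$ along the (restricted) homomorphism. The bijection between policies must also respect the edge modification --- in particular that a decision rule for $D$ with access to $X$ corresponds to one for $D'$ with access to $X'$, which again rests on property (c) forcing the parent sets to correspond under $h$. Once this commutativity is pinned down, the equality of maximised expected utilities and hence of VoI follows directly from \cref{le:cidhom1-equivalence}.
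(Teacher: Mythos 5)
There is a genuine gap, and it sits exactly where you suspected: Step~1 is false in general. Property (c) of \cref{def:CID-homomorphism} forces an all-or-nothing correspondence between the edge $X \to D$ downstairs and the \emph{entire set} of preimage edges upstairs. Write $\mathcal{X}=h^{-1}(X)$ and $\mathcal{D}=h^{-1}(D)$. For $h$ to be a homomorphism into $\mathcal{G}_{X\to D}$, property (c) requires \emph{every} pair $X^i\in\mathcal{X}$, $D^j\in\mathcal{D}$ to be linked upstairs; for $h$ to be a homomorphism into $\mathcal{G}_{X\not\to D}$, property (b) requires that \emph{no} such pair be linked. So whenever $\mathcal{X}$ or $\mathcal{D}$ is not a singleton, at least one of your two claimed restrictions of $h$ fails: adding or deleting the single edge $X'\to D'$ upstairs leaves the other preimage edges untouched, and the resulting graph admits no homomorphism to the corresponding modified graph downstairs. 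This is not a degenerate edge case that $X'\notin\Desc(D')$ rules out --- it is the typical situation in this paper, since the homomorphisms used in the completeness proof arise from node duplication (\cref{le:21may19.2-CID-hom-from-node-copying-and-deleting}), where $h$ is many-to-one. Relatedly, your Step~2 conclusion that the two VoIs are \emph{equal} is also too strong: e.g.\ if two decisions $D^1\to D^2$ both map to $D$, removing only $X'\to D^1$ upstairs still lets $D^2$ observe $X'$, whereas removing $X\to D$ downstairs blinds the whole merged decision, so the downstairs VoI can strictly exceed the upstairs one.

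The paper's proof repairs exactly this point. It modifies \emph{all} preimage edges at once, working with $\mathcal{G}'_{\mathcal{X}\to\mathcal{D}}$ and $\mathcal{G}'_{\mathcal{X}\not\to\mathcal{D}}$, for which $h$ genuinely is a homomorphism into $\mathcal{G}_{X\to D}$ and $\mathcal{G}_{X\not\to D}$ respectively; there your Step~2 machinery (\cref{le:cidhom1-equivalence} plus the policy bijection of \cref{le:bijection-between-deterministic-policies-on-split-graph}) applies and gives equality of maximal expected utilities. The single-edge modifications are then sandwiched by monotonicity of attainable utility in the set of information links:
\begin{align*}
\max_{\pi}\mathbb{E}^{\mathcal{M}_{X\to D}}_{\pi}[\mathcal{U}]
&= \max_{\pi'}\mathbb{E}^{\mathcal{M}'_{\mathcal{X}\to\mathcal{D}}}_{\pi'}[\mathcal{U}]
\;\geq\; \max_{\pi'}\mathbb{E}^{\mathcal{M}'_{X'\to D'}}_{\pi'}[\mathcal{U}]
\;>\; \max_{\pi'}\mathbb{E}^{\mathcal{M}'_{X'\not\to D'}}_{\pi'}[\mathcal{U}]
\;\geq\; \max_{\pi'}\mathbb{E}^{\mathcal{M}'_{\mathcal{X}\not\to\mathcal{D}}}_{\pi'}[\mathcal{U}]
= \max_{\pi}\mathbb{E}^{\mathcal{M}_{X\not\to D}}_{\pi}[\mathcal{U}],
\end{align*}
where $\mathcal{U}$ denotes total utility and the strict inequality is your hypothesis of positive VoI upstairs. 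The conclusion is an inequality (positivity of the downstairs VoI), not the equality your plan aims for; to fix your proof you would need to replace Step~1 with this all-preimage-edges construction and add the two monotonicity steps.
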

\ryan{I thought materiality is defined with respect to just a model, so I've changed this statement. I think the proof should remain similar, and use transported model + policy?}

The proof builds heavily on there being a precise correspondence between policies on $\cal M$ and on $\cal M'$. Since these two IDs are equivalent (\cref{le:cidhom1-equivalence}), if obtaining certain information in $\cal M'$ has value, so does obtaining that information in $\cal M$. The formal details are left to \cref{sec:CID-homomorphisms}.~\looseness=-1

We next present two transformation rules\chris{rudimentary calculus is kind of a weird phrasing} with which to modify any ID graph, which are illustrated in \cref{fig:homorphism}.
The first transformation obtains a new graph $\sG'$ by deleting or duplicating nodes, while preserving all links. Under this transformation, the function that maps a node in $\sG'$ to its `originating node' in $\sG$ is an ID homomorphism:
~\looseness=-1

\newcommand{\sCopies}{\mathrm{Copies}}
\begin{restatable}[Deletion \& Link-Preserving Copying]{lemma}{lecopyingcidhom} \label{le:21may19.2-CID-hom-from-node-copying-and-deleting}
Let $\sG\!\!=\!\!(\sV, E)$ be an ID graph and $\sG'\!=\!(\bigcup_{N\in\sV}\sCopies(N),E')$ an ID graph where $\sCopies$ maps nodes in $\sG$ to disjoint sets in $\sG'$, and where $E'$ is a minimal set of edges such that 
for any edge $A \to B$ in $E$ and $A^i\in \sCopies(A)$ and $B^i\in \sCopies(B)$ there is an edge $A^i\to B^i$,
and if  $A^i,A^j\in \sCopies(A)$ are non-utility nodes then 
either
$A^i\to A^j$ or $A^i\gets A^j$. Then the function $h$ that maps each $V\in \sCopies(N)$ to $N$ is an ID homomorphism.~\looseness=-1

\end{restatable}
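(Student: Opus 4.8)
The plan is to verify directly that $h$ satisfies each of the four conditions (a)--(d) of \cref{def:CID-homomorphism}. The work is a verification, and the one idea that makes it clean is to first extract, from the \emph{minimality} of $E'$, a precise classification of the edges of $\calG'$. Since $E'$ is a minimal edge set meeting the two stated requirements, every edge of $\calG'$ must be demanded by one of them, and these two families are disjoint: the first requirement only ever creates an edge between copies of two \emph{distinct} $E$-adjacent nodes $A \neq B$ (an edge $A \to B \in E$ forces $A \neq B$, as $\calG$ is acyclic), whereas the second creates edges only within a single set $\sCopies(A)$, and $\calG$ has no self-loop $A \to A$ to lift. Hence each edge $A' \to B'$ of $\calG'$ is either a \emph{lifted} edge $A^i \to B^i$ arising from some $A \to B \in E$ with $A^i \in \sCopies(A)$, $B^i \in \sCopies(B)$, or an \emph{intra-copy} edge between two members of one $\sCopies(A)$.

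With this in hand, each condition reads off quickly. Condition (a) holds by the convention that copying preserves node type, i.e.\ every node of $\sCopies(N)$ carries the same chance/decision/utility label as $N$ -- this is exactly why the second requirement is restricted to non-utility copies, since utility nodes may have no children. Condition (b) follows from the classification above: a lifted edge maps under $h$ to $A \to B \in E$, while an intra-copy edge has $h(A') = A = h(B')$, so its endpoints are identified. For (c), if $h(N') \to h(D')$ lies in $\calG$ for a decision node $D'$ of $\calG'$, then writing $A = h(N')$ and $D = h(D')$ we have $A \to D \in E$; since $N' \in \sCopies(A)$ and $D' \in \sCopies(D)$, the first requirement forces $N' \to D' \in E'$. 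Finally for (d), if $h(D^1) = h(D^2) =: A$ for distinct decision nodes $D^1, D^2$ of $\calG'$, then by (a) $A$ is a decision and $D^1, D^2$ are distinct non-utility members of $\sCopies(A)$, so the second requirement forces one of $D^1 \to D^2$ or $D^2 \to D^1$ into $E'$. (I would also note, as a consistency check that $\calG'$ can indeed be an ID graph as hypothesized, that lifting a topological order of $\calG$ and breaking ties within each $\sCopies(A)$ by the total order of the second requirement orients every edge forward, so an acyclic choice of $E'$ exists.)

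I expect the only genuine subtlety to be the bookkeeping in the first paragraph. Condition (b) is the one that could fail if $E'$ secretly contained an edge not attributable to either requirement, so the argument relies essentially on minimality to exclude stray edges, and on the acyclicity of $\calG$ both to guarantee that the first rule never produces a within-$\sCopies(A)$ edge and that the two requirement-families do not overlap. Once that dichotomy is nailed down, conditions (a), (c), and (d) are immediate transcriptions of the defining properties of $E'$ together with type-preservation.
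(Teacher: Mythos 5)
Your proposal is correct and follows essentially the same route as the paper's proof, which is a terse direct verification of conditions (a)--(d) ("(a) follows by definition, (b) follows from the definition of $E'$, (c,d) hold for all nodes by definition"). Your explicit minimality-based classification of edges into lifted and intra-copy edges is exactly the content the paper compresses into "(b) follows from the definition of $E'$," so you have simply made the same argument rigorous.
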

\ryan{I've simplified/shortened this a bit further. Feel free to revert any changes is preferred.}

Edges that are not information links can also be removed, while having a homomorphism back to the original:

\begin{restatable}[Link Pruning]{lemma}{lepruningcidhom} \label{le:21may19.2-CID-hom-from-edge-pruning}
Let $\sG=(\sV,E)$ and $\sG'=(\sV,E')$ be ID graphs, where $E'\subseteq E$ and where for each decision node $D$ in $\sV$, every incoming edge $N\to D$ in $E$ is in $E'$. Then the identity function $h(N)=N$ on $\sV$ is a homomorphism from $\sG'$ to $\sG$.
\end{restatable}

Finally, we can chain together a sequence of such graph transformation steps, and still maintain a homomorphism to the original. The justification for this is that a composition of ID homomorphisms is again an ID homomorphism:~\looseness=-1

\begin{restatable}[Composition]{lemma}{lecidhomcomposition} \label{le:20dec7.1-composition-of-CID-splits}
If $h\colon\sG' \to \sG$ and $h'\colon\sG'' \to \sG'$ are ID homomorphisms then the composition $h \circ h'\colon\sG'' \to \sG$ is an ID homomorphism.
\end{restatable}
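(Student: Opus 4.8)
The plan is to verify directly that the composite map $g := h \circ h'\colon \sV'' \to \sV$ satisfies each of the four defining conditions (a)--(d) of \cref{def:CID-homomorphism}. Conditions (a)--(c) follow by simply chaining the corresponding conditions for $h'$ and then $h$, while condition (d) requires a short case analysis and is the only step with any real subtlety.

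For \emph{preserves node types} (a), observe that $h'$ sends each node of $\sG''$ to a node of the same type in $\sG'$, and $h$ then preserves that type again, so $g$ preserves types. For \emph{preserves links} (b), take any edge $A \to B$ in $\sG''$. By condition (b) for $h'$, either $h'(A) = h'(B)$, in which case $g(A) = g(B)$ and we are done, or $h'(A) \to h'(B)$ is an edge in $\sG'$; applying condition (b) for $h$ to that edge then yields either $g(A) \to g(B)$ in $\sG$ or $g(A) = g(B)$, as required. For \emph{covers all information links} (c), suppose $g(N) \to g(D)$ lies in $\sG$ for a decision $D$ of $\sG''$. Since $h'$ preserves node types, $h'(D)$ is a decision of $\sG'$, so condition (c) for $h$ applied to the edge $h(h'(N)) \to h(h'(D))$ gives $h'(N) \to h'(D)$ in $\sG'$; condition (c) for $h'$ then pulls this back to $N \to D$ in $\sG''$.

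The main step is \emph{combines only linked decisions} (d), which I expect to be the crux. Assume $g(D_1) = g(D_2)$ for distinct decisions $D_1, D_2$ of $\sG''$; both $h'(D_1)$ and $h'(D_2)$ are decisions of $\sG'$ by (a). If $h'(D_1) = h'(D_2)$, then condition (d) for $h'$ directly yields an edge $D_1 \to D_2$ or $D_2 \to D_1$ in $\sG''$. Otherwise $h'(D_1)$ and $h'(D_2)$ are \emph{distinct} decisions with $h(h'(D_1)) = h(h'(D_2))$, so condition (d) for $h$ supplies an edge between them in $\sG'$, say without loss of generality $h'(D_1) \to h'(D_2)$. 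The key observation is that this is an edge \emph{into} the decision $h'(D_2)$, so condition (c) for $h'$ applies with $N = D_1$ and $D = D_2$ and pulls the edge back to $D_1 \to D_2$ in $\sG''$; symmetrically $h'(D_2) \to h'(D_1)$ would give $D_2 \to D_1$. In every case $\sG''$ contains an edge between $D_1$ and $D_2$, establishing (d).

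The recurring theme, and the one nonobvious ingredient, is that the ``covers all information links'' condition (c) lets us transport edges that terminate at decisions \emph{backwards} along a homomorphism. This backward transport is precisely what is needed in (d): after $h'$ has possibly separated the two preimage decisions, we recover them as an edge in $\sG'$ via $h$, and then pull that edge back down into $\sG''$ using (c) for $h'$. Everything else is routine bookkeeping, so the proof should be short.
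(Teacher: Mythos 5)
Your proof is correct and follows essentially the same route as the paper's: a condition-by-condition verification where (a)--(c) chain directly through $h'$ and then $h$, and (d) splits into the cases $h'(D_1)=h'(D_2)$ (apply (d) for $h'$) versus $h'(D_1)\neq h'(D_2)$ (apply (d) for $h$, then pull the resulting edge back with (c) for $h'$). Your write-up is in fact slightly more careful than the paper's at one point, explicitly noting that (a) for $h'$ is what licenses applying (c) for $h$ to the decision $h'(D)$.
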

\section{Completeness of the VoI Criterion} \label{sec:voi-completeness-main} %
We will now prove that the 
\emph{value of information} (VoI)
criterion 
of \citet{nilsson2000evaluating}
is complete for chance nodes
(details are deferred to \cref{sec:preliminaries-systems-and-trees,sec:model-definition}).

\subsection{Parameterising one system} \label{sec:tree} %
To prove 
that the criterion from \cref{thm:voi} is complete
we must show that for any graph where $X\to D$ is in the minimal d-reduction, 
$X$ has positive VoI for $D$.
For example, consider the graph in \cref{fig:trivial-completeness}, 
which is its own d-reduction, and contains $X \to D$.
In this graph, we can choose for $X$ to be Bernoulli distributed, 
for $D$ to have the boolean domain $\{0,1\}$, and for 
$U$ to be equal to $1$ if and only if $X$ and $D$ match.
Clearly, the policy $d=x$ will obtain $\mathbb{E}[U]=1$.
In contrast, 
if $X$ were not observed (no link $X\to D)$, then no policy could achieve expected utility more than $0.5$;
so the VoI of $X$ in this ID is $0.5$.~\looseness=-1

\begin{figure}[H]
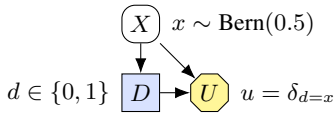

\centering
    \begin{influence-diagram}
    \setcompactsize
    \setcompactsize[node distance=0.1cm]
    \tikzset{
    every node/.append style = {node distance=5mm},
    }
        \node (Xs) [label={[label distance=1mm]0:\footnotesize$x \sim \text{Bern}(0.5)$}]{$X$};
        \node (Ds) [below = 0.4 of Xs, decision,label={[label distance=1mm]180:{\footnotesize $d \in \{0,1\}$}}]{$D$};
        \node (U) [right=0.4 of Ds,utility,label={[label distance=1mm]0:{\footnotesize $u=\delta_{d=x}$}}]{$U$};
        \edge {Xs}{Ds};
        \edge {Ds}{U};
        \edge {Xs}{U};

    \end{influence-diagram}
        \caption{
        The observation $X$ has positive VoI for $D$.
  }
  \label{fig:trivial-completeness}
\end{figure}

A general procedure for parameterising any single-decision ID graph meeting the \cref{thm:voi} criterion to exhibit positive VoI has been established by \citet{Everitt2021agent} and \citet{lee2020characterizing}.
This procedure consists of two steps:
first, establish the existence of some paths, 
then choose CPDs for the nodes on those paths.
We call the paths found in the first step a \textit{system}, which will be a building block for our analysis of IDs with multiple decisions.
A fully-general illustration of a system is shown in \cref{fig:regions}.~\looseness=-1
\ryan{Maybe move some of these definitions down to wherever they're used.}

\begin{restatable}[System]{definition}{defsystem} \label{def:system-minimal}
A \emph{system} $s$ in an ID graph $\calG$ %
is a tuple $(\scontrol^s, \sinfo^s, \sobspaths^s)$ where: 
\begin{itemize}
    \item 
    The \emph{control path}, $\scontrol^s$, is a directed path $D^s \pathto U^s$ where $D^s \in \bm{D}$ and $U^s\in \bm{U}$,
    \item The \emph{info path}, $\emph{$\sinfo^s$}$, is a path $\Pa(D^s)\ni X^s\upathto U^s$, active given $\Fa(D^s) \setminus \{X^s\}$,
    \item $\sobspaths^s$ maps each collider $C^i$ in $\sinfo^s$ to
    an \emph{obs path}, a \emph{minimal-length} directed path $C^i \!\!\dashrightarrow\! D^s$.~\looseness=-1
\end{itemize}

\ryan{Tweaked the formatting of this definition, old version below}

We denote the \emph{information link of} $s$, $X^s \!\to\! D^s$, by
$\sinfolink^s$ and the union of nodes in \emph{all} paths of $s$ 
by $\sV^s$.
\end{restatable}

\ryan{TODO: set nodes to be circular, except for labels, which are rectangular}
\begin{figure}[H]
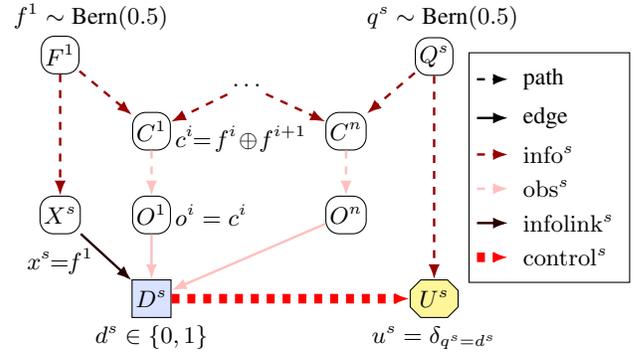

  \centering
  \begin{influence-diagram}
    \setcompactsize[node distance=0.1cm]
    \tikzset{
    path/.append style = {black,dashed, line width=0.9pt},
    link/.append style = {black,line width=0.9pt},
    infopath/.append style = {dashed,black!40!red,line width=0.9pt},
    obspath/.append style = {dashed,pink,line width=0.9pt},
    controlpath/.append style = {dashed,red,line width=3.6pt},
    infolink/.append style = {black!85!red,line width=0.9pt},
    every node/.append style = {node distance=7mm},
    }
        \node (b0) [label={[label distance=0mm,xshift=-7mm]85:\footnotesize$f^1 \sim \text{Bern}(0.5)$}] {$F^1$};
        \node (C1) [below right = 0.5 and 0.7 of b0,label={[label distance=0mm,yshift=-1mm]0:\footnotesize$c^i\!\!=\!f^i \!\xor\! f^{i+1}$}] {$C^{1}$};
        \node (dots1) [above right = 0.1 and 0.75  of C1, draw=none] {$\hdots$};
        \node (Cn) [below right = 0.1 and 0.79 of dots1,%
        ] {$C^{n}$};
        \node (Qs) [above right = 0.48 and 0.64 of Cn,label={[label distance=0mm,xshift=12mm]95:\footnotesize$q^s \sim \text{Bern}(0.5)$}] {$Q^s$};
        \node (O1) [below = 0.6 of C1,label={[label distance=0mm]0:\footnotesize$o^i=c^i$}]{$O^1$};
        \node (On) [below = 0.6 of Cn,%
        ]{$O^n$};
        \node (Xs) at (b0|-O1) [label={[label distance=1mm,xshift=0mm]-90:\footnotesize$x^s\!\!=\!\!f^1$}]{$X^s$};
        \node (Ds) [below = 0.6 of O1, decision,
        label={[label distance=0mm]-90:\footnotesize$d^s\in \{0,1\}$}
        ]{$D^s$};
        \node (Us) at (Qs|-Ds) [utility,label={[label distance=0mm]-90:\footnotesize$u^s=\delta_{q^s=d^s}$}]{$U^s$};
        
        \edge [infopath]{b0}{C1};
        \edge [infopath]{dots1}{C1};
        \edge [infopath]{dots1}{Cn};
        \edge [infopath]{Qs}{Cn};
        
        \edge [infopath]{b0}{Xs};
        \edge [obspath]{C1}{O1};
        \edge [obspath]{Cn}{On};
        
        \edge [infolink]{Xs}{Ds};
        \edge [obspath,solid]{O1}{Ds};
        \edge [obspath,solid]{On}{Ds};
        
        \edge [controlpath]{Ds}{Us};
        \edge [infopath]{Qs}{Us};
        
        \cidlegend[below right = 0.2 and 0.2 of Qs.north east, anchor=north west]{
        \legendrow[path] {draw=none} {path} \\ 
        \legendrow[link] {draw=none} {edge} \\ 
        \legendrow[infopath] {draw=none} {$\sinfo^s$} \\ 
        \legendrow[obspath] {draw=none} {$\sobspaths^s$} \\
        \legendrow[infolink] {draw=none} {$\sinfolink^s$} \\ 
        \legendrow[controlpath] {draw=none} {$\scontrol^s$} \\
        }
        \edge[path]{path.west}{path.east}
        \edge[link]{link.west}{link.east}
        \edge[obspath]{obspath.west}{obspath.east}
        \edge[infopath]{infopath.west}{infopath.east}
        \edge[controlpath]{controlpath.west}{controlpath.east}
        \edge[infolink]{infolink.west}{infolink.east}
  \end{influence-diagram}
\caption{
A system, annotated with a parameterization that has positive VoI in the single-decision case.
Dashed arrows can zero or more nodes.
}
  \label{fig:regions}
\end{figure}
\ryan{Probably should try to make the text larger in this figure if spare space}

The existence of these paths follow from the graphical criterion of \cref{thm:voi}.
In particular, since $X \!\to\! D$ is in the minimal d-reduction of $\calG$, 
there must exist a path
from $X$ to some utility node $U \in \sU \cap \Desc^{D^s}\!$, active given $\Fa(D^s) \!\setminus\! \{X^s\}$ (the ``info path" in \cref{def:system-minimal}).

The second step is to choose CPDs for the nodes $\sV^s$ in the system $s$, as also illustrated in \cref{fig:regions}.
The idea is to require
the decision $D^s$ to match the value of $Q^s$, by letting
the utility $U^s$ equal $1$ if and only if its parents along the control 
and information paths are equal.
If $X^s$ is observed, the decision $D^s=X^s \oplus O^1 ... \oplus O^n=Q^s$
yields $\mathbb{E}[U^s]=1$, where $\oplus$ denotes \emph{exclusive or} (XOR).
Otherwise, the observations $O^1,...,O^n$ are insufficient to decrypt $Q^s$, 
giving $\mathbb{E}[U^s]<1$.
So $X^s$ has positive VoI.
The intuitive idea is that $U^s$ tests whether $D^s$ knows $Q^s$, 
based on the value $d^s$ transmitted along $\scontrol^s$.~\looseness=-1
\chris{Above is a bit hard to follow I think.}

\subsection{Parameterising two systems} 
\begin{figure}
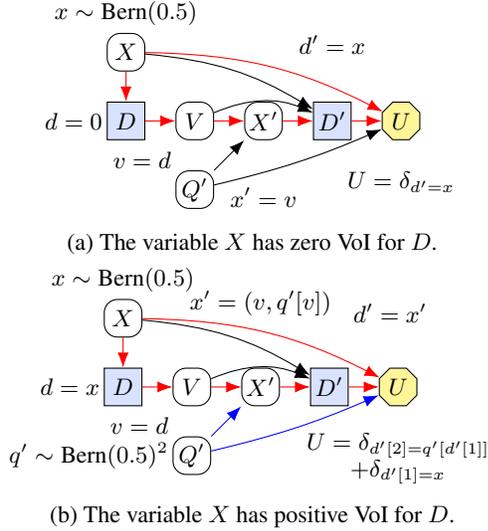

\vspace{-5mm}
\centering
\begin{subfigure}[t]{0.47\textwidth}
  \centering \setcompactsize
  \begin{influence-diagram}
    \end{influence-diagram}
    
    \vspace*{3mm}
    \begin{influence-diagram}
    \setcompactsize
    \setcompactsize[node distance=0.1cm]
    \tikzset{
    every node/.append style = {node distance=5mm},
    }
        \node (Xs) [ label={[label distance=0mm]90:\footnotesize$x \sim \text{Bern}(0.5)$}
        ]{$X$};
        \node (Ds) [below = 0.4 of Xs, decision,label=left:\footnotesize{$d=0$}
        ]{$D$};
        \node (V) [right = 0.4 of Ds,label=below left:\footnotesize{$v=d$}
        ]{$V$};
        \node (Qsp) [below = 0.4 of V]{$Q'$};
        \node (Xsp) [right = 0.4 of V,label={[label distance=5mm]270:\footnotesize{$x'=v$}}
        ]{$X'$};
        \node (Dsp) [right = 0.4 of Xsp,decision,label={[label distance=5mm]90:\footnotesize{$d'=x$}}
        ]{$D'$};
        \node (U) [right=0.4 of Dsp,utility,label={[label distance=3mm]270:{\footnotesize $U=\delta_{d'=x}$}}
        ]{$U$};
        label={below:[align=left]\footnotesize{}}
        \edge [red]{Xs}{Ds};
        \path (Xs) edge[->, bend left=13] (Dsp);
        \path (Xs) edge[->, red, bend left=13] (U);
        \edge [red]{Ds}{V};
        \edge [red]{V}{Xsp};
        \path (V) edge[->, bend left=23] (Dsp);
        \edge [red]{Xsp}{Dsp};
        \edge [red]{Dsp}{U};
        \edge []{Qsp}{Xsp};
        \path (Qsp) edge[->, bend right=5] (U);

    \end{influence-diagram}
    \caption{
    The variable $X$ has zero VoI for $D$.}\label{fig:multi-dec-param-1}
\end{subfigure}

\begin{subfigure}[t]{.47\textwidth}
\centering
    \begin{influence-diagram}
    \setcompactsize
    \setcompactsize[node distance=0.1cm]
    \tikzset{
    every node/.append style = {node distance=5mm},
    }
        \node (Xs) [label={[label distance=0mm]90:\footnotesize$x \sim \text{Bern}(0.5)$}]{$X$};
        \node (Ds) [below = 0.4 of Xs, decision,label=left:\footnotesize{$d=x$}]{$D$};
        \node (V) [right = 0.4 of Ds,label=below left:\footnotesize{$v=d$}]{$V$};
        \node (Qsp) [below = 0.4 of V,label=left:\footnotesize{$q'\sim \text{Bern}(0.5)^2$}]{$Q'$};
        \node (Xsp) [right = 0.4 of V,label={[label distance=6mm]90:\footnotesize{$x'=(v,q'[v])$}}]{$X'$};
        \node (Dsp) [right = 0.4 of Xsp,decision,label={[label distance=5mm]70:\footnotesize{$d'=x'$}}]{$D'$};
        \node (U) [right=0.4 of Dsp,utility,label={[label distance=3mm]270:{\footnotesize $U=\delta_{d'[2]=q'[d'[1]]}$\\$+\delta_{d'[1]=x}$}}]{$U$};
        \edge [red]{Xs}{Ds};
        \path (Xs) edge[->, bend left=13] (Dsp);
        \path (Xs) edge[->, red, bend left=13] (U);
        \edge [red]{Ds}{V};
        \edge [red]{V}{Xsp};
        \path (V) edge[->, bend left=23] (Dsp);
        \edge [red]{Xsp}{Dsp};
        \edge [red]{Dsp}{U};
        \edge [blue]{Qsp}{Xsp};
        \path (Qsp) edge[->, blue, bend right=5] (U);
    \end{influence-diagram}
        \caption{
        The variable $X$ has positive VoI for $D$.
  }
  \label{fig:multi-dec-param-2}
\end{subfigure}
\caption{In (a), a parameterisation of nodes in a single (red) system fails to exhibit that $X$ has positive VoI for $D$, 
whereas in (b), positive VoI is exhibited by parameterising two (red and blue) systems.}

\end{figure}

When we have two decisions, however, it becomes insufficient to parameterise just one system.
For example, suppose that we try to apply 
the same scheme as in the previous subsection
to the graph of \cref{fig:multi-dec-param-1}.
Then, we would generate a random bit at $X$ and stipulate that the utility is $U=1$
if the parents $X$ and $D'$ on the red paths are equal.
One might hope that this would give $D$ an incentive to observe $X$, so that $d=x$ is copied through $D'$ to obtain $\mathbb{E}[U]=1$.
And that is indeed one way to obtain optimal expected utility.
However, the presence of a second decision $D'$ means that maximal utility of $U=1$ may also be obtained using the policy $d=0,d'=x$, 
which does not require $X$ to be observed by $D$.~\looseness=-1

To achieve positive VoI, it is necessary to parameterise two systems as shown in \cref{fig:multi-dec-param-2}.
We first parameterise the second (blue) system 
to ensure that $x'$ is transmitted to $U$, 
and then parameterise the initial (red) system. 

To check that $X$ has positive VoI for $D$, we now solve the combined model.
Due to the solubility assumption, we know that the optimal 
decision rule at $D'$ does not depend on the decision rule taken at $D$.
So let us consider $D'$ first.
$D'$ chooses a pair $(i, j)$ where $i$ is interpreted as an index of the bits generated at $Q'$, and $j$ is interpreted as a claim about the $i\textsuperscript{th}$ bit of $Q'$.
The first term of the utility $U$ is equal to $1$ if and only if the ``claim" made by $D'$ is correct, i.e.\ if the $i\textsuperscript{th}$ bit generated by $Q'$ really is $j$.
$X'$ contains (only) the $v\textsuperscript{th}$ digit of $Q'$.
Hence $D'$ can only ensure its ``claim" is correct if it chooses $d'=x'=(v,q'[v])$, 
where $q'[v]$ denotes the $v\textsuperscript{th}$ bit of $q'$.
Having figured out the optimal policy for $D'$, we next turn our attention to $D$.
Intuitively, the task of $D$ is to match $X$, as in \cref{fig:trivial-completeness}.
The parameterization encodes this task, by letting $D$ determine $V$, which in turn influences which bit of $Q'$ is revealed to $D'$.
This allows $U$ to check the output of $D$ via the index outputted by $D'$, and thereby check whether $D$ matched $X$.
This means the second term of $U$ is 1 if and only if $D=X$
so $d=x$ the optimal policy for $D$, with expected utility $\mathbb{E}[U]=2$.\looseness=-1

In contrast, if $X$ were unobserved by $D$, then it would no-longer be possible to achieve a perfect score on both terms of $U$, so $\mathbb{E}[U]<2$.
This shows that $X$ has positive VoI for $D$.\looseness=-1

\subsection{A tree of systems} \label{sec:tree-main}
\todo{add explanation here}

In order to generalise this approach to arbitrary number of decisions, 
we 
need a structure that specifies a system for each decision, and indicates 
what downstream decisions that system may depend on.
These relationships may be represented by a tree.~\looseness=-1

\begin{restatable}[Tree of systems]{definition}{deftree} \label{def:tree-of-systems}
A \emph{tree of systems} on an ID graph $\calG$ is a tuple $T=(\calS,\spred)$ where:
\begin{itemize}
    \item $\calS = (s^0,...,s^k)$ is a list of systems (which may include duplicates).
    
    \item $\spred$ maps each $s^i$ to a pair $(s^j,p)$, 
    where $s^j\in (\calS\setminus\{s^i\})$ is a system, %
    $p$ is one of the paths of $s^j$ (info, control, or obs), and
    $\sinfolink^{s^i}$ is in the path $p$,
    except there is a unique ``root system'' $\ssRoot$ that is mapped to $(\ssRoot,``\snone")$.
\end{itemize}
Moreover, a \emph{full tree of systems} is one where for each information link $X' \to D'$ in each path $p$ in each system $s$, there is precisely one system $s'$ whose information link equals $X' \to D'$ and with $\pred{s'}=(s,p)$.
\end{restatable}

The idea of a tree of systems is that 
if a decision $D^{s'}$ lies on a path in the system $s$ 
of some decision $D^s$, then $s$ is a predecessor of $s'$.
We will use this tree to parameterise the ID graph,
and then we will also use it to supply an ordering over the decisions 
(from leaf to root) in which the model can be solved by backward induction.
\ryan{Added some explanation here.}

\begin{figure*}
  \centering
  \begin{subfigure}[t]{0.22\textwidth}
    \centering
\begin{influence-diagram}
\setcompactsize[node distance=0.5cm]
\node (X) [] {$X^s$};
\node (D) [below = of X,decision] {$D^s$};
\node (Y) [below = of D] {$Y$};
\node (Xp) [right = 4mm of Y] {$X'$};
\node (Dp) [right = 4mm of Xp,decision] {$D'$};
\node (U) [right = of Dp,utility] {$U$};
\node (Qp) at (U|-X) [] {$Q'$};
\node (h1) [draw=none,left = 1.5mm of Y, inner sep=0mm, minimum size=0mm] {};
\node (h2) [draw=none,below = 1.5mm of Y, inner sep=0mm, minimum size=0mm] {};
\path (X) edge[out=-120,in=95,red] (h1)
(h1) edge[out =-85, in=175,red] (h2)
(h2) edge[->, out =-5, in=-155,red] (U)
;

\edge {X}{Dp}
\edge {D}{Dp}
\path (Qp) edge[->, bend right=12,blue] (Y);

\edge[red] {X}{D};
\edge[red] {D}{Y};
\draw[->, red] ([yshift=-0.5mm]Xp.east) -- ([yshift=-0.5mm]Dp.west);
\draw[->, red] ([yshift=-0.5mm]Y.east) -- ([yshift=-0.5mm]Xp.west);
\draw[->, red] ([yshift=-0.5mm]Dp.east) -- ([yshift=-0.5mm]U.west);

\draw[->, blue] ([yshift=0.5mm]Xp.east) -- ([yshift=0.5mm]Dp.west);
\draw[->, blue] ([yshift=0.5mm]Y.east) -- ([yshift=0.5mm]Xp.west);
\draw[->, blue] ([yshift=0.5mm]Dp.east) -- ([yshift=0.5mm]U.west);
\edge[blue] {Qp}{U};

\end{influence-diagram}
  \caption{$Y$ and $U$ occur in both
  $s$ (red) and in $s'$ (blue)} \label{fig:transform-1}
\end{subfigure}\hspace{5mm}%
\begin{subfigure}[t]{0.22\textwidth}
  \centering
\begin{influence-diagram}
\setcompactsize[node distance=0.5cm]
\node (X) [] {$X^s$};
\node (D) [below = of X,decision] {$D^s$};
\node (Y) [below = of D] {$Y$};
\node (Xp) [right = 4mm of Y] {$X'$};
\node (Dp) [right = 4mm of Xp,decision] {$D'$};
\node (U) [right = of Dp,utility] {$U$};
\node (Yp) at (Dp|-X) {$Y'$};
\node (Up) at (U|-D) [utility] {$U'$};
\node (Qp) at (U|-X) [] {$Q'$};
\node (h1) [draw=none,left = 1.5mm of Y, inner sep=0mm, minimum size=0mm] {};
\node (h2) [draw=none,below = 1.5mm of Y, inner sep=0mm, minimum size=0mm] {};
\path (X) edge[out=-120,in=95,red] (h1)
(h1) edge[out =-85, in=175,red] (h2)
(h2) edge[->, out =-5, in=-155,red] (U)
;

\edge {X}{Dp}
\edge {D}{Yp}
\edge {D}{Dp}

\path (Qp) edge[->, bend right=12] (Y);

\edge {X}{Up}
\path (Qp) edge[->,out=-135,in=135] (U);

\edge[red] {X}{D};
\edge[red] {D}{Y};
\edge[red] {Y}{Xp};
\draw[->, red] ([yshift=-0.5mm]Xp.east) -- ([yshift=-0.5mm]Dp.west);
\edge[red] {Dp}{U};

\draw[->, blue] ([yshift=0.5mm]Xp.east) -- 
([yshift=0.5mm]Dp.west);

\edge[blue] {Yp}{Xp};
\edge[blue] {Qp}{Yp};
\edge[blue] {Qp}{Up};
\edge[blue] {Dp}{Up};
  
\end{influence-diagram}
\caption{Copying $Y$ and $U$ ensures \systemsAndPathsUniquenessForDef  %
}\label{fig:transform-2}
\end{subfigure}\hspace{5mm}%
\begin{subfigure}[t]{0.22\textwidth}
\centering
\begin{influence-diagram}
  \setcompactsize[node distance=0.5cm]
  \node (X) [] {$X^s$};
  \node (D) [below = of X,decision] {$D^s$};
  \node (Y) [below = of D] {$Y$};
  \node (Xp) [right = 4mm of Y] {$X'$};
  \node (Dp) [right = 4mm of Xp,decision] {$D'$};
  \node (U) [right = of Dp,utility] {$U$};
  \node (O)  at (Dp|-D) {$O$};
  \node (Yp) at (Dp|-X) {$Y'$};
  \node (Up) at (U|-D) [utility] {$U'$};
  \node (Qp) at (U|-X) [] {$Q'$};
  \node (h1) [draw=none,left = 1.5mm of Y, inner sep=0mm, minimum size=0mm] {};
  \node (h2) [draw=none,below = 1.5mm of Y, inner sep=0mm, minimum size=0mm] {};
\path (X) edge[out=-120,in=95,red] (h1)
(h1) edge[out =-85, in=175,red] (h2)
(h2) edge[->, out =-5, in=-155,red] (U)
;

\edge {X}{Up}
\edge {Yp}{Xp}
\path (Qp) edge[->,out=-135,in=135] (U);

\edge{X}{Dp}
\edge{D}{Yp}
\edge{D}{Dp}
\path (Qp) edge[->, bend right=12] (Y);
\edge {O}{Dp}
\edge{Y}{O}

\edge[red] {X}{D};
\edge[red] {D}{Y};
\edge[red] {Y}{Xp};
\draw[->, red] ([yshift=-0.5mm]Xp.east) -- ([yshift=-0.5mm]Dp.west);
\edge[red] {Dp}{U};

\draw[->, blue] ([yshift=0.5mm]Xp.east) -- ([yshift=0.5mm]Dp.west);
\edge[blue] {Xp}{O};
\edge[blue] {Yp}{O};
\edge[blue] {Qp}{Yp};
\edge[blue] {Qp}{Up};
\edge[blue] {Dp}{Up};
  
\end{influence-diagram}
\caption{Making a copy $O$ of $X'$, %
ensures no-backdoor-infopaths.~\looseness=-1
}\label{fig:transform-3}
\end{subfigure}\hspace{5mm}%
\begin{subfigure}[t]{0.22\textwidth}
\centering
\begin{influence-diagram}
  \setcompactsize[node distance=0.5cm]
  \node (X) [] {$X^s$};
  \node (D) [below = of X,decision] {$D^s$};
  \node (Y) [below = of D] {$Y$};
  \node (Xp) [right = 4mm of Y] {$X'$};
  \node (Dp) [right = 4mm of Xp,decision] {$D'$};
  \node (U) [right = of Dp,utility] {$U$};
  \node (O)  at (Dp|-D) {$O$};
  \node (Yp) at (Dp|-X) {$Y'$};
  \node (Up) at (U|-D) [utility] {$U'$};
  \node (Qp) at (U|-X) [] {$Q'$};
  \node (h1) [draw=none,left = 1.5mm of Y, inner sep=0mm, minimum size=0mm] {};
  \node (h2) [draw=none,below = 1.5mm of Y, inner sep=0mm, minimum size=0mm] {};
\path (X) edge[out=-120,in=95,red] (h1)
(h1) edge[out =-85, in=175,red] (h2)
(h2) edge[->, out =-5, in=-155,red] (U)
;

\edge{X}{Dp}
\edge{D}{Dp}
\edge {O}{Dp}

\edge[red] {X}{D};
\edge[red] {D}{Y};
\edge[red] {Y}{Xp};
\draw[->, red] ([yshift=-0.5mm]Xp.east) -- ([yshift=-0.5mm]Dp.west);
\edge[red] {Dp}{U};

\draw[->, blue] ([yshift=0.5mm]Xp.east) -- ([yshift=0.5mm]Dp.west);
\edge[blue] {Xp}{O};
\edge[blue] {Yp}{O};
\edge[blue] {Qp}{Yp};
\edge[blue] {Qp}{Up};
\edge[blue] {Dp}{Up};
  
\end{influence-diagram}
\caption{Finally, links are removed, ensuring \noRedundantLinks~\looseness=-1
}\label{fig:transform-4}
\end{subfigure}

\caption{\chris{Obtaining a normal form tree via homomorphic graph transformations:}
An ID graph (a)
is homomorphically transformed via graphs (b) and (c)
into a graph (d) whose tree is in normal form.
  }\label{fig:transforms} 
\end{figure*}
In order to generalise the approach taken to parameterising two systems, we need to reason
about the systems independently, in reverse order.
If the systems overlap, however, this 
makes it harder
to reason about them independently.
Thus it is useful to define a notion of systems called \emph{normal form} that are well-behaved.~\looseness=-1

\begin{restatable}[Normal form tree]{definition}{defnormalform} %
\label{def:sep5.3-normal-form-graph-with-tree}
A tree $T$ on $\sG$ is 
in
\emph{normal form} if all of the following hold:
\begin{enumerate}[label=(\alph*)]

    \item \label{def:sep5.3c-unique-systems-and-paths}
    (\systemsAndPathsUniquenessForDef) 
    A node $N$ in $T$ can only be in multiple paths $p^1,..., p^k$ of systems in the tree, if splitting $N$ into $\{N,N'\}$ via \cref{le:21may19.2-CID-hom-from-node-copying-and-deleting} and obtaining $T'$ from $T$ by replacing $N$ with $N'$ in one of those paths would make $T'$ no longer a tree of systems.

    \item (no-backdoor-infopaths) \label{def:sep5.3b-appropriate-tree}
    Every system $s$ in $T$ has an info path that starts with an outgoing link from $X^s$. 

    \item (\noRedundantLinks) \label{def:sep5.3e-no-redundant-links}
    If $N \to N'$ is an edge to a non-decision $N'$, where 
    one of $N$ and $N'$ is in a path in a system of $T$, 
    not including the nodes of the root information link, 
    then $N \to N'$ is in a path of a system of $T$.~\looseness=-1

\end{enumerate}
\end{restatable}
\Ryan{We should define front-door or just say ``starting with a tail''}

An arbitrarily chosen tree will not generally be in normal form. 
For example, \cref{fig:transform-1}
contains two systems (a red root system for $X^s \to D^s$
and a blue child system for $X' \to D'$) that constitute a tree, 
but this tree fails all three requirements for being in normal form.
However, by a series of homomorphic transformations,
it is possible to obtain a new graph with a tree of systems that is in normal form (as in \cref{fig:transform-4}).~\looseness=-1

\begin{restatable}[Normal Form Existence]{lemma}{lenormalformtransform}
\label{le:20nov29.1-Existence-of-adequate-CID-split-from-nothing}
Let $\sG$ be a soluble ID graph whose minimal $d$-reduction $\sG^*$ contains $X\to D$. 
Then there is a 
normal form tree $T'$
on a soluble ID graph $\sG'$,
with a homomorphism $h$ from $\calG'$ to $\calG$
where the information link $X' \to D'$ of the root system of $T'$,
has $h(X')=X$, $h(D')=D$,
and every node in $\sG$ is also in $\sG'$ 
but the only nodes in $\sG$ that are in $T'$ are $X$ and $D$.~\looseness=-1
\end{restatable}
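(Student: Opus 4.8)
The plan is to split the construction into two phases: first build a \emph{full} tree of systems (\cref{def:tree-of-systems}) using the requisiteness and solubility hypotheses, and then massage it into normal form (\cref{def:sep5.3-normal-form-graph-with-tree}) by a chain of homomorphic graph transformations, exactly as illustrated in \cref{fig:transforms}. The root system $\ssRoot$ comes directly from the hypothesis: since $X\to D$ lies in the minimal $d$-reduction, $X$ is requisite for $D$, so $X\not\dsep\sU(D)\mid\Fa(D)\setminus\{X\}$, giving an active info path $X\upathto U$ to some $U\in\sU\cap\Desc(D)$; a control path $D\pathto U$ exists because $U$ is a descendant of $D$; and each collider on the info path admits a minimal-length directed obs path to $D$. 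Keeping the root link as the original $X\to D$ secures the required images $h(X')=X$, $h(D')=D$ and ensures $X,D$ are the only $\sG$-nodes placed in $T'$.

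For Phase~I I would build the full tree recursively. Whenever an information link $A\to B$ (with $B\in\sD$) appears on one of the paths of an already-placed system $s$, the definition of a full tree demands a unique child system $s'$ with $\sinfolink^{s'}=A\to B$ and $\pred{s'}=(s,p)$. Such a child exists because, in the minimal $d$-reduction, every remaining information link is requisite, so $A$ is requisite for $B$ and a full system for $B$ can be extracted exactly as for the root. The delicate point is that this recursion must terminate, yielding a finite list $\calS$. I would prove this using solubility, which supplies an ordering of the decisions that bounds the regress so that the decisions spawning new child systems cannot cycle and only finitely many systems are created.

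For Phase~II I would discharge the three normal-form conditions in turn, each by one application of \cref{le:21may19.2-CID-hom-from-node-copying-and-deleting} (link-preserving copying) or \cref{le:21may19.2-CID-hom-from-edge-pruning} (link pruning), then glue the steps with \cref{le:20dec7.1-composition-of-CID-splits}. The uniqueness condition (a) is achieved by duplicating every node shared between paths unless the tree structure forces the sharing, as in the passage from \cref{fig:transform-1} to \cref{fig:transform-2} where $Y$ and $U$ are split. The no-backdoor condition (b) is achieved by introducing, for each system, a fresh copy of $X^s$ that carries the backdoor portion of the info path forward as a new observation of $D^s$, so that every info path begins with an outgoing edge, as in the $O$-gadget of \cref{fig:transform-3}. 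The no-redundant-links condition (c) is then achieved by pruning every edge into a non-decision that touches a path but does not lie on one, which \cref{le:21may19.2-CID-hom-from-edge-pruning} permits since no information link is removed, giving \cref{fig:transform-4}. Because copying retains each original node and we never delete $\sG$-nodes, all of $\sG$ survives in $\sG'$, and routing every non-root path through fresh copies keeps $X,D$ as the sole $\sG$-nodes in $T'$; solubility of $\sG'$ then follows from \cref{20nov25.1-CID-homomorphism-preserves-sufficient-recall-SR} applied to the composed homomorphism.

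I expect the main obstacle to be Phase~I: simultaneously guaranteeing that every information link on a path has a requisite child system \emph{and} that the entire recursion is finite. Since the control, info, and obs paths of a single system can introduce decisions both upstream (via obs paths) and downstream (via control paths) of $D^s$, a naive depth or index bound will not suffice, and the termination argument must be aligned carefully with the solubility order. A secondary difficulty, which I would handle during Phase~II, is reconciling the fact that the systems are extracted from the requisite structure of the $d$-reduction with the homomorphism's requirement (\cref{def:CID-homomorphism}(c)) that $\sG'$ cover \emph{all} information links of the un-reduced target $\sG$; this is why $\sG'$ must retain copies of $\sG$'s nonrequisite information links even though no system uses them, and why the cleanup of redundant links in condition (c) is essential for the extracted info paths to remain active in the final graph.
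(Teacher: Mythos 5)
Your proposal takes essentially the same route as the paper: your Phase~I is the paper's full-tree existence lemma (\cref{le:m3.1-existence-of-complete-tree-of-systems}, built by recursing on requisite links of the minimal $d$-reduction), and your Phase~II is exactly the paper's three homomorphic transformations (position-splitting via node copying, the $X^s$-copy gadget to make info paths front-door, and pruning of redundant links), glued together by \cref{le:20dec7.1-composition-of-CID-splits} and \cref{20nov25.1-CID-homomorphism-preserves-sufficient-recall-SR}. Your Phase~I termination worry is resolved just as you anticipate: solubility implies (via the paper's basic-properties lemmas, \cref{le:20Nov7.1-Basic-properties-of-systems-with-sufficient-recall}) that obs paths contain no decisions and that every infolink on a system's paths targets a descendant of $D^s$, so the recursion depth is bounded by the DAG and the construction halts.
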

\ryan{Is it actually meaningful for nodes in $\calG$ to be in $T'$, or do we 
need to talk about applying $h^{-1}$ to them first?}
\ryan{Need to state that $\calG^3=\calG',T^3=T',h'=\hfirsttolast$ in the proof.}
\ryan{Maybe remark at this point that it's interesting that 
we can homomorphically modify the tree to obtain one that behaves
differently, describing how this relates to the unidirectionality of the homomorphism property? / Address the apparent contradiction that homomorphisms preserved the 
properties of a tree, but that we can analyse them differently.}

Essentially, the procedure for obtaining a normal form tree proceeds in four steps:
\begin{enumerate}
    \item Construct a tree of systems on $X \to D$: First, pick any system for $X\to D$. Then, pick any system for every other information link $X' \to D'$ in the existing system. Iterate until every link in the tree has a system. 
    
    \item 
    Make a copy (lemma 12) of each node for each position (basically, each path) that node has in the tree. 
    This ensures position-in-tree-uniqueness.
    
    \item For systems whose infopath starts with an incoming link $X \gets Y$, copy $X$ (lemma 12), to obtain $X\to  O \gets Y$. 
    This ensures no-backdoor-infopaths.
    
    \item Prune the graph (using lemma 13), by removing any (non-information) links outside the tree of systems. 
    This ensures no-redundant-links.
\end{enumerate}
For example, in \cref{fig:transform-1,fig:transform-2,fig:transform-3,fig:transform-4},
three transformations are performed, each of which makes the tree meet one additional 
requirement, ultimately yielding a normal form tree (\cref{fig:transform-4}) with a homomorphism to the original.~\looseness=-1

\subsection{Proving positive VoI given a normal form tree} \label{sec:voi-given-nft-main}

The reason for using normal form trees is that they enable each system to be parameterized and solved independently.
In particular, we know that the optimal policy for one system involves reproducing information from ancestor nodes such as $Q^s$.
As optimal policies can be found with backwards induction in soluble graphs, our approach involves finding optimal policies in reverse order.
It will therefore suffice to prove that 
non-descendant systems cannot provide information about ancestor nodes within the system.
For example, 
in \cref{fig:transform-1}, when solving for $\pi^{D'}$, 
we would like to know that $D^s$ cannot provide information about $Q'$.~\looseness=-1

\newcommand{\ObsDesc}{\bf{{ObsDesc}}}
\newcommand{\Back}{\bf{Back}}

\newcommand{\graphknowledgelemmastatement}{
Let $s$ be a system in a normal form tree $\mathcal{T}$ on a soluble ID graph $\calG$.
Let $\Pa^{-s}=\Pa(D^s) \setminus \sV^s$ be $D^s$'s out-of-system parents, $\Pa^{s}= {\Pa(D^s)\cap \sV^s}$ 
be the within-system parents of $D^s$,
$\ObsDesc^s$ be the 
observation nodes in descendant systems of $s$, and let $\Back^s=\sV^s \cup (\Anc(D^s) \setminus \Fa(D^s))$. %
Then \mbox{$\Back^s \dsep \Pa^{-s}\setminus \ObsDesc^s \mid \Pa^s\cup \ObsDesc^s$}.
}

\begin{restatable}[\GraphKnowledgeLemmaName]{lemma}{graphKnowledgeLemma} \label{le:2v2-graph-knowledge-lemma}
\graphknowledgelemmastatement
\end{restatable}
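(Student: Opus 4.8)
The plan is to prove the d-separation directly, by fixing an arbitrary path $p$ between a node of $\Back^s$ and a node of $\Pa^{-s}\setminus\ObsDesc^s$ and showing it is blocked by the conditioning set $\bm{Z}\coloneqq\Pa^s\cup\ObsDesc^s$. First I would dispose of the trivial paths: since $D^s\in\Fa(D^s)$, a single parent edge $B\to D^s$ with $B\in\Pa^{-s}$ records only the direct dependence of $D^s$ on its own parents, which is immaterial for the intended use (we are choosing the rule at $D^s$), so the substantive case is a path whose $\Back^s$-endpoint is a within-system node distinct from $D^s$ or a strict ancestor in $\Anc(D^s)\setminus\Fa(D^s)$. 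For these I would assume toward a contradiction that $p$ is active given $\bm Z$ and trace it outward. The three normal-form properties of \cref{def:sep5.3-normal-form-graph-with-tree} and solubility are the only tools available, and the proof amounts to showing that an active path simply cannot reach out of the system to $B$.

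The structural backbone is \emph{no-redundant-links}: for any non-decision node $N$ lying on some path of the tree (other than the nodes of the root information link), every edge joining $N$ to another non-decision must itself lie on a tree path. Consequently an active segment of $p$ can leave the set of tree-path nodes only through a decision node. Conditioning on the descendant observations $\ObsDesc^s\subseteq\bm Z$ then blocks the obs-path detours by which $p$ might otherwise slip between systems, since those observation nodes sit as chain or fork nodes on the obs paths. Together with the fact that $X^s\in\Pa^s\subseteq\bm Z$ (so chains and forks at the info-link source are already blocked) and \emph{no-backdoor-infopaths} (which fixes the orientation of each info path at $X^s$), and with \emph{position-in-tree-uniqueness} guaranteeing that each node has an unambiguous role in the tree, this confinement reduces the whole problem to understanding how $p$ may traverse \emph{decision} nodes.

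The main obstacle is exactly this decision-traversal, because $\bm Z$ contains no decisions. When $p$ passes a decision as a fork or chain it is not blocked by $\bm Z$, and when it passes an in-system descendant decision $D'\in\sV^s$ as a collider $\cdots\to D'\gets\cdots$ it is blocked only if $D'$ has no descendant in $\bm Z$ — yet $D'$ may have a descendant in $\ObsDesc^s$, so the collider can be opened. In either case the threat is that $p$ escapes through an out-of-system parent of $D'$ and continues toward $B$. To rule this out I would invoke solubility together with the tree structure: such an escaping route, continued to $B$ and then along $B\to D^s$, would encode an information flow into $D^s$ from a node standing below $s$ in the leaf-to-root ordering of decisions induced by $\spred$, which the solubility condition forbids. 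I would make this precise by deriving a violation of $\sPi^{<i}\dsep\sU(D^i)\mid\Fa(D^i)$ in the mapping extension for a suitable decision $D^i$, or equivalently by showing the route forces a cycle in the predecessor relation $\spred$ of the tree. This is the step where both the normal-form hypotheses and solubility are genuinely required, and I expect it to be the technically delicate core; the remaining cases (the strict-ancestor endpoints, and multi-decision paths) then follow by applying the same confinement argument segment by segment.
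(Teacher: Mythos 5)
Your first half --- the confinement argument --- matches the paper's proof: no-redundant-links forces any exit from the tree's node set to pass through an information link into a decision, obsnodes and $X^s$ lie in the conditioning set and act as non-colliders where the path would leave through them, and no-backdoor-infopaths is what makes $X^s$ a non-collider there. The genuine gap is at exactly the point you defer as the ``technically delicate core'': you never actually rule out traversal of decisions, you only promise to do so by deriving a violation of $\sPi^{<i} \dsep \sU(D^i) \mid \Fa(D^i)$ or a cycle in $\spred$. It is doubtful this can be made to work as stated. The hypothetical active path is active given $\Pa^s \cup \ObsDesc^s$, which is not $\Fa(D^i)$ for any decision $D^i$, and converting an active path under one conditioning set into an active path under another is a nontrivial step that your sketch does not supply; moreover, your path merely \emph{ends} at a parent $B$ of $D^s$ --- it does not by itself encode any ``information flow into $D^s$'' --- so there is no immediate solubility violation to point at. Your dismissal of the single-edge path $B \to D^s$ as ``immaterial for the intended use'' is also not a move available inside a proof of a d-separation claim (the paper instead restricts attention to paths whose $\Back^s$-endpoint is an ancestor of $D^s$).

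The paper closes the decision-traversal step by a different, concrete mechanism that your proposal lacks. It first proves from solubility, \emph{before} this lemma, two structural facts: (i) $D^s$ and the decisions of descendant systems are non-ancestors of $D^s$ (\cref{le:20dec14.1-basic-properties-of-trees-with-SR}(a)), and (ii) the only information links from nodes of $s$ or its descendant systems into $D^s$ or into ancestor decisions of $D^s$ originate at obsnodes (\cref{le:20dec14.1-basic-properties-of-trees-with-SR}(b)). It then combines these with the elementary fact that an active path between two ancestors of $D^s$, conditional on ancestors, contains only ancestors of $D^s$ (\cref{le:2.12-active-paths-between-ancestors-contain-only-ancestors}), plus the within-tree machinery of \cref{le:20Nov24.1-basic-properties-of-trees-with-sufficient-recall-and-unique-systems-and-paths} showing that crossings between systems happen only at $X^s$ or $D^s$. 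With both endpoints ancestors of $D^s$, any appearance on the path of a decision of $s$ or of a descendant system, and any escape through a non-obsnode out-of-tree link (whose target decision is a non-ancestor by (ii)), immediately contradicts ancestor-confinement --- including the opened-collider scenario you worry about, where a descendant-system decision is unblocked by conditioning on $\ObsDesc^s$. No violation of the solubility condition itself is ever derived; solubility enters only through the pre-established lemmas (i) and (ii). Without those two tree lemmas and the ancestor-confinement lemma, or a worked-out substitute for them, your argument does not close.
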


For example, \cref{fig:transform-4}, has a normal form tree, 
which implies the assurance that $X'$ cannot use information from the red system 
to tell it about $Q'$; formally, $Q' \dsep (Y \cup X^s) \mid X'$. %
Given that each decision $D^s$ in the tree cannot use information from ancestor systems, we can then 
prove that $D^s$
cannot know enough about $X^s$ and $Q^s$ to 
perform optimally, without observing $X^s$.
More formally: ~\looseness=-1

\begin{restatable}[VoI Given Normal Form Tree]{lemma}{materialitysat}  \label{le:aug23.3v2-materiality-model-on-nf-tree-has-materiality}
Let $\sG$ be a soluble ID graph with a normal form tree with root info link $X\to D$. Then there exists an 
ID
compatible with $\sG$ for which $X$ has positive VoI for $D$.~\looseness=-1
\end{restatable}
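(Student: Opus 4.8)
The plan is to exhibit an ID on $\sG$ whose total utility decomposes as a sum of one term per system in the tree, where each term tests whether the decision at the head of that system's information link has correctly reconstructed a secret payload. I would generalise the single-decision encryption scheme of \cref{sec:tree}: for each system $s$ plant a uniformly random payload $Q^s$ feeding into its info path $\sinfo^s$, XOR-scramble it along that path so the obs paths $\sobspaths^s$ reveal only the collider values $c^i$, and let the utility $U^s$ return $1$ exactly when the value arriving along the control path $\scontrol^s$ equals $Q^s$. As in the two-system case of \cref{fig:multi-dec-param-2}, because a child system's information link $\sinfolink^{s'}$ lies on a path of its parent $s$, the decision $D^{s'}$ must additionally emit an ``index/claim'' component that routes the parent's payload onward. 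The normal-form conditions are exactly what let us lay down these CPDs consistently: position-in-tree uniqueness (\cref{def:sep5.3c-unique-systems-and-paths}) guarantees that a node carries at most one system's signal per path, so the CPDs never conflict; no-backdoor-infopaths (\cref{def:sep5.3b-appropriate-tree}) ensures $X^s$ really is the decryption key, since its info path leaves $X^s$ by an outgoing edge; and no-redundant-links (\cref{def:sep5.3e-no-redundant-links}) rules out stray edges that could leak the payload.

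For the forward (easy) inequality, I would write down the ``intended'' policy in which each decision reconstructs its own payload from its within-system parents and the key it observes, and forwards the claim demanded by every ancestor system whose control path passes through it. Walking the tree from root to leaf and applying the telescoping XOR identity, one checks that every utility term evaluates to $1$, so the attainable expected utility in $\calM_{X\to D}$ equals the total number of systems, which is the global maximum since each term is bounded by $1$.

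The crux is the reverse inequality: after deleting $X \to D$ at the root, no policy can drive the root term's expectation to $1$, so $\max_\spi \EE[\totutilvar]$ strictly drops and $X$ has positive VoI. I would argue this by backward induction over the decisions in the leaf-to-root order supplied by the tree, which is a valid order for backward induction because $\sG$ is soluble, so the per-decision optima are independent of the rules chosen earlier. The inductive hypothesis is that for every system strictly below $s$ the only way to maximise its term is the intended claim-forwarding rule. Fixing such optimal descendant rules, I would invoke \cref{le:2v2-graph-knowledge-lemma}: its d-separation statement, which holds precisely because $\mathcal{T}$ is in normal form, says the within-system ancestors $\Back^s$ (in particular $Q^s$ and $X^s$) are d-separated from $D^s$'s out-of-system parents given the within-system parents together with the observation nodes of descendant systems. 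By soundness of d-separation this becomes a conditional independence in the induced distribution, so the information $D^s$ can glean about $Q^s$ is exactly what the obs paths plus the key $X^s$ supply; hence the term at $U^s$ is maximised iff $X^s$ is effectively observed. Applying this at the root with the key removed bounds the root term's expectation away from $1$.

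The main obstacle I anticipate is this last information-theoretic step: converting the graphical guarantee of \cref{le:2v2-graph-knowledge-lemma} into the quantitative claim that, without the key, the decision's best guess of the payload is strictly imperfect, while simultaneously verifying that every descendant term can still be driven to $1$ so that the entire utility drop is attributable to the root. This needs care because the root's control path passes through descendant decisions, so one must confirm that forcing those decisions into their payload-forwarding optima does not inadvertently hand $D$ a substitute for the missing observation of $X$ --- a leak that the normal-form conditions, through \cref{le:2v2-graph-knowledge-lemma}, are designed to preclude.
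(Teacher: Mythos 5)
Your proposal is correct and follows essentially the same route as the paper: the same per-system XOR-encryption parameterisation (\cref{def:model-construction}), the same leaf-to-root backward induction over decisions justified by solubility (\cref{le:1.7-optimal-iff-truthful}), and the same use of \cref{le:2v2-graph-knowledge-lemma} to preclude out-of-system leakage of the payload. The ``main obstacle'' you flag --- converting the d-separation guarantee into the quantitative claim that, without the key, the payload bit is uniform given the decision's observations while descendant terms remain maximisable --- is precisely what the paper's Model Knowledge Lemma (\cref{le:3.3-model-knowledge-lemma}) supplies, via a counting argument over the XOR-consistent joint assignments of the fork and payload nodes.
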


The formal proof is given in \cref{sec:model-has-incentives}.
Informally, in order to show that the decision of each system is forced to behave as intended despite there now being a tree of systems full of other decisions, we use \cref{le:2v2-graph-knowledge-lemma} to show that the utility that a decision obtains in system $s$ only depends on the information it obtains from within system $s$. This rules out that ancestor decisions can observe and pass along relevant information via a path outside the system. 
Moreover, we know by the solubility assumption that the optimal decision rule at a later decision 
cannot depend on the decision rule followed by earlier decisions.
The argument then proceeds by backward induction.
The final decision $D^{s^n}$ must copy the value of $X^{s_n}$.
Given that it does so, the penultimate decision $D^{s^{n-1}}$ must do the same.
And so on, until we find that 
$D$ must copy $X$, and cannot do so in any way other than by observing it, 
meaning that $X$ has positive VoI for $D$.~\looseness=-1\chris{I think these explanatory paragraphs can be better.}
\ryan{Is this accurate \& better?}

Finally, we can prove our main result, that there exists an ID on $\calG$ where $X$ has positive VoI.

\begin{proof}[Proof of {\cref{thm:voi}} (completeness direction)]
We know that the d-reduction $\calG^*$ of $\calG$ contains $X \to D$.
By \cref{le:20nov29.1-Existence-of-adequate-CID-split-from-nothing}, there exists an ID graph $\calG'$ 
with normal form tree rooted at a link $X' \to D'$, with an ID homomorphism from $\calG'$ to $\calG$ that has $h(X')=X$ and $h(D')=D$. 
By \cref{le:aug23.3v2-materiality-model-on-nf-tree-has-materiality}, since $\calG'$ has a normal form tree rooted at $X'\to D'$, there exists an ID on $\calG'$ in which $X'$ has positive VoI for $D'$.
By \cref{th:CID-homomorphism-preserves-Materiality}, the presence of the ID homomorphism $h$ from $\calG'$ to $\calG$
means that there also exists an ID $\calM$ on $\calG$ such that $h(X')=X$ has positive VoI for $h(D')=D$, showing the result.~\looseness=-1
\end{proof}

\section{Applications \& Implementation} \label{sec:applications}
\ryan{Change the figure to all-superscripts?}
\ryan{Cite frameworks paper.}
\ryan{Illustrate nodes with +VoC?}

Graphical criteria can help with modeling agents' incentives in a 
wide range of settings including 
(factored) Partially Observed Markov Decision Processes (POMDPs)
and Modified-action Markov Decision Processes \citep{langlois2021rl}.
For concreteness, 
we show how our contributions can aid in analysing a supervision POMDP \citep{Milli2017}.
In a supervision POMDP, an AI interacts with its environment, given suggested actions from a 
human player.
We will assume that the human's policy has already been selected, in order to focus on 
the incentives of the AI system.~\looseness=-1

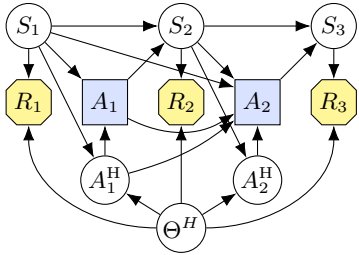
\begin{figure}[H]
\vspace{-1mm}
\begin{center}
\setcompactsize[node distance=0.4cm]
    \begin{tikzpicture}[
        every node/.style={
            draw, circle, minimum size=0.6cm, inner sep=0.3mm}]
        \node (R1) [utility] {$R_1$};
        \node (S1) [above = of R1] {$S_1$};
        \node (A1) [right = of R1, decision] {$A_1$};
        \node (R2) [right = of A1, utility] {$R_2$};
        \node (S2) [above = of R2] {$S_2$};
        \node (A2) [right = of R2, decision] {$A_2$};
        \node (R3) [right = of A2, utility] {$R_3$};
        \node (S3) [above = of R3] {$S_3$};
        
        \edge {S1} {R1};
        \edge {S2} {R2};
        \edge {S3} {R3};
        \edge {A1,S1} {S2};
        \edge {A2,S2} {S3};
        \edge {S1} {A1};
        \edge {S2} {A2};
        
        \node (A1h) [below = of A1] {$A^\mathrm{H}_1$};
        \node (A2h) [below = of A2] {$A^\mathrm{H}_2$};
        
        \node (help) [below = 0mm of A1h, draw=none] {};
        
        \node (theta) at (R2 |- help) [] {$\Theta^H$};
        
        \node (space) [minimum size=0mm, node distance=2mm, below left = 1em of A1h, draw=none] {};
          
        \path 
        (A1h) edge[->] (A1)
        (A1h) edge[->, bend right=11] (A2)        
        (A2h) edge[->] (A2)
        
        (A1) edge[->, bend right] ([yshift=-1.5mm]A2.west)
        
        (S1) edge[->] (A1h)
        (S2) edge[->] (A2h)
        (theta) edge[->] (A1h)
        (theta) edge[->] (A2h)        
        
        (theta) edge[->, out=180, in=270] (R1)
        (theta) edge[->] (R2)
        (theta) edge[->, out=0, in=-90] (R3)
        ;
        \draw[->] (S1) -- ([yshift=2mm,bend left=3]A2.west);
        
      \node (ah) [minimum size=0mm,node distance=2mm, below left = of R1, draw=none] {};
    \end{tikzpicture}
\end{center}\vspace{-1mm}
\caption{A supervision POMDP with the human considered part of the environment;
we show 3 timesteps and 2 actions.~\looseness=-1
}\label{fig:spomdp}
\end{figure}

Given the graph in \cref{fig:spomdp}, we can apply the VoI criterion to each $A^H_i$, the sole parent of $A^i$.
The minimal d-reduction is identical to the original graph, 
so since $A^H_i \!\not \dsep\! R^{i+1} \!\mid\! \emptyset$, 
the observation $A^H_i$ can have positive VoI.
This formalises the claim of \citet{Milli2017} that in a supervision POMDP, the agent ``can learn about reward through [the human's] orders''. 
We can say the same about Cooperative Inverse Reinforcement Learning (CIRL).
CIRL differs from supervision POMDPs only in that each human action 
$A^H_i$
directly 
affects the state $S_{i+1}$. 
If $\calG$ is modified by adding edges $A^H_i \to S_{i+1}$,
and the VoI criterion is applied at $A^H_i$ once again,
we find that 
$A^H_i$ may have positive VoI for $A^i$, 
thereby formalising
the claim that the robot is
``incentivised to learn''
\citep[Remark 1]{Hadfield-Menell2016cirl}.~\looseness=-1

To facilitate convenient use of the graphical criterion, 
we have implemented it in the open source ID library \emph{pycid} \citep{Fox2021pycid}, 
whereas the previous implementation was limited to single-decision IDs.%
\footnote{Code is available at {\url{www.github.com/causalincentives/pycid.}}}~\looseness=-1

\section{Related Work} \label{sec:related-work}

\newcommand{\aref}[1]{\hyperref[#1]{Appendix~\ref{#1}}} %

\paragraph{Value of information}
The concept of value of information dates back to the earliest
papers on influence diagrams \citep{Howard1966,matheson1968economic}.
For a review of recent advances, 
see \citet{borgonovo2016sensitivity}.~\looseness=-1

Previous results have shown how to identify 
observations with zero VoI or equivalent properties
in various settings.
In the no forgetting setting, 
\citet{Fagiuoli1998} and \citet{Nielsen1999}
identified ``structurally redundant''
and ``required nodes'' respectively.
In soluble IDs,
\citet{nilsson2000evaluating} 
proved that optimal decisions need not 
rely on nonrequisite nodes.
Completeness proofs in a setting of one decision have been discovered for VoI and its analogues by
\citet{zhang2020causal,lee2020characterizing,Everitt2021agent}.
Finally, in insoluble IDs,
\citet{lee2020characterizing}
proved that certain nodes are
``redundant under optimality''.
Of these works, only \citet{Nielsen1999} attempts a completeness result for the multi-decision setting.
However, as pointed out by \citet{Everitt2021agent}, it falls short in two respects:
Firstly, the criterion $X \not \dsep \sU^D \mid \Pa(D)$ is proposed, which differs from 
nonrequisiteness
in the conditioning set.
Secondly, and more importantly, 
the proof is incomplete because it 
assumes that positive VoI follows from d-connectedness.%

\paragraph{Submodel-trees}
Trees of systems are loosely related to the ``submodel-trees'' of \citet{lee2021submodel}.
In both cases, the tree encodes an ordering in which the ID can be solved, so the edges in a tree of systems are analogous to those in a submodel-tree. 
The nodes, however, (i.e.\ systems and submodels) differ.
Whereas a submodel-tree aids with solving IDs, a tree of systems helps with parameterising an ID graph. 
As a result, a submodel contains all nodes relevant for $D$, whereas a system consists just one set of info-/control-/obs-paths. 
Relatedly, in a submodel, downstream decisions may be solved and replaced with a value node, whereas in a tree of systems, they are not.~\looseness=-1

\section{Discussion and Conclusion} \label{sec:discussion}

This paper has described techniques for analyzing soluble influence diagrams.
In particular, we introduced ID homomorphisms, a method for transforming ID graphs while preserving key properties, and showed how these can be used to establish equivalent ID graphs with conveniently parameterizable ``trees of systems''.
These techniques enabled us to derive the first completeness result for a graphical criterion for value of information in the multi-decision setting.

Given the promise of reinforcement learning methods, it is essential that we obtain a formal understanding of how multi-decision behavior is shaped.
The graphical perspective taken in this paper has both advantages and disadvantages. On the one hand, some properties cannot be 
distinguished from a graphical perspective alone. On the other hand, it means our results are applicable even when the precise relationships are unspecified or unknown. 
There are a range of ways that this work could be beneficial.
For example, analogous results for the single-decision setting have contributed to safety and fairness analyses
\citep{Armstrong2020pitfalls,cohen2020unambitious,everitt2019tampering,Everitt2019modeling,langlois2021rl,Everitt2021agent}.~\looseness=-1

Future work could include applying the tools developed in this paper to other incentive concepts such as value of control \citep{shachter1986evaluating}, instrumental control incentives, and response incentives \citep{Everitt2021agent}, 
to further analyse the value of remembering past decisions
\citep{Shachter2016,lee2020characterizing},
and to generalize the analysis to multi-agent influence diagrams \citep{Hammond2021equilibrium,Koller2003}.

\section*{Acknowledgments}
This work was supported in-part by the Leverhulme Centre for the Future of Intelligence, Leverhulme Trust, under
Grant RC2015-067.

\bibliography{library.bib}

\onecolumn
\appendix

\section{Background for the proofs} \label{sec:background}
We review two properties of IDs --- and all Bayesian networks --- that we will use throughout our proofs.
\begin{lemma}[Active paths between ancestors contain only ancestors]
\label{le:2.12-active-paths-between-ancestors-contain-only-ancestors}
A path from an ancestor of node $N$ to another ancestor of $N$, that is active conditional on only ancestors of $N$, contains only ancestors of $N$.~\looseness=-1
\end{lemma}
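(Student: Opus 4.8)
The plan is to show directly that every node appearing on the path $p = (V_0,\dots,V_m)$, with $V_0 = A$ and $V_m = B$ both in $\Anc(N)$ and the conditioning set $\sZ \subseteq \Anc(N)$, satisfies $V_i \in \Anc(N)$. I would classify each internal node of $p$ as either a collider or a non-collider on $p$ and treat the two cases separately, the endpoints $A,B$ being ancestors by hypothesis.

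First I would handle the colliders. If $V_i$ is a collider on $p$, then since $p$ is active given $\sZ$, the collider must be opened: either $V_i \in \sZ$ or some descendant $W \in \Desc(V_i)$ lies in $\sZ$. Because $\sZ \subseteq \Anc(N)$, in either case we obtain a node equal to or descended from $V_i$ that is in $\Anc(N)$; splicing the directed path $V_i \pathto W$ with $W \pathto N$ and using transitivity of the ancestor relation gives $V_i \in \Anc(N)$. So every collider on $p$ is an ancestor of $N$.

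Next I would handle the non-colliders via a segment decomposition. Cut $p$ at all of its colliders, obtaining maximal sub-paths whose endpoints are colliders or the endpoints $A,B$ of $p$, and whose interiors contain no collider. Each such boundary node is already known to lie in $\Anc(N)$, either by hypothesis or by the collider argument. Within one collider-free sub-path $W_0,\dots,W_\ell$, the key observation is that the arrow directions, read from $W_0$ to $W_\ell$, can never switch from ``forward'' to ``backward'', since that would create an interior collider; hence they have the form $L^aR^b$, all backward edges preceding all forward edges. This exhibits a single source node $W_a$ with directed paths $W_a \pathto W_0$ and $W_a \pathto W_\ell$ running down the two halves, so every interior node is an ancestor of $W_0$ or of $W_\ell$. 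Since $W_0, W_\ell \in \Anc(N)$, transitivity yields that every interior node lies in $\Anc(N)$.

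Combining the two cases with the endpoints shows every node of $p$ is in $\Anc(N)$. I expect the main obstacle to be making the segment argument precise, in particular formalising the claim that a collider-free sub-path has the monotone direction pattern $L^aR^b$ and hence a unique source whose descendants along $p$ exhaust the sub-path; the collider case is essentially immediate once one notes $\sZ \subseteq \Anc(N)$. An alternative would be induction on $m$, splicing out one collider-free segment at a time, but I expect the direct segment argument to be cleaner.
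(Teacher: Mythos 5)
Your proof is correct and takes essentially the same approach as the paper's: colliders are ancestors of $N$ because activeness forces them (or a descendant) into the conditioning set $\sZ \subseteq \Anc(N)$, and non-colliders are handled by observing that the collider-free stretches of $p$ are directed away from a source, so each such node has a directed sub-path of $p$ to a collider or an endpoint. The paper phrases the second step per node (follow an outgoing edge along $p$ until a collider or endpoint) rather than via your explicit $L^aR^b$ segment decomposition, but the underlying argument is identical.
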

\begin{proof}
Let $p\colon  A \upathto B$ be any active path where $A$ and $B$ are ancestors of $N$ and let the colliders on $p$ be $\sO$.
Since $p$ is active, any collider $O \in \sO$ on $p$ is an ancestor of the conditioned set, and is therefore an ancestor of $N$.
For any chain or fork node $V$, choose one of its outgoing edges along $p$ and follow $p$ until the next collider or endpoint ($\sO$, $A$, or $B$).
This path is directed, so $V$ is an ancestor of $O$, $A$, or $B$, and hence $M$.
\end{proof}

The standard definition of a walk is a sequence of consecutive edges. 
Unless a node has an edge to itself, a walk is not alowed to visit the same node twice in a row.
Instead, we define a notion of a walk such that it is always allowed to repeat the same node previously visited.

\begin{restatable} [Walk with node repetition]{definition}{defwalkwithnoderepetition} \label{def:21feb17.1-walk-with-node-repetition}
A \emph{walk with node repetition} from $N^1$ to $N^n$ in a graph $(\sV, \sE)$ is a sequence of nodes $N^1,...,N^n$ such that for any $i\in \{1,...,n-1\}$, either there is a link $N^i\to N^{i+1}$ or $N^i\gets N^{i+1}$ in $\sE$, or $N^i=N^{i+1}$.
\end{restatable}

We say that a node $N$ in a walk with node repetitions $p$ is a collider/fork/chain node in a walk with node repetitions $w$ if it is a collider/fork/chain node in the walk (without node repetitions) $w'$ obtained by removing any nodes that are equal to their predecessor.

\begin{lemma} [``Active" walk with node repetitions implies active path] \label{21jan21.2-excising-loops-from-active-walks-to-get-active-paths}
If there is a walk with node repetition
from node $A$ to node $B$, such that all fork and chain nodes are not in a set $\sZ$, and all collider nodes have a descendant in $\sZ$, then there is a path between $A$ and $B$ that is active given $\sZ$.
\end{lemma}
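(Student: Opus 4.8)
The plan is to take a \emph{shortest} active walk with node repetition from $A$ to $B$ and argue that it must already be a path. First I would apply the remark preceding the lemma to pass to a walk $N^1,\dots,N^n$ in which no node equals its immediate predecessor, so that every intermediate $N^k$ is genuinely a collider, fork, or chain; the hypotheses then read: every fork/chain node lies outside $\sZ$, and every collider has a descendant in $\sZ$. Among all such active walks from $A$ to $B$ I would choose one of minimal length (number of edges). Writing $e_k$ for the edge between $N^k$ and $N^{k+1}$, it suffices to show this minimal walk has no repeated node, since a walk without repeated nodes is a path, and its activity conditions are exactly those required.

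Suppose instead some node repeats. If an endpoint repeats, say $A=N^1=N^j$ with $j>1$, I would simply truncate to the suffix $N^j,\dots,N^n$, a strictly shorter active walk from $A$ to $B$ (no activity condition is imposed on endpoints), contradicting minimality; the case of $B$ is symmetric. Otherwise some intermediate node $V=N^i=N^j$ with $i<j$ repeats, and I would \emph{excise the loop}, replacing the walk by $N^1,\dots,N^i,N^{j+1},\dots,N^n$. This is again a walk from $A$ to $B$ and is strictly shorter, so to reach a contradiction it remains only to verify it is active. Crucially, the only node whose incident walk-edges change is the splice node $V$: its new neighbours are $N^{i-1}$ via $e_{i-1}$ and $N^{j+1}$ via $e_j$, both edges already present at the first and second occurrences respectively, so every other node keeps its type and hence its activity condition.

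The case analysis on the two surviving edges $e_{i-1},e_j$ at $V$ is then routine except for one configuration. If the spliced $V$ is a non-collider, one of $e_{i-1},e_j$ points out of $V$; but that same edge already made $V$ a non-collider at its first (resp.\ second) occurrence, so $V\notin\sZ$, as required. If the spliced $V$ is a collider, both $e_{i-1},e_j$ point into $V$; whenever in addition the loop-side edge $e_i$ (resp.\ $e_{j-1}$) points into $V$, that occurrence was itself a collider and already supplies a descendant of $V$ in $\sZ$.

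The one genuinely delicate case — and the main obstacle — is when both loop-side edges point \emph{out} of $V$, so that both occurrences were chains yet the excision creates a collider at $V$ with no descendant in $\sZ$ inherited directly. Here I would argue about the excised loop $V,N^{i+1},\dots,N^{j-1},V$ itself: it leaves $V$ along $V\to N^{i+1}$ and returns along $V\to N^{j-1}$, so reading the edges $e_i,\dots,e_{j-1}$ relative to the traversal direction, the first is forward and the last is backward. Hence there is a first index $m$ at which a forward edge is followed by a backward one; that node $N^m$ is a collider, and is reached from $V$ by the directed segment $V\to N^{i+1}\to\cdots\to N^m$, so it is a descendant of $V$. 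Since $N^m$ is a collider on the active walk it has a descendant in $\sZ$, and therefore so does $V$. This restores the collider activity condition for the spliced $V$, completing the verification that the excised walk is active and contradicting minimality. Thus the minimal active walk has no repeated node, i.e.\ it is the desired path active given $\sZ$.
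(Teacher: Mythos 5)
Your proof is correct and rests on exactly the same key idea as the paper's: excising a loop at a repeated node is harmless except when the spliced node becomes a collider whose two loop-side edges both pointed out of it, and that case is repaired by locating the first collider inside the excised loop, which is reached from the spliced node by a directed segment and hence gives it a descendant in $\sZ$. The only difference is organizational --- the paper contracts every repeated segment $N \upathto N$ at once and verifies activity of the resulting path directly, whereas you excise one loop at a time inside a minimal-length/contradiction argument --- but the substance is identical.
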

\newcommand{\walk}{w}
\newcommand{\ppath}{p}
\begin{proof}
Assume there is a walk with node repetition $\walk\colon A \upathto B$ such that every collider in $\walk$ has a descendant in $\sZ$ and every non-collider in $w$ is not in $\sZ$. 
Then let $p$ be the path obtained from $\walk$ 
by replacing every segment $N \upathto N$ with the node $N$.
Clearly, $\ppath$ is a path, so we will proceed to show that it is active given $\sZ$,
by showing that it is active at each of its nodes.

Assume that $N$ is a collider in $p$.
Then, $N$ was obtained from 
a segment in $w$, $Y_1 \to N \upathto N \leftarrow Y_2$
where $N \upathto N$ has length zero or greater.
For this segment to be active in $w$,
the first collider in $N \upathto N$ must have a descendant in $\sZ$, 
and thus so does $N$, and it is active in $\ppath$.
Assume instead that $N$ is a non-collider.
Then, $N$ was obtained from 
a segment in $w$, $Y_1 \to N \upathto N \to Y_2$, 
$Y_1 \leftarrow N \upathto N \leftarrow Y_2$, or 
$Y_1 \leftarrow N \upathto N \to Y_2$.
In any case, for this segment to be active in $w$, 
$N \not \in \sZ$, so it is active in $\ppath$, proving the result.
\end{proof}

\begin{figure}
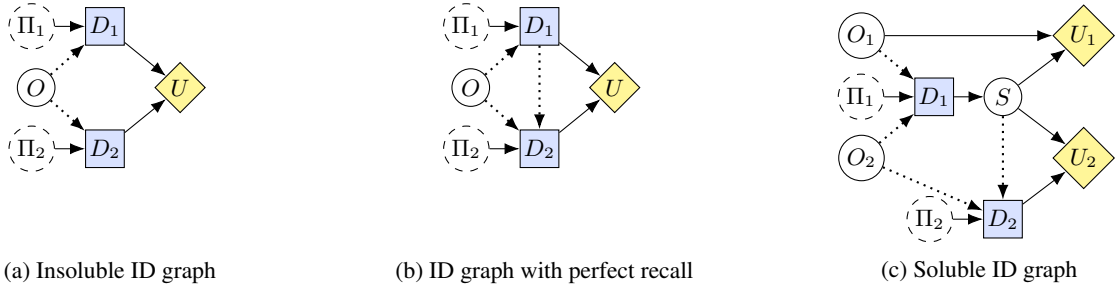

  \centering
  \begin{subfigure}{0.32\textwidth}
    \centering
    \begin{influence-diagram}
      \setcircularnodes
      \setcompactsize
      \node (h) [draw=none] {};
      \node (O) [left = of h] {$O$};
      \node (D1) [above = of h, decision] {$D_1$};
      \node (D2) [below = of h, decision] {$D_2$};
      \node (U) [right = of h, utility] {$U$};
      \node (Pi1) [policy,left = of D1] {$\Pi_1$};
      \node (Pi2) [policy,left = of D2] {$\Pi_2$};
      \node (phantom) [draw=none,below= 5mm of D2] {};

      \edge {D1,D2} {U};
      \edge{Pi1}{D1};
      \edge{Pi2}{D2};
      \edge[information] {O} {D1,D2};

    \end{influence-diagram}
    \caption{Insoluble ID graph}
    \label{fig:no-bi}
  \end{subfigure}
  \begin{subfigure}{0.32\textwidth}
    \centering
    \begin{influence-diagram}
      \setcircularnodes
      \setcompactsize
      \node (h) [draw=none] {};
      \node (O) [left = of h] {$O$};
      \node (D1) [above = of h, decision] {$D_1$};
      \node (D2) [below = of h, decision] {$D_2$};
      \node (U) [right = of h, utility] {$U$};
      \node (Pi1) [policy,left = of D1] {$\Pi_1$};
      \node (Pi2) [policy,left = of D2] {$\Pi_2$};
      \node (phantom) [draw=none,below= 5mm of D2] {};

      \edge{Pi1}{D1};
      \edge{Pi2}{D2};
      \edge {D1,D2} {U};
      \edge[information] {O} {D1};
      \edge[information] {O, D1} {D2}
      
    \end{influence-diagram}
    \caption{ID graph with perfect recall}
    \label{fig:perfect-recall}
  \end{subfigure}
  \begin{subfigure}{0.32\textwidth}
    \centering
    \begin{influence-diagram}
      \setcircularnodes
      \setcompactsize
      \node (h) [draw=none] {};
      \node (h2) [right = of h, draw=none] {};
      \node (O) [left = of h] {$O_2$};
      \node (D1) [above = of h, decision] {$D_1$};
      \node (S) [above = of h2] {$S$};
      \node (h1) [above = of D1, draw=none] {};
      \node (O1) [left = of h1] {$O_1$};
      \node (D2) [below = of h2, decision] {$D_2$};
      \node (U) [right = of h2, utility] {$U_2$};
      \node (U1) at (U|-O1) [utility] {$U_1$};
      \node (Pi1) [policy,left = of D1] {$\Pi_1$};
      \node (Pi2) [policy,left = of D2] {$\Pi_2$};

      \edge{Pi1}{D1};
      \edge{Pi2}{D2};
      \edge {D1} {S};
      \edge {S,O1} {U1};
      \edge {S,D2} {U};
      \edge[information] {O,O1} {D1};
      \edge[information] {O,S} {D2};
     
    \end{influence-diagram}
    \caption{Soluble ID graph}
    \label{fig:sufficient-recall}
  \end{subfigure}
  \caption{Multi-decision IDs.}
  \label{fig:multi-decision}
\end{figure}

For our analysis, we consider soluble ID graphs.
This condition includes graphs with perfect recall \cref{fig:perfect-recall}
but also includes some others, shown in \cref{fig:sufficient-recall}.

\section{ID homomorphisms} \label{sec:CID-homomorphisms}
\subsection{Properties preserved given an ID homomorphism} \label{sec:properties-preserved-proofs}

We now prove two
properties that are preserved by any ID homomorphism:\footnote{An ID homomorphism is analogous to the notion of graph homomorphism from graph theory, which essentially requires that edges are preserved along the map. 
In fact, if we would consider every node in an ID graph as a decision and as having an edge to itself, then any ID homomorphism is also a graph homomorphism when considering the two ID graphs as ordinary graphs (ignoring node types).}
solubility, and VoI.~\looseness=-1

\lecidhomsufficientrecall*

\begin{proof}
Since $\sG$ is soluble, there is a total ordering of decisions $<$  such that for all $D^1<D^2$, $\Pi^{D} \dsep U(D^2) \mid \Fa(D^2)$. 
To show that $\sG'$ is also soluble, we use $<$ to construct an ordering on the decisions of $\sG'$ that has the same property. 
We define $<'$ for decisions in $\sG'$: as $D^1 <' D^2$ when:~\looseness=-1
\begin{itemize}
    \item $h(D^1) \neq h(D^2)$ and $h(D^1) < h({D^2})$; or
    \item $h(D^2) = h(D^2)$ and $\sG'$ contains $D^1 \to D^2$.
\end{itemize}
This is a total order since whenever $h(D^1) \neq h(D^2)$ then either $h(D^1) < h(D^2)$ or $h(D^1) > h(D^2)$ by the total order on $\sG$, and whenever $h(D^1) = h(D^2)$ then $\sG'$ contains $D^1 \to D^2$ or $D^1 \leftarrow D^2$ by (\combinesonlylinkeddecisions).

\newcommand{\porig}{p_\sorig}
\newcommand{\porigwalk}{w}
Now we show that for any two decisions $D^1,D^2$ in $\sG'$, with $D^1<'D^2$, that any path $p\colon \Pi^{D^1} \to D^1 \upathto U$ for some $U \in \Desc (D^2)$ cannot be active given $\Fa(D^2)$. Consider two cases:

	\textit{Case (1) }: Assume $h(D^2)\!=\!h(D^1)$. Then $D^1\!<\!D^2$ so $\calG'$ contains $D^1 \to D^2$ by definition of $<'$, 
	and so any path $p\colon \Pi^{D^1} \to D^1 \upathto U$ that starts with the link $\Pi^{D^1} \to D^1 \to Y$ is blocked at $D^1$ given $\Pa({D^2})$.
	Any path that begins as $\Pi^{D^1} \to D^1 \leftarrow Y$ is blocked at the non-collider $Y$:
	the presence of $D^1 \leftarrow Y$ implies 
	that $\sG$ contains $h(D^1) \leftarrow h(Y)$ (\preserveslinks),
	so that $\sG'$ contains $D^2 \leftarrow Y$ (\coversallinfolinks), 
	and $Y \in \Pa(D^2)$.

	\textit{Case (2)} : Assume $h(D^2)\neq h(D^1)$. 
	We will prove the contrapositive:
	if $\Pi^{D^1} \not\dsep U(D^2) \mid \Fa(D^2)$ in $\sG'$
	then $\Pi^{h(D^1)} \not\dsep U(h(D^2)) \mid \Fa(h(D^2))$ in $\sG$ where $h(D^1)<h(D^2)$.
	If $p\colon\Pi^{D^1} \to D^1 \upathto U(D^2)$ is active given $\Fa(D^2)$, then consider the walk with node repetition $\porigwalk\colon\Pi^{h(D^1)} \to h(D^1) \upathto h(U(D'))$ consisting of 
	$f(V)$ for each node $V$ in $p$. 
	We know that each $V$ in $p$ is a (chain/fork/collider) if and only if 
	$h(V)$ is a (chain/fork/collider) in $\porigwalk$, since if there is a link $N\to V$ or $N\gets V$ then there must be a link $h(N)\to h(V)$ or $h(N) \gets h(V)$ respectively by the (\preserveslinks) assumption of ID homomorphisms. 
	And that each node $V$ contains a descendant in $\Fa(D^2)$
	if and only if $h(V)$ contains a descendant in $\Fa(h(D^2))$.
	So every collider in $w$ has a descendant in $\Fa(h(D^2))$ while every non-collider does not.
	This implies that $\Pi^{h(D^1)} \not \dsep U(h(D^2)) \mid \Fa(h(D^2))$ by \cref{21jan21.2-excising-loops-from-active-walks-to-get-active-paths}, 
	and we know that $h(D^1)<h(D^2)$ by the definition of $<'$
	so the result follows.
\end{proof}

We can now define how a homomorphism allows us to define a procedure for 
transporting IDs between the two graphs, such that corresponding 
IDs and policies lead to equivalent outcomes.

\lecidhomequivalence*

\begin{proof}
We define the \emph{ported ID} $\calM=(\sG,\dom,P)$ as follows: Each non-utility node $N$ in $\sG$ has $\dom(N) =  \prod_{N^i\in h^{-1}(N)} \dom(N^i)$, and each utility node $U$ in $\sG$ has $\dom (U) = \mathbb R$. Each non-decision node $N$ has as $P^N(n|\pa)$ the joint conditional distribution of each $\tilde P^{N^i}(n^i|\pa^i)$. We define the \emph{ported policy} $\pi$ so that each decision $D$ has as $\pi^D(d|\pa)$ the joint conditional distribution of each $\pi^{D^i}(d^i|\pa)$. These in fact factor over $\sG$  by property (b) of ID homomorphisms.

We show the result by induction on the graph of nodes $N^i$ in $\Gsplit$. Let $N=\sorig(N^i)$. 

\sloppy \textit{base step} : \textit{Assume $N^i$ has no parents in $\Gsplit$}. Then $P(N\!=\!(n^1,...,n^k)) =P^N((n^1,...,n^k))=\prod_{i=1}^k \tilde P^{N^i}(n^i|n^1,...,n^{i-1})
=\tilde P(N^1\!=\!n^1,...,N^k=n^k)$.

\textit{inductive step} : \textit{Assume that for all parents $Y^i$ of $N^i$, letting $Y=\sorig(Y^i)$, we have that $P(Y\!=\!(y^1,...,y^k))=\tilde P(Y^1\!=\!y^1,...,Y^k=y^k)$}. Then
\begin{alignat*}{2}
    &P(N\!=\!(n^1,...,n^k))\\
    &=  \sum_{y^1,...,y^k}P(Y\!=\!(y^1,...,y^k))\cdot P^N((n^1,...,n^k)|y^1,...,y^k)
    \quad\quad &&{}\\
    &=\sum_{y^1,...,y^k}\tilde P(Y^1\!=\!y^1,...,Y^k\!=\!y^k))\cdot \prod_{i=1}^k \tilde P^{N^i}(n^i|n^1,...,n^{i-1}, y^1,...,y^k)
    \quad\quad && \text{}\\
    &=\tilde P(N^1\!=\!n^1,...,N^k=n^k) \quad &&\text{} 
\end{alignat*}
Which shows the result.
\end{proof}

We will write 
the ``transported ID'' and policy from \cref{le:cidhom1-equivalence} as
$h(M)$ and $h(\pi)$. This means that we also treat a homomorphism $h$ as a function between IDs and policies. In fact, in order to show that ID homomorphisms preserve VoI %
, we show that on policies, $h$ is a bijection, which relies on the properties (c) and (d) of ID homomorphisms,  and is the primary reason why (c,d) are included:
\begin{restatable}{lemma}{lecidhompolicybijection} \label{le:bijection-between-deterministic-policies-on-split-graph}
Any ID homomorphism $h$ is a bijection from (optimal) policies on $\tilde \calM$ to (optimal) policies on $\calM=h(\tilde \calM)$. 
\end{restatable}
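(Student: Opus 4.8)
The plan is to reduce the claim to a decision-by-decision correspondence and then identify, at each decision, a chain-rule bijection between the fibre of decision rules in $\sG'$ and the single decision rule in $\sG$. Since a policy is a tuple of decision rules, one per decision, and the transport $h(\pi')$ from \cref{le:cidhom1-equivalence} maps the sub-collection of rules for a fibre $h^{-1}(D)$ to the single rule for $D$, it suffices to fix a decision $D$ in $\sG$ and show that the collection of decision rules $\{\pi^{D^i}\}_{D^i \in h^{-1}(D)}$ in $\sG'$ is in bijection with the decision rule $\pi^D$ over the product domain $\dom(D)=\bigtimes_{i} \dom(D^i)$.

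First I would pin down the structure of the fibre. By property (d) every two decisions in $h^{-1}(D)$ are joined by an edge, so $h^{-1}(D)$ carries a complete tournament; acyclicity of $\sG'$ forces this tournament to be transitive, giving a linear order $D^1,\dots,D^k$ with $D^i\to D^j$ for all $i<j$. Next, properties (b) and (c) determine the remaining parents: by (c), every node $N$ with $h(N)\to D$ in $\sG$ is a parent of \emph{each} $D^i$, and by (b) the only other parents of $D^i$ are nodes mapping to $D$, namely $D^1,\dots,D^{i-1}$. Hence $\Pa(D^i)=E\cup\{D^1,\dots,D^{i-1}\}$ for a common external set $E=\{N : h(N)\to D \text{ in } \sG\}$, and the product structure of the domains (from \cref{le:cidhom1-equivalence}) gives a canonical bijection between assignments to $\Pa(D)$ in $\sG$ and assignments to $E$ in $\sG'$.

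With this structure in hand, the transport $h(\pi')$ is exactly the chain-rule product $\pi^D(d^1,\dots,d^k\mid \pa)=\prod_{i=1}^{k}\pi^{D^i}(d^i\mid d^1,\dots,d^{i-1},e)$, where $e$ is the assignment to $E$ identified with $\pa$. I would exhibit a two-sided inverse by chain-rule decomposition: given $\pi^D$, recover $\pi^{D^1}$ as the marginal over the first coordinate and each subsequent $\pi^{D^i}$ by conditioning on $d^1,\dots,d^{i-1}$. Composing the decomposition telescopes back to $\pi^D$ on every configuration, so the transport is surjective; for injectivity one reads the coordinate rules back off, using the linear order so that $D^i$ sees precisely $d^{<i}$. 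Finally, optimality transfers for free: \cref{le:cidhom1-equivalence} gives $\EE^{\tilde\calM}_{\pi'}[\totutilvar]=\EE^{\calM}_{h(\pi')}[\totutilvar]$, so $\pi'$ is optimal iff $h(\pi')$ is, which upgrades the bijection on all policies to one on optimal policies.

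The main obstacle is the behaviour of decision rules on \emph{unreachable} parent configurations: when an earlier rule assigns probability zero to some $d^{<i}$, the conditional $\pi^{D^i}(\cdot\mid d^{<i},e)$ is invisible in the product, so the naive map collapses policies that differ only on such configurations. This is precisely why the statement is phrased for deterministic (and optimal) policies: restricting to deterministic rules, the inverse reads each coordinate function directly, and because behaviour on unreachable configurations never affects $\EE[\totutilvar]$, it is harmless to fix a canonical extension there, making the correspondence a genuine bijection. I expect verifying that this canonicalisation is consistent with the chain-rule decomposition, and checking it simultaneously across all fibres so that the global map remains a bijection on whole policies, to be the fiddly part of the argument.
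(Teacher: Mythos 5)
Your core argument is the same as the paper's: transport a policy on $\calM$ back to $\tilde\calM$ by chain-rule decomposition of $\pi^D$ into coordinate conditionals, check via properties (c) and (d) (plus acyclicity, which makes the fibre a transitive tournament, i.e.\ a linear order) that these conditionals depend only on $\Pa(D^i)$ and hence are decision rules on the split graph, and obtain the optimal-policy statement from \cref{le:cidhom1-equivalence}. The paper compresses all of this into the notation $\tilde\pi^{D^i}=(\pi^{D})^i$ together with the observation that the $D^i$ form a complete graph with the right external parents.

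Where you go beyond the paper is in flagging the unreachable-configuration obstacle, and you are right that it is genuine: if $\tilde\pi^{D^1}(\cdot\mid e)$ puts zero mass on some $d^1$, then policies on $\tilde\calM$ differing only in $\tilde\pi^{D^2}(\cdot\mid d^1,e)$ transport to the same joint, so the transport map is surjective but \emph{not} injective --- and this failure persists even for deterministic policies, whose rules still take (irrelevant) values on off-path prefixes. The paper's proof silently skips this: its inverse $(\pi^{D})^i$ is simply not well defined on zero-probability prefixes. Two corrections to your framing, however. First, the lemma is \emph{not} ``phrased for deterministic (and optimal) policies'' --- only its internal label contains the word ``deterministic''; the statement quantifies over all policies, so your canonicalised correspondence proves a corrected variant (a bijection between canonical representatives, equivalently between equivalence classes of policies that agree on reachable decision contexts), not the literal statement, which is false as written. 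Second, this means the canonicalisation is not a fiddly afterthought but the only way to make the claim true; alternatively, one can note that everything used downstream in \cref{th:CID-homomorphism-preserves-Materiality} is surjectivity of the transport together with the expected-utility equality of \cref{le:cidhom1-equivalence}, both of which your argument and the paper's deliver.
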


\begin{proof}
We define an inverse for the map as follows: Take a policy $\pi_D$ on $M$.
This gives a joint distribution $\tilde \pi^{D^i}=(\pi^{D})^i$ over $\dom(D^i)$ for $D^i\in \mathcal D=h^{-1}(D)$. Moreover, for any $X^j\in \Pa(D)$ and for any $X^{j,k}\in h^{-1}(X^j)$,  each decision $D^i$ has $X^{j,k}\in \Pa(D^i)$  by (\coversallinfolinks), and since these decisions $D^i$ form a complete graph (each $D^i$ is linked to each $D^j$) 
by condition (\combinesonlylinkeddecisions), this distribution $\tilde \pi^{D^i}$ also factors over $\tilde \sG$ and hence is a policy $\tilde M$. But this is precisely the definition of $\pi^D$ being the transported policy of $\tilde \pi^{\mathcal D}$, so that $\pi^D \mapsto \pi^{\mathcal D}$ is indeed the desired inverse. The optimal version of this lemma then follows from \cref{le:cidhom1-equivalence}.
\end{proof}

\lecidhompreservesmateriality*

\begin{proof}
\newcommand{\MatValue}[2]{\EE^{#2}_{#1}(\totutilvar)}
\newcommand{\maxMatValueExpanded}[3]{\max\limits_{#1}\MatValue{#2}{#3}}
\newcommand{\maxMatValueInlineExpanded}[3]{\max_{#1}\MatValue{#2}{#3}}
\newcommand{\maxMatValue}[2]{\maxMatValueExpanded{#1}{#1}{#2}}
\newcommand{\maxMatValueInline}[2]{\maxMatValueInlineExpanded{#1}{#1}{#2}}
Let $\mathcal X$ be the set of nodes $X^j$ in $\sG'$ such that $h(X^j)=X$, and $\mathcal D$ the set of nodes $D^j$ such that $\sorig(D^j)=D$. 

Firstly, note that $\sorig$ is also a homomorphism from  $\sG'_{\mathcal X\to \mathcal D}$ to $\sG_{X\to D}$ and from  $\sG'_{\mathcal X\not\to \mathcal D}$ to $\sG_{X\not\to D}$ (since in both cases, there is still an edge $X^i\to D^i$ iff there is an edge $X\to D$). Hence, for any policy $\pi'$ on $M'$ and letting $\pi=h(\pi')$ be the corresponding policy on $M$, apply \cref{le:cidhom1-equivalence} twice to conclude that 
$\MatValue{\pi}{M_{X\to D}}=\MatValue{\pi'}{M'_{\mathcal X\to \mathcal D}}$
and 
$\MatValue{\pi'}{M'_{X\not\to D}}=\MatValue{\pi}{M_{\mathcal X\not\to \mathcal D}}$.

Since the map that maps a policy on $M'$ to the corresponding policy on $M$ (see \cref{le:cidhom1-equivalence}) is a bijection by  \cref{le:bijection-between-deterministic-policies-on-split-graph}, this implies that 
$\maxMatValueInline{\pi}{M_{X\to D}}=\maxMatValueInline{\pi'}{M'_{\mathcal X\to \mathcal D}}$
and 
$\maxMatValueInline{\pi}{M_{X\not\to D}}=\maxMatValueInline{\pi'}{M'_{\mathcal X\not\to \mathcal D}}$.
These imply:
\begin{alignat*}{2}
    \maxMatValue{\pi}{M_{X\to D}} 
    &= \maxMatValue{\pi'}{M'_{\mathcal X\to \mathcal D}} \quad\quad &&: \text{by the argument above}\\
    &\geq \maxMatValue{\pi'}{M'_{X^i\to D^i}} \quad\quad &&: \text{more infolinks cannot decrease max utility}\\
    &> \maxMatValue{\pi'}{M'_{X^i\not\to D^i}} \quad &&:\text{by assumption: $X^i$ is material for $D^i$} \\
    &\geq 
    \maxMatValue{\pi'}{M'_{\mathcal X\not\to \mathcal D}} \quad &&:\text{more infolinks cannot decrease max utility} \\
    &= 
    \maxMatValue{\pi}{M_{X\not\to D}} \quad &&:\text{by the argument above}
\end{alignat*}
which shows the result.
\end{proof}

\subsection{Transformations that ensure a homomorphism} \label{app:ensure-cidhom}
\lecidhomcomposition*%

\begin{proof} We show that each of the four properties is preserved under composition:

(a) If $h$ and $h’$ preserve node types, then clearly so does $h \circ h'$. 

(b) If $\sG''$ contains $A \to B$ then by (b) for $h'$, 
$\sG'$ contains $h’(A)\to h'(B)$ or $h'(A)=h'(B)$. 
In either case, (b) for $h$ implies that
$\sG$ contains $h\circ h'(A) \to h\circ h'(B)$ or $h\circ h'(A) = h\circ h'(B)$.

(c) If $\sG$ contains $h\circ h'(N) \to h\circ h'(D)$, then by (c) for $h$, $\sG'$ contains $h’(N) \to h'(D)$ and by the same argument 
$\sG$ contains $N\to D$.

(d) Assume $h\circ h'(D^1)=h\circ h'(D^2)$ and $D^1\!\neq\! D^2$ in $\sG''$. 
Then if $h’(D^1)\!=\!h'(D^2)$, by (d) for $h'$, 
$\sG''$ contains
$D^1\!\to\! D^2$ or $D^2\!\to\! D^1$ showing the result. 
If $h’(D^1)\neq h'(D^2)$ then by (d) for $h$, 
$\sG'$ contains $h’(D^1)\to h'(D^2)$ or $h’(D^2) \to h'(D^1)$, and hence by (c) for $h'$, $\sG''$ contains $D^1 \to D^2$ or $D^2\to D^1$.
\end{proof}

\lecopyingcidhom*
\ryan{I've simplified/shortened this a bit further. Feel free to revert any changes is preferred.}

\begin{proof}
ID homomorphism condition (a) follows by definition. (b) follows from the definition of $E'$. (c,d) follow since they hold for all nodes $N$ by definition, including the decisions.
\end{proof}

\lepruningcidhom*

\begin{proof}
The homomorphism properties follow:
(a) by definition. (b) from $E'\!\subseteq\! E$, (c) from every $N\!\to\! D \in \sD$ 
being in $E$, (d) from $h$ being the identity map so every $D^1\!\neq\! D^2$ has $h(D^1)\!\neq\! h(D^2)$.~\looseness=-1
\end{proof}%

\section{Systems and trees of systems in an ID graph} \label{sec:preliminaries-systems-and-trees}

\subsection{Systems}

Before detailing the properties of systems, we first recap the elements of a system.
We call $D^s$, $U^s$, $X^s$, and $\sinfolink^s\colon X^s \to D^s$ the \emph{decision node}, \emph{utility node}, \emph{info node}, and \emph{infolink} of $s$, respectively, and refer to $\scontrol^s$, $\sinfo^s$ and $\sobspaths^s(C)$ for each collider $C$ in $\sinfo^s$ as the \emph{paths of $s$}.

\begin{definition}[Elements of a system] \label{def:back-&front-sections-of-system}
For a system $s$:
    \begin{itemize}
        \item An \emph{obs node} $O$ of $s$ is the penultimate node of each obs path $\sobspaths^s(C)$.\footnote{For ``observation node". But note that though $D^i$ does ``observe" $X^i$, it is not an obsnode, since it is not the penultimate node of an $\sobspaths^s(C)$, but is the first node of $\sinfo^s$.}

        \item The \emph{question node}  $Q^s$, if $\sinfo^s$ contains at least one fork node, is the closest-to-$U^s$ fork node on $\sinfo^s$.\footnote{This implies that the segment of the info path from $Q^s$ to $U^s$ is a directed path $Q^s\pathto U^s$, since there are no fork nodes on that path, and it must begin and end with an arrow towards $U^s$.}
        
        \item The \emph{back section}, if $\sinfo^s$ contains a fork, is the set of nodes in $X^s \upathto Q^s$ in $\sinfo^s$ 
        (including $X^s$ and $Q^s$)
        and in each $\sobspaths(C^i)$, except for $D^s$. Otherwise, the back section is empty.
        
        \item The \emph{front section} consists of the nodes in any path in $s$
        that are not in the back section.
    \end{itemize}
\end{definition}

\begin{definition}[Within-system links and paths] \label{sep12.4-within-system-within-tree-links}
A link $A\to B$ that is in $\sinfo^s$, $\scontrol^s$, or any $\sobspaths^s(C^i)$, or the link $X^s\to D^s$ for some system $s$ is called a \emph{within-system-$s$} or \emph{within-system link}. A \emph{within-system path} is a path that contains only within-system links.
\end{definition}

We will now prove a number of fundamental properties of systems.

\begin{lemma}[Basic properties of a system in a soluble ID graph] \label{le:20Nov7.1-Basic-properties-of-systems-with-sufficient-recall}
Any system $s$ in a soluble ID graph has the properties: 
\begin{enumerate}[label=(\alph*)]
    \item (No infolinks in the back-section) \label{le:20Nov7.1a-no-decisions-in-the-back-section} 
    The back section of $s$ can only contain a decision $D' \in \sD$ if $D'=X^s$, and the infopath $\sinfo^s$ is front-door. Moreover then $D'$ is not in any $\sobspaths^s(C^i)$.
    \chris{Above can be phrased more neatly. but not a priority.}
    
    \item (Infolinks in $s$ are descendants of $D^s$) \label{le:20Nov7.1b-info-path-decisions-observe-the-control-path}
    An information link $N \to D'$ for $D'\neq D^s$ can only be contained in a path in system $s$ if the control path $\scontrol^s$ contains a parent of $D'$, so that $D' \in \Desc(D^s)$.
    
    \item (Parents of ancestor decisions are parents of $D^s$) \label{le:20Nov7.1c-parents-of-ancestors-are-parents}
    A node $N$ in system $s$ can only be a parent of an ancestor decision $D'$ of $D^s$ if $N$ is also a parent of $D^s$.\footnote{Note that in a normal form tree (see below), this link $N \to D'$ is an out-of-tree link}
\end{enumerate}
\end{lemma}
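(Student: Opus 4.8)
The plan is to funnel all three parts through a single consequence of solubility, which I will abbreviate $(\star)$: \emph{every decision $D'$ that is a proper ancestor of $D^s$ comes before $D^s$ in the solubility ordering, so that $\Pi^{D'} \dsep \sU(D^s) \mid \Fa(D^s)$ in the mapping extension.} To establish $(\star)$, note that a proper ancestor $D'$ gives a directed path $D' \pathto D^s$ which, prefixed by $\Pi^{D^s}$ and extended along the control path $\scontrol^s \colon D^s \pathto U^s$, yields a directed path $\Pi^{D^s} \to D^s \pathto U^s$ with $U^s \in \sU(D')$ (as $U^s \in \Desc(D^s) \subseteq \Desc(D')$). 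Were $D^s$ to come after $D'$, this path would be active given $\Fa(D')$: it is directed, hence collider-free, and it avoids $\Fa(D')$ since $\Pi^{D^s}$ is a parent of $D^s$ only, while its remaining nodes lie in $\{D^s\}\cup\Desc(D^s)$, none of which is a proper ancestor of $D^s$, whereas every node of $\Fa(D') = \Pa(D')\cup\{D'\}$ is. This contradicts solubility at $D'$, so $D' < D^s$, and then solubility at $D^s$ gives the stated separation.

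Given $(\star)$, the recipe for each part is to assume the configuration it forbids and exhibit an active walk from a policy node to $U^s$ given $\Fa(D^s)$, then invoke \cref{21jan21.2-excising-loops-from-active-walks-to-get-active-paths} to get an active path contradicting $(\star)$. Two facts recur: a collider placed at a proper-ancestor decision is automatically active (that decision has the conditioned node $D^s$ as a descendant), and any sub-segment of $\sinfo^s$ avoiding $X^s$ stays active when the conditioning set is enlarged from $\Fa(D^s) \setminus \{X^s\}$ to $\Fa(D^s)$. For part (a) I would first show, using \cref{le:2.12-active-paths-between-ancestors-contain-only-ancestors} and the obs paths, that every back-section node other than $X^s$ is a proper ancestor of $D^s$: colliders are ancestors through their obs paths, and at a fork or chain node one follows an outgoing edge to the first collider or endpoint, which is an ancestor of $D^s$. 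If such a node were a decision $D'$, prefixing $\Pi^{D'} \to D'$ to the $U^s$-ward branch of $\sinfo^s$ produces the forbidden active walk, forcing $D' = X^s$. The same walk run along an obs path rules out $X^s$ lying on an obs path, and the walk $\Pi^{X^s} \to X^s \gets N \upathto U^s$ (whose collider at $X^s$ is active because $X^s \in \Fa(D^s)$) rules out a back-door info path, giving the remaining two claims of (a).

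For part (b) I would case on which path of $s$ carries the info link $N \to D'$. On $\scontrol^s$ its predecessor $N$ is a control-path parent of $D'$, so the conclusion (and $D' \in \Desc(D^s)$) is immediate. On an obs path, $D'$ is an interior, hence proper-ancestor, node, and reversing the obs path up to $C^i$ and then following $\sinfo^s$ to $U^s$ gives the forbidden walk. On $\sinfo^s$ itself, a fork node has no incoming path-edge (vacuous), while a collider, the endpoint $X^s$, or a chain node whose outgoing edge points toward $X^s$ each make $D'$ a proper ancestor and so contradict $(\star)$.

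The one genuinely delicate case --- and the step I expect to be the main obstacle --- is a chain node $D'$ of $\sinfo^s$ whose outgoing edge points toward $U^s$. Here the walk $\Pi^{D'} \to D' \upathto U^s$ is active, forcing $D^s < D'$ by $(\star)$; I would then argue dichotomously that either $\scontrol^s$ meets $\Fa(D')$ (so it contains a parent of $D'$, the desired conclusion) or it is disjoint from $\Fa(D')$, in which case the control path is active given $\Fa(D')$ and forces $D' < D^s$ --- a contradiction, after disposing of the sub-case where the $U^s$-ward branch of $\sinfo^s$ bends back through a collider, which again exhibits $D'$ as a proper ancestor. Part (c) reuses the collider trick: from $N \in \Pa(D')$ and $D' \in \Anc(D^s)$ one gets that $N$ is a proper ancestor of $D^s$, so $N$ cannot lie on $\scontrol^s$; assuming $N \notin \Pa(D^s)$, the walk $\Pi^{D'} \to D' \gets N \upathto U^s$ --- where the active collider at $D'$ is followed by the $U^s$-ward system segment out of $N$, at which $N$ meets $N \to D'$ as a fork or chain that is active precisely because $N \notin \Fa(D^s)$ --- contradicts $(\star)$, forcing $N \in \Pa(D^s)$. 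Throughout, the bookkeeping burden is to confirm the constructed walks are never blocked, and to note that in the boundary cases where they would be (an intermediate node equal to $X^s$ or to the obs-node of $D^s$), that node is already a parent of $D^s$, so the conclusion holds outright.
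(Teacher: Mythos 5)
Your proposal is correct and takes essentially the same route as the paper's own proof: both arguments turn solubility into d-separation requirements of the form $\Pi^{D'} \dsep U^s \mid \Fa(D^s)$, contradict them with policy-node-prefixed truncations of the info/obs/control paths (handling the $D'=X^s$ back-door case via the conditioned collider at $X^s$), and resolve part (b) with the identical dichotomy between $\Pi^{D'} \dsep U^s \mid \Fa(D^s)$ and $\Pi^{D^s} \dsep U^s \mid \Fa(D')$, whose second disjunct forces $\scontrol^s$ to meet $\Fa(D')$. Your $(\star)$ and the explicit appeal to the walk-to-path lemma make rigorous two points the paper states only loosely (it asserts the ordering claim via ``any topological ordering'' and treats concatenated segments as paths); just fix the wording slip in the proof of $(\star)$, where ``were $D^s$ to come after $D'$'' should read ``were $D^s$ to come \emph{before} $D'$'', since that is the case your active path actually contradicts.
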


\begin{proof} We prove each property in succession:

(a) (No infolinks in the back section) 
We will prove what restrictions are implies by considering sequentially the cases where 
$D'$ is in either the infopath, or in the observation path.
To begin with, let us state what we know in both cases:
$D'$ must be an ancestor of $D^s$. 
As such, $D'<D^s$ in any topological ordering, 
so solubility requires that $D' \dsep U^s \mid \Fa(D^s)$.

If $D' \in \sinfo^s$, then the path
$p\colon \Pi'\to D' \oset[0.5ex]{\sinfo^s}{\upathto} U^s$ may be
formed from by truncating the infopath $\sinfo^s$.
By solubility, $p$ must be blocked given $\Fa(D^s)$.
We know, however, that
$\sinfo\colon X^s \oset[0.5ex]{\sinfo^s}{\upathto} U^s$
is active given $\Fa({D^s}) \setminus \{X^s\}$.
If $D' \neq X^s$ then $p$ does not contain $X^s$, and so it is is active given $\Fa({D^s})$, 
violating solubility.
Moreover, if $D'=X^s$ and $\sinfo$ is a backdoor path, then $p$ will have a collider at $D'$, 
and solubility is violated once again.
So $\sinfo^s$ can only contain a decision $D'$ if $D'=X^s$ and $\sinfo^s$ is frontdoor.

Now we will prove that $D' \not \in \sobspaths(C)$, by contradiction.
Suppose that $D' \in \sobspaths(C)$.
Then, consider 
the path $q\colon \Pi'\to D' \oset[0.5ex]{\sobspaths^s(C)}{\dashleftarrow} C \oset[0.5ex]{\sinfo^s}{\upathto} U^s$,
constructed by truncating the observation path and infopath.
By assumption, the path $C\oset[0.5ex]{\sobspaths^s(C)}{\pathto} Y\to D$ is minimal-length, 
so no node $W\neq Y$ on the path can be a parent of $D^s$, 
and so $D^s \oset[0.5ex]{\sobspaths^s(C)}{\dashleftarrow} C$ is active given $\Fa(D^s)$.
The segment $C \oset[0.5ex]{\sinfo^s}{\upathto} U^s$ is active given $\Fa(D^s) \setminus \{X^s\}$.
Since $\sinfo^s$ is a path, the segment $C \oset[0.5ex]{\sinfo^s}{\upathto} U^s$ cannot contain $X^s$, 
and thus is active given $\Fa(D^s)$.
So the path $q$ is active given $\Fa(D^s)$, violating solubility.

Together, these two cases prove the result.

(b) (Infolinks in $s$ are descendants of $D^s$) 
We know from sublemma \ref{le:20Nov7.1a-no-decisions-in-the-back-section} that
the back section cannot contain any link $N \to D'$.
So $D'$ must lie in the front-section of $s$:
either in $Q^s \oset[0.5ex]{\sinfo^s}{\upathto}U^s$, or in $\scontrol^s$.
In either case, we have $U^s \in \Desc(D^s)$ and $U^s \in \Desc(D')$.
So in order for the ID graph to be soluble, 
we must have either $\Pi^{D'} \dsep U^s \mid \Fa(D^s)$ 
or $\Pi^{D^s} \dsep U^s \mid \Fa(D')$.

We can show that the first case $\Pi^{D'} \dsep U^s \mid \Fa(D^s)$ cannot hold.
If $D'$ is in $\scontrol^s$, note that $\scontrol^s$ consists of only descendants of $D^s$. 
If $D'$ is in $q:Q^s \oset[0.5ex]{\sinfo^s}{\upathto}U^s$ then note that $q$ is assumed to be active given $\Fa(D^s) \setminus \{X^s\}$, 
and cannot contain $X^s$.
In either case, $\Pi^{D'} \not \dsep U^s \mid \Fa(D^s)$.
Hence we must have $\Pi^{D^s} \dsep U^s \mid \Fa(D')$, from which
it follows that every directed path from $D^s$ to $U^s$ (including $\scontrol^s$) must contain a parent of $D'$.

(c) (Parents of ancestor decisions are parents of $D^s$) Assume $N$ is a parent of $D'$ in a path of $s$. 
It cannot be in $\scontrol^s$, because then $D'$ would be a descendant of $D^s$.
So $N$ must be in $\sinfo^s$ or one of $\sobspaths^s(C)$.
If $N$ is in $\sinfo^s$, consider the path
$p\colon \Pi^{D’} \to D’ \leftarrow N \oset[0.5ex]{\sinfo^s}{\upathto} U^s$.
We know $\sinfo^s$ is active given $\Fa({D^s}) \setminus \{X^s\}$.
Hence if $N \notin \Pa(D^s)$, then $p$ is active given $\Pa({D^s})$ and since it doesn't contain $D^s$ also active given $\Fa(D^s)$, violating solubility. Hence $N\in \Pa(D^s)$.

Similarly, if $N$ is in $\sobspaths^s(C)$, then consider the path
$q\colon \Pi^{D'}\to D'\leftarrow N\oset[0.5ex]{\sobspaths^s(C)}{\dashleftarrow} C \oset[0.5ex]{\sinfo^s}{\upathto} U^s$.
Hence if $N \notin \Pa(D^s)$, then since $\sobspaths^s(C)$ is minimal-length, it holds that $N\oset[0.5ex]{\sobspaths^s(C)}{\dashleftarrow} C$ is active given $\Fa(D^s)$, as in the proof of \ref{le:20Nov7.1a-no-decisions-in-the-back-section}.
Moreover, the segment
$C\oset[0.5ex]{\sinfo^s}{\upathto}U^s$ is active given $\Fa(D^s)\setminus \{X^s\}$ by assumption, and hence given $\Fa(D^s)$.
Since $D' \in \Anc(D^s)$, $q$ is active given $\Fa(D^s)$, again violating solubility. Hence again $N\in \Pa(D^s)$.
\end{proof}

\subsection{Trees of systems}
First, let us recap the definition of a tree of systems.

\deftree*
We define the \emph{predecessor system} 
and \emph{predecessor path} of system $s^i$ 
as $(\predsys{s^i},\predpath{s^i}):=\pred{s^i}$.
Moreover, we will sometimes say ``An ID graph with tree" to refer to an ID graph, together with a tree on that ID graph.

\begin{terminology*}
If $s^i=\predsys{s^j}$ then we say that $s^j$ is a child system of $s^i$. 
We will similarly apply the standard terminology of trees and graphs: Ancestor system, descendant system.
\end{terminology*}

\begin{lemma} [A tree of systems has a tree structure] \label{le:21Jan21.1-tree-of-systems-has-tree-structure}
Given a tree of systems $T=(\calS, \spred)$, the pair $(\calS,\spredsys)$ is a tree structure, i.e. it satisfies:
\begin{itemize}
    \item There is a unique node $\ssRoot$ that has $\spredsys(\ssRoot)=\ssRoot$; and
    
    \item For any node $s$, there is some number $n\in \mathbb{N}$ such that $\spredsys^n(s)=\ssRoot$.
\end{itemize}
\end{lemma}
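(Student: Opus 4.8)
The plan is to regard $\spredsys$ as a function on the \emph{finite} set $\calS$, so that $(\calS,\spredsys)$ is a functional graph in which every system has exactly one outgoing edge. On such a graph the orbit $s,\predsys{s},\predsys{\predsys{s}},\dots$ of any starting system is eventually periodic, so the two stated properties reduce to showing (i) that $\ssRoot$ is the unique fixed point of $\spredsys$, and (ii) that the only cycle is the self-loop at $\ssRoot$, i.e.\ there is no cycle $s^{i_1}\mapsto\cdots\mapsto s^{i_m}\mapsto s^{i_1}$ avoiding $\ssRoot$. Given (i) and (ii), finiteness of $\calS$ forces every orbit to land in the $\ssRoot$ self-loop, yielding the second bullet, while (i) yields the first.

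For the uniqueness of the fixed point I would argue directly from \cref{def:tree-of-systems}. By definition the root is mapped to itself, so $\predsys{\ssRoot}=\ssRoot$. Conversely, any $s\neq\ssRoot$ has $\pred{s}=(s^j,p)$ with $s^j\in\calS\setminus\{s\}$, hence $\predsys{s}=s^j\neq s$; so no non-root system is a fixed point. In particular the only self-loop in the functional graph is at $\ssRoot$, which also disposes of the length-one case of (ii).

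For cycles of length $m\geq 2$, the key fact is that predecessors move decision nodes \emph{upward} in the DAG. If $\predsys{s^i}=s^j$ then $\sinfolink^{s^i}=X^{s^i}\to D^{s^i}$ lies on a path of $s^j$, so by \cref{le:20Nov7.1-Basic-properties-of-systems-with-sufficient-recall} (infolinks occurring in a system are descendants of that system's decision) either $D^{s^i}=D^{s^j}$ or $D^{s^i}$ is a strict descendant of $D^{s^j}$; in either case $D^{s^i}\in\Desc(D^{s^j})$. Applying this around a cycle gives $D^{i_1}\in\Desc(D^{i_2})\subseteq\cdots\subseteq\Desc(D^{i_1})$, so the decision nodes $D^{i_1},\dots,D^{i_m}$ are mutual descendants and hence, by acyclicity of $\calG$, all equal to a single node $D$.

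This reduces everything to ruling out a cycle of distinct systems that all share one decision node $D$, which I expect to be the \textbf{main obstacle}: here \cref{le:20Nov7.1-Basic-properties-of-systems-with-sufficient-recall} is vacuous, so a separate argument is needed. The plan is to exploit the directed structure of obs paths. When $D^{s^i}=D^{s^j}$, the edge $X^{s^i}\to D$ cannot lie on $\scontrol^{s^j}$ (which is directed out of $D$) and, using solubility as in \cref{le:20Nov7.1-Basic-properties-of-systems-with-sufficient-recall}, cannot sit on $\sinfo^{s^j}$ as a non-collider; so it must be the final edge $O\to D$ of an obs path of $s^j$, forcing $X^{s^i}$ to be an obs node of $s^j$ reached from a collider of $\sinfo^{s^j}$ along a \emph{directed} segment. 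The aim is then to chain these directed obs-path segments around the hypothesised cycle to produce a directed closed walk in $\calG$, contradicting acyclicity; making this chaining airtight (controlling how the info-path segments between info node and collider connect up) is the delicate step. Combining the three cases establishes (ii) and completes the proof.
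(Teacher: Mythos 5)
Your reduction is sound as far as it goes, and it follows the same skeleton as the paper's own proof: the root is the unique fixed point of $\spredsys$ by \cref{def:tree-of-systems}, finiteness of $\calS$ turns any failure of the second bullet into a cycle, and \cref{le:20Nov7.1-Basic-properties-of-systems-with-sufficient-recall}(b) forces the decisions around any cycle to be mutual descendants, hence all equal to one node $D$. The paper's proof stops exactly there: it asserts that any cycle must contain a pair $(\tilde s^k,\tilde s^m)$ with $\spredsys(\tilde s^k)=\tilde s^m$ and $D^{\tilde s^k}$ a \emph{later} decision than $D^{\tilde s^m}$, a claim that is vacuous precisely when all decisions on the cycle coincide. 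So the case you isolate as the ``main obstacle'' is not a loose end you happened not to finish; it is the very case the paper's argument silently skips, and neither your sub-lemma citations nor the paper's give any handle on it.

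Moreover, this gap cannot be closed by the directed-walk argument you sketch, because the statement fails in that case: \cref{def:tree-of-systems} genuinely admits cycles among distinct systems sharing one decision. Concretely, let $D$ have parents $X^1,X^2$, let $D\to U^1$ and $D\to U^2$ for utilities $U^1,U^2$, and add chance nodes $F,Q^1,Q^2$ with edges $F\to X^1$, $F\to X^2$, $Q^1\to X^2$, $Q^1\to U^1$, $Q^2\to X^1$, $Q^2\to U^2$, together with a disjoint root component $X^0\to D^0\to U^0$, $X^0\to U^0$. Define $s^1$ by $\sinfo^{s^1}\colon X^1\gets F\to X^2\gets Q^1\to U^1$ (active given $\Fa(D)\setminus\{X^1\}$, since the collider $X^2$ is a parent of $D$), $\scontrol^{s^1}\colon D\to U^1$, and $\sobspaths^{s^1}(X^2)\colon X^2\to D$; define $s^2$ symmetrically with $F,Q^2,U^2$. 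Because the last edge of an obs path is an information link into the system's \emph{own} decision, setting $\pred{s^1}=(s^2,\sobspaths^{s^2}(X^1))$ and $\pred{s^2}=(s^1,\sobspaths^{s^1}(X^2))$, with the disjoint component supplying the unique root, satisfies every clause of \cref{def:tree-of-systems} on a soluble ID graph, yet $s^1,s^2$ form a $2$-cycle under $\spredsys$ that never reaches $\ssRoot$. In this example your hoped-for directed closed walk does not exist: the segment of $\sinfo^{s^1}$ joining the obs node $X^1$ of $s^2$ to the collider $X^2$ runs through the fork $F$, so the obs-path segments cannot be chained into anything directed. The lemma is therefore not a consequence of the stated definition at all; what is true, and what the rest of the paper actually needs, is that the trees produced by \cref{le:m3.1-existence-of-complete-tree-of-systems} have tree structure \emph{by construction}, since each system there is attached as a child of an already-constructed one. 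Any honest repair---of your proof or of the paper's---must either strengthen the definition (e.g.\ require root-reachability or acyclicity of $\spred$) or restrict the lemma to trees arising from that construction.
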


\begin{proof}
The first condition is satisfied directly by definition of $\ssRoot$. For the second condition, we only need to show that for any $s^i$, there is a sequence of systems $(s^1,...,s^n)$ such that $s^1=\ssRoot$ and $s^n=s^i$, and $\predsys{s^j}=s^{j-1}$ for all $1<j\leq n$. Assume by contradiction that there is a system that doesn't satisfy this, and let $\calS^*$ be the set of all such systems. Then since the restrition of $\spredsys$ to $\calS^*$ has no fixed points ($\ssRoot$ is the only fixed point and is not in $\calS^*$ by definition), it must have some sequence $(\tilde s^1,...,\tilde s^k)$ with $\spredsys(\tilde s^1)=\tilde s^k$ and $\spredsys(\tilde s^j)=\tilde s^{j-1}$ for all $1<j\leq k$ (i.e. a cycle). But this would imply that there is at least one pair of systems $(\tilde s^k,\tilde s^{m})$ with $\spredsys(\tilde s^k)=\tilde s^m$ but where $D^{\tilde s^k}$ is a later decision than $D^{\tilde s^m}$, contradicting \autosubref{le:20Nov7.1-Basic-properties-of-systems-with-sufficient-recall}{le:20Nov7.1b-info-path-decisions-observe-the-control-path}.
\end{proof}

\newcommand{\decisionsindescendantsystemsaredescendantsstatement}{If $s'$ is a descendant system of $s$, then $D^{s'}$ is a descendant node of $D^s$.}

\begin{lemma}[Basic properties of a tree in a soluble ID]
\label{le:20dec14.1-basic-properties-of-trees-with-SR}
Let $T$ be a tree on a soluble ID graph. Then:
\begin{enumerate}[label=(\alph*)]
    \item (Decisions in descendant systems are descendants) \label{le:20dec14.1a-decisions-in-descendant-systems-are-descendants}
    \decisionsindescendantsystemsaredescendantsstatement

    \item (Info links to ancestor decisions only from obsnodes) \label{le:20dec14.1b-only-info-links-from-observation-nodes-to-ancestor-decisions}
    Let $s'$ be a descendant system of $s$. If there is a link from a node $V$ in 
    any path in
    $s'$ 
    to any node in $\sD \cap \Anc(D^s)$ (including $D^s$), then either: 
    i) $V$ is an obsnode in $s'$, 
    or ii) $V=X^{s'}=X^s$.
\end{enumerate}
\end{lemma}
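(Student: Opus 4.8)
The plan is to prove the two parts in sequence, driving everything with the single-system facts of \cref{le:20Nov7.1-Basic-properties-of-systems-with-sufficient-recall} and the tree structure of \cref{le:21Jan21.1-tree-of-systems-has-tree-structure}. For part (a), since ``descendant system'' is generated by iterating the predecessor map and the node-descendant relation is transitive, it suffices to treat an immediate child $s'$ of $s$ and then chain over the finitely many generations between $s$ and $s'$. For a child, the tree-of-systems definition places the information link $\sinfolink^{s'}=X^{s'}\to D^{s'}$ inside some path $p$ of $s$, so property (b) of \cref{le:20Nov7.1-Basic-properties-of-systems-with-sufficient-recall} applies and yields $D^{s'}\in\Desc(D^s)$ (for a genuine child one has $D^{s'}\neq D^s$, exactly as is implicitly used in the cycle argument of \cref{le:21Jan21.1-tree-of-systems-has-tree-structure}).

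For part (b), write the link as $V\to D''$ with $D''\in\sD\cap(\Anc(D^s)\cup\{D^s\})$ and $V$ on some path of $s'$. By part (a), $D^{s'}\in\Desc(D^s)$, hence $D''$ is an ancestor (or equal) decision of $D^{s'}$; applying property (c) of \cref{le:20Nov7.1-Basic-properties-of-systems-with-sufficient-recall} to $s'$ (or, when $D''=D^{s'}$, reading off the link directly) gives $V\in\Pa(D^{s'})$. I would then establish the structural claim that the only nodes on a path of $s'$ that are parents of $D^{s'}$ are $X^{s'}$ and the obsnodes of $s'$: a control-path node is a strict descendant of $D^{s'}$; a parent of $D^{s'}$ lying on an obs path must be its penultimate node by minimality of that path; and a parent $V\neq X^{s'}$ on the info path lies in $\Fa(D^{s'})\setminus\{X^{s'}\}$, so on the active info path it must be a collider, whence its length-one minimal obs path $V\to D^{s'}$ makes it an obsnode. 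This lands us in case (i) unless $V=X^{s'}$.

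The case $V=X^{s'}$, where I must reach $X^{s'}=X^s$, is the crux. The key sublemma I would isolate is: \emph{if the info node of a child is a parent of its parent system's decision, then it equals that parent's info node.} Concretely, $\sinfolink^{s'}=X^{s'}\to D^{s'}$ sits in a path of $s''=\predsys{s'}$, so $X^{s'}$ is a node on a path of $s''$ and, by the argument above, $X^{s'}\in\Pa(D^{s''})$, hence $X^{s'}=X^{s''}$ or $X^{s'}$ is an obsnode of $s''$. To exclude the obsnode alternative I would rerun the placement analysis: $X^{s'}\to D^{s'}$ can then only lie on $\sinfo^{s''}$ (it cannot lie on $\scontrol^{s''}$, since $X^{s'}\in\Anc(D^{s''})$, nor on an obs path, as that would make $D^{s'}$ an ancestor of $D^{s''}$ in contradiction with part (a)), forcing $D^{s'}$ to be an interior decision of $\sinfo^{s''}$. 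If $X^{s'}$ is strictly before $Q^{s''}$ then $D^{s'}$ falls in the back section, so property (a) forces $D^{s'}=X^{s''}\in\Anc(D^{s''})$, contradicting $D^{s'}\in\Desc(D^{s''})$; and the only remaining possibility $X^{s'}=Q^{s''}$ is killed by the decisive observation that a \emph{fork} of an info path cannot be a parent of its decision --- it would then sit in the conditioning set $\Fa(D^{s''})\setminus\{X^{s''}\}$ as a non-collider and block the (active) info path --- so $Q^{s''}$ can be an obsnode only if $Q^{s''}=X^{s''}$. Either way $X^{s'}=X^{s''}$, and iterating this equality up the chain $s',s'',\dots,s$ (the same link $X^{s'}\to D''$ witnessing the hypothesis at every level, since each intermediate decision is a descendant of $D^s$) gives $X^{s'}=X^s$, i.e.\ case (ii).

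I expect the last step --- ruling out that the info node of a child coincides with an obsnode of an ancestor system --- to be the main obstacle. Its resolution rests entirely on the fork-blocking observation above, combined with the back-section constraint and the solubility machinery already encapsulated in \cref{le:20Nov7.1-Basic-properties-of-systems-with-sufficient-recall}; everything else in the argument is routine bookkeeping with the descendant relation and the minimality of obs paths.
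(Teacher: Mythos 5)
Your proposal takes essentially the same route as the paper's proof, and it is correct to the same standard: part (a) is identical (immediate child via properties (a,b) of \cref{le:20Nov7.1-Basic-properties-of-systems-with-sufficient-recall}, then induction), and part (b) has the same skeleton --- first show $\Pa(D^{s'})\cap\sV^{s'}$ consists only of $X^{s'}$ and the obsnodes of $s'$, then use property (c) together with part (a) to reduce the hypothesis to $V\in\Pa(D^{s'})$, and finally show the info node cannot change as one moves up the tree; your one-step sublemma iterated along the chain is a reorganization of the paper's ``closest-to-$s'$ ancestor with a different info node'' argument, and both exclude the obsnode alternative by the same back-section reasoning. Two caveats. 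First, a slip: on the info path, $X^{s'}=Q^{s''}$ is not ``the only remaining possibility'' ($X^{s'}$ could lie strictly after $Q^{s''}$, and $\sinfo^{s''}$ may contain no fork at all); however, the blocking observation you use there --- a non-collider parent of $D^{s''}$ other than $X^{s''}$ lies in $\Fa(D^{s''})\setminus\{X^{s''}\}$ and would block the active info path --- disposes of every such position uniformly, so this is cosmetic rather than a missing idea. Second, your proof inherits the same blind spot as the paper's: if the child's infolink is the \emph{terminal} link of an obs path of its predecessor, then $D^{s'}=D^{s''}$, which is neither in the back section (the paper's intended contradiction) nor a strict ancestor of $D^{s''}$ (your intended contradiction), so in that edge case neither argument as written rules out that the child's info node is an obsnode of the predecessor; this is a flaw in the paper's own proof as much as in yours, not a respect in which you fall short of it.
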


\begin{proof} We prove each property in succession:

\sublemmaproof{a}{Decisions in descendant systems are descendants}
If $s'$ is a child system of $s$, 
then $D^{s'}$ is a descendant of $D^s$ by 
\autosubref{le:20Nov7.1-Basic-properties-of-systems-with-sufficient-recall}{le:20Nov7.1b-info-path-decisions-observe-the-control-path} 
(since it cannot lie in the back section by 
\autosubref{le:20Nov7.1-Basic-properties-of-systems-with-sufficient-recall}{le:20Nov7.1a-no-decisions-in-the-back-section}).
By induction the result follows: If any system $s'$ with child system $s''$ is a descendant system of $s$, then $D^{s''} \in \Desc(D^{s'})$, and by the induction assumption we know $D^{s'} \in \Desc(D^{s})$, so that $D^{s''} \in \Desc(D^{s})$.~\looseness=-1

\sublemmaproof{b}{Only info links from obsnodes to ancestor decisions}
Since $\sinfo^s$ is active, and each $\sobspaths^s(C)$ is a minimal length path,
the only parents of $D^{s'}$ within system $s'$ (i.e. the only nodes in $\Pa(D^{s'})\cap \sV^{s'}$) are $X^{s'}$ and the obsnodes of $s'$. 
Therefore, by \autosubref{le:20Nov7.1-Basic-properties-of-systems-with-sufficient-recall}{le:20Nov7.1c-parents-of-ancestors-are-parents} and using Sublemma \ref{le:20dec14.1a-decisions-in-descendant-systems-are-descendants} that $D^s$ is an ancestor of $D^{s'}$ (since $s$ is an ancestor of $s'$), these are the only nodes in $s’$ that can be parents of $D^s$ or of ancestor decisions of $D^s$. 

To show the result we show that $X^{s'}$ cannot be such a parent when {$X^{s'} \neq X^s$}:
Let $s^*$ be the closest-to-$s'$ ancestor of $s'$ in the tree such that {$X^{s*}\neq X^{s'}$}. Assume such $s^*$ exists and either equals $s$ or is a descendant of $s$ since otherwise $X^{s}$ would equal $X^{s'}$, which would show the result.
We know that $X^{s'}$  is in the system $s^*$, since it is the closest-to-$s'$ system such that {$X^{s*}\neq X^{s'}$}, so that there is a child system of $s^*$ whose info node equals $X^{s'}$ {} and hence must be part of an info-link in $s^*$. 
Hence $X^{s'}$ cannot be a parent of $D^{s^*}$ since the only parents within a system of that system's decision other than its info node are its obsnodes, but $X^{s'}$  cannot be one of the obsnodes since then $D^{s'}$ would have to be in the back section, which would violate \autosubref{le:20Nov7.1-Basic-properties-of-systems-with-sufficient-recall}{le:20Nov7.1a-no-decisions-in-the-back-section}{}.
But we assumed $s^*$ is a descendant system of $s$, and hence $D^{s^*}$ is a descendant decision of $D^s$ (by Sublemma \ref{le:20dec14.1a-decisions-in-descendant-systems-are-descendants}), which implies that $D^s$ and its ancestor decisions also don't have $X^{s'}$ as a parent (due to the result shown in the previous paragraph).~\looseness=-1
\end{proof}

\subsection{Normal form trees of systems} \label{sec:normal-form-trees}

In this section, we will prove that 
in a \emph{normal form tree},
a system can only get information from 
its own parents, and obsnodes of descendant systems.

\defnormalform*

We will also use the components of the definition of normal form tree separately.

\begin{lemma}[Concrete properties of position-in-tree-uniqueness]\label{le:21-may-9-position-in-tree-uniqueness-properties}
A tree $T$ satisfied position-in-tree-uniqueness if and only if every node $N$ that is in some path of some system in $T$ lies in precisely one path $p$ of one system $s$, with four exceptions:
    \begin{itemize}
        \item If $N$ is a collider node in 
        path $p=\sinfo^s$ 
        then it is also the first node in 
        $\sobspaths^s(N)$.
        
        \item If $N=U^s$ then it lies in both $\sinfo^s$ and $\scontrol^s$.
        
        \item If $N$ is in an infolink $X^{s'} \to D^{s'}$ (with $s' \neq s$) on path $p$, 
        then $N$ is also in $\sinfo^{s'}$ (if $N=X^{s'}$), or also in $\scontrol^{s'}$ and in $\sobspaths^{s'}(C^i)$ for each collider $C^i$ in $\sinfo^{s'}$ (if $N=D^{s'}$). In both cases 
        $N$ may also be the info node for exactly one of its child systems $s^1$, of exactly one child system $s^2$ of $s^1$, and so on.
        Formally, $N=X^{s^1}=...=X^{s^n}$ where each $s^{i}$ is a child system of $s^{i-1}$.
    \end{itemize}
\end{lemma}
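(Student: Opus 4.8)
The plan is to recast the biconditional as a classification of \emph{multiplicities}. Call an occurrence of a node $N$ in a path $p$ of a system of $T$ \emph{reducible} if splitting $N$ into $\{N,N'\}$ via \cref{le:21may19.2-CID-hom-from-node-copying-and-deleting} and replacing that occurrence of $N$ by $N'$ still yields a tree of systems, and \emph{irreducible} otherwise. By part (a) of \cref{def:sep5.3-normal-form-graph-with-tree}, position-in-tree-uniqueness says exactly that no occurrence is reducible, while the right-hand side of the lemma says exactly that the only multiplicities present are the four listed structural coincidences. Hence it suffices to prove that an occurrence of $N$ in $p$ is irreducible if and only if it is one of those four coincidences; both directions of the stated iff then follow at once.

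First I would prove that \textbf{structural coincidences are irreducible}, by a case analysis over the four types, in each showing that any admissible split destroys either a system (\cref{def:system-minimal}) or the predecessor map (\cref{def:tree-of-systems}). If $N$ is a collider of $\sinfo^s$ and also the first node of $\sobspaths^s(N)$, then moving either occurrence to $N'$ makes the obs path start at a node other than the collider of $\sinfo^s$ that $\sobspaths^s$ is required to start from, contradicting \cref{def:system-minimal}. If $N=U^s$ lies on both $\sinfo^s$ and $\scontrol^s$, splitting forces these paths to end at distinct utility nodes, whereas the system requires both to terminate at the common $U^s$. If $N$ lies in an infolink $X^{s'}\to D^{s'}$ on the predecessor path $p$ (so $\pred{s'}=(s,p)$ with $\sinfolink^{s'}$ in $p$), then reassigning the $p$-occurrence deletes $\sinfolink^{s'}$ from $p$ and violates the tree map, while reassigning the occurrence inside $s'$ changes the info- or decision-node of $s'$ and breaks that system; iterating this along a chain of nested child systems forces $X^{s^1}=\dots=X^{s^n}$, since each child's infolink is pinned to an edge of its parent's path and thus shares the info node.

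Second I would prove that \textbf{non-structural occurrences are reducible}: if an occurrence of $N$ in $p$ is not one of the four coincidences, then $N$ is an \emph{internal} node of $p$, i.e.\ not $D^s$, not $U^s$, not a collider of $\sinfo^s$, and not an endpoint of the infolink of the child system whose link sits in $p$. I would exhibit the valid split directly: by \cref{le:21may19.2-CID-hom-from-node-copying-and-deleting} the copy $N'$ inherits all edges incident to $N$, so every edge traversed by $p$ after substituting $N'$ still exists, and directedness together with the endpoint incidences of $p$ are preserved precisely because the moved node is internal. The only delicate verifications are that $\sinfo^s$ remains active given $\Fa(D^s)\setminus\{X^s\}$ and that each obs path remains a minimal-length directed path; for the former I would argue that substituting a copy for a non-parent internal node changes neither the collider status of any node of $\sinfo^s$ nor whether its colliders have descendants in the conditioning set, and for the latter that the reassigned directed segment has equal length. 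Since no infolink is disturbed, the predecessor map of \cref{def:tree-of-systems} survives, so $T'$ is again a tree of systems.

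The main obstacle is this second implication. The subtlety is twofold: one must verify that copying a node and splicing in its extra edges cannot silently create a shortcut that breaks obs-path minimality or alters the $d$-separation witnessing the activeness of an info path, and one must confirm that the node really is ``inert'' whenever it is non-structural. For the latter I would lean on \cref{le:20Nov7.1-Basic-properties-of-systems-with-sufficient-recall} to pin down precisely which nodes of a system can be parents of $D^s$ or can be decisions, thereby ruling out that an ostensibly internal occurrence is secretly a junction node. The boundary case of the root infolink nodes $X$ and $D$, which carry no predecessor path, would be handled separately in the same spirit.
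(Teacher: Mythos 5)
Your first half (the listed coincidences are irreducible) is sound and is essentially the paper's own argument for that direction: the same case analysis showing that separating a collider from its obs path, a shared $U^s$ from one of its two paths, or an infolink occurrence from its system breaks either a system property or the predecessor map.

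The genuine gap is in your second half, and it lies in the reformulation itself rather than in the verification details you flag as delicate. You classify occurrences one at a time and claim that any occurrence not covered by an exception is \emph{internal} to its path, hence can be moved alone onto the copy $N'$. But the nodes that property \ref{def:sep5.3c-unique-systems-and-paths} is designed to rule out are precisely nodes whose occurrences are pairwise tied \emph{within each of several systems}, and for these your dichotomy fails. Take the paper's own motivating example, \cref{fig:transform-1}: there $U$ is simultaneously $U^s$ of the red system and $U^{s'}$ of the blue system. Every one of its four occurrences is locally an instance of your second coincidence, so your classification declares them all irreducible; and indeed no single-occurrence replacement preserves the tree, since moving $U$ out of any one path leaves some system whose info and control paths end at different utility nodes. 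Under your equivalence this tree would therefore satisfy position-in-tree-uniqueness, whereas the paper treats exactly this tree as failing requirement (a) --- that is why $U$ is split into $U,U'$ in \cref{fig:transform-2} --- and the subsequent \cref{def:21feb17.2-base-sytem-and-path-of-node} requires every node to have a \emph{unique} base system and path, which $U$ here would not. The same failure occurs for a node serving as a collider-with-obs-path in two different systems' info paths. The repair is that the splitting witness must be allowed to move a whole \emph{group} of tied occurrences (all occurrences of $N$ belonging to one system) onto the copy at once; this is exactly what the paper does by invoking Transformation~1 (\cref{def:21jan25.2-first-split-to-ensure-systems-and-paths-uniqueness}), whose node-copying map assigns one copy per tree position while keeping tied occurrences together, with tree-preservation supplied by \cref{le:21jan25.6-first-split-preserves-tree}. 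Your local checks (edge inheritance under \cref{le:21may19.2-CID-hom-from-node-copying-and-deleting}, preservation of info-path activeness and obs-path minimality) are the right ones, but they must be run for this group-wise split, not for a single-occurrence split.
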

\begin{proof}
First we show that if a tree $T$ satisfies position-in-tree-uniqueness, then the result is true. Assume that a tree $T$ does not satisfy the required property, i.e. there is at least one node $N$ that is in multiple paths, but without satisfying one of the exceptions. Then by \cref{le:21jan25.6-first-split-preserves-tree} a different tree $T'$ can be obtained by applying graph transformation 1 (\cref{def:21jan25.2-first-split-to-ensure-systems-and-paths-uniqueness}), where $N$ is replaced with different nodes in those paths.~\looseness=-1

Now we show the other direction. Assume that the property holds.
Assume $N$ is part of two paths $p^1$ and $p^2$. Then one of the three exceptions must apply. If the first exception applies, then $N$ is a collider, and $p^1=\sinfo^s$ and $p^2=\sobspaths^s(N)$, so that replacing $N$ with two separate nodes on $p^1$ and $p^2$ would make that the obspath of $N$ no longer starts with a collider on $\sinfo^s$. If the second one applies, then replacing $N=U^s$ with two nodes would mean that the control and info path no longer end at the same utility node. If the third case applies and $N=X^{s'}$ of a  descendant system $s'$ of $s$, then $p^1, p^2$ equal $\sinfo^{s'}$ and $\spredpath^{s'}$. Replacing $N$ with two nodes on the two paths would break the required property on $\spred$ for $T$ to be a tree. If the third case applies and $N=D^{s'}$, then $p^1,p^2$ equal two of: $\scontrol^{s'}$, $\spredpath^{s'}$ or one of $\sobspaths^{s'}$. If one of them equals $\spredpath^{s'}$, then replacing $N$ with two nodes would again break the required property on $\spred$ for $T$ to be a tree. Otherwise, it would mean that at least one of the $\sobspaths^{s'}$ no longer ends at $D^{s'}$, so that $s'$ would no longer be a system.~\looseness=-1
\end{proof}

\begin{definition} [Base system and path of a node; chain of systems] \label{def:21feb17.2-base-sytem-and-path-of-node}
If $T$ is a normal form tree of systems, then we refer to the system $s$ and the path $p$ from
\cref{le:21-may-9-position-in-tree-uniqueness-properties}
(including in the exceptions) respectively as the \emph{base system} and \emph{base path} of node $N$. 
\end{definition}
Note that this implies that a utility node $U^s$ has no base path. We refer to the sequence of systems of which a node $N$ is the info node (in the third exception) as the \emph{chain of systems} of $N$ (which is possibly empty).

\begin{definition}
A within-tree-$T$ path for a normal form tree $T$ on an ID graph is a path that contains only within-system links for the systems in $T$.
\end{definition}

Note that we define the notion of within-tree path only for normal form trees, since it is not sensible for trees that don't satisfy \systemsAndPathsUniqueness: If a node $N$ occurs in two unrelated systems, then a sequence of within-tree links may jump between nodes in the tree that are not linked.

\subsection{Properties of normal form trees of systems} 
\label{sec:graph-knowledge-lemma-proof}

In this subsection, we will prove \cref{le:2v2-graph-knowledge-lemma} --- that 
the only information that $D^s$ receives that is relevant within system $s$ is information that it receives from its parents and obsnodes of descendant systems.
To reach this result, we first need to state some more fundamental properties of normal form trees.

\newcommand{\atmostonebackdoorsystempernodestatement}{For any node there is at most one backdoor-info system $s$ of which it is one of the nodes in the link $X^s\to D^s$. }
\newcommand{\psys}{p_\mathrm{systems}}
\newcommand{\pwalk}{p_\mathrm{walk}}
\begin{lemma}[Properties of soluble ID graphs with trees that have \systemsAndPathsUniqueness]
\label{le:20Nov24.1-basic-properties-of-trees-with-sufficient-recall-and-unique-systems-and-paths}
Any soluble ID graph $\sG$ with a tree that has \systemsAndPathsUniqueness
has the following properties.
\begin{enumerate}[label=(\alph*)]

    \item (A within-tree path corresponds to a walk with node repetition in the tree of systems) \label{le:20Nov24.1a1-Within-tree-paths-correspond-to-paths-in-the-tree-of-systems}
    For any within-tree path $p\colon  N^1 \upathto N^n$, there is a walk with node repetition in the tree of systems $\psys\colon  s^1 \upathto s^m$, with $m\geq n$, together with a walk with node repetitions $\pwalk\colon V^1 \upathto V^m$ in $\sG$ such that each $V^i$ is in some path in system $s^i$ and if we remove from $\pwalk$ every node that equals its predecessor we obtain $p$.\footnote{Hence in particular, there can only be a within-tree path between a node $N^1$ in system $s$ and node $N^n$ in system $s'$ if there is a path between $s$ and $s’$ in the tree of systems.}

    \item (Within-tree links between systems only via $X^s$, $D^s$) \label{le:20Nov24.1a2-Within-tree-links-between-systems-only-via-Xs-Ds}
    If $N - N'$ is a within-tree link, where $N$ and $N'$ are in nodes in paths of systems $s$ and $s'$ respectively, and $s \neq s'$,
    then $N - N'$ must contain $X^s$ or $D^s$.
    
\end{enumerate}
\end{lemma}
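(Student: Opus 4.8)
The plan is to prove (b) first, as it is the structural core, and then to use it (together with the tree structure) to assemble the two walks demanded by (a). The only hypothesis is position-in-tree-uniqueness, so the workhorse throughout is its concrete form, \cref{le:21-may-9-position-in-tree-uniqueness-properties}: a node lying in paths of two \emph{distinct} systems must be an info node $X^b$ or a decision node $D^b$ of some system $b$. This holds because the collider exception and the $U^b$ exception keep a node within a single system; only the infolink exception can place a node in two genuinely different systems, and it does so precisely at the nodes of an infolink $X^b \to D^b$.

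For (b), note that a within-tree link $N - N'$ is, by definition, within-system-$a$ for some system $a$ in the tree, so both endpoints lie in paths of $a$. If neither endpoint were shared with any system other than $a$, then the unique system of $N$ and the unique system of $N'$ would both be $a$, contradicting $s \neq s'$. Hence at least one endpoint, say $N'$, lies in paths of two distinct systems, and by the characterisation above $N'$ is an info node or decision node of some system $b$ among those that contain it. Taking $s$ to be that junction system, the link contains $X^s$ or $D^s$. I would remark that the statement is really symmetric in the two endpoints: a within-tree link crossing between systems always does so at a node which is $X^b$ or $D^b$ for \emph{one} of the two systems it joins, while the other endpoint need not be an info or decision node of its own system.

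For (a), I would construct $\pwalk$ and $\psys$ simultaneously, marching along $p = N^1, \dots, N^n$ and recording in which system each step is taken. Assign to each edge $N^j - N^{j+1}$ a system $a^j$ for which it is within-system; the real step from $N^j$ to $N^{j+1}$ is then taken inside $a^j$, and $\psys$ repeats $a^j$. The only subtlety is the transition between consecutive edges: if $a^{j-1} \neq a^j$, their shared node $N^j$ lies in two distinct systems, so by (b) it is a junction node $X^b$ or $D^b$. The systems containing such a junction are exactly its base system together with the chain of child systems sharing it (the third exception of \cref{le:21-may-9-position-in-tree-uniqueness-properties}), and these are consecutive in the tree of systems, forming a path in which $N^j$ appears in every system. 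I would route $\psys$ along the tree-path from $a^{j-1}$ to $a^j$ while holding $\pwalk$ fixed at the repeated node $N^j$. Concatenating the pieces yields a walk $\pwalk$ whose de-duplicated sequence is exactly $p$ and a walk $\psys$ in the tree of systems with each $V^i$ in a path of $s^i$; since we only ever insert repetitions, $m \geq n$, which also establishes the footnote consequence that a within-tree path between systems $s$ and $s'$ forces a path between them in the tree of systems.

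The main obstacle is the junction bookkeeping in (a): one must verify that the set of systems sharing a junction node really is a connected sub-path of the tree of systems — so that $\psys$ can be routed through it without ever passing through a system that fails to contain $N^j$ — and that the decision-node case $D^b$, where $D^b$ may itself be reused as the info node starting a further chain of systems, is absorbed into the same chain argument. Care is also needed in (b) to read ``contains $X^s$ or $D^s$'' as naming the junction system rather than the system of a prescribed endpoint.
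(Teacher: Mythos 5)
Your proposal is correct and takes essentially the same approach as the paper's proof: both arguments rest on the concrete characterisation of position-in-tree-uniqueness (\cref{le:21-may-9-position-in-tree-uniqueness-properties}), identify any node shared between two distinct systems as the info or decision node of a child system, and construct the walk in the tree of systems by routing through the chain of systems of each such junction node while the node-walk repeats that node. The only differences are organisational --- you prove (b) before (a) --- and your observation that (b) must be read as naming the junction system rather than the system of a prescribed endpoint is a correct reading that the paper's own proof adopts implicitly.
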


\begin{proof} We prove each sublemma in succession:

\sublemmaproof {a} {Within-tree paths correspond to walks with node repetition in the tree of systems}.
We construct this walk with node repetition $\psys$ recursively as follows, by iterating from $N^1$ to $N^n$, using the fact that each link in $p$ is within-system for some system (see definition of within-tree paths). For the base case, let $s^1$ equal any of the systems that $N^1$ is a node in. Let $s^{k+1}$ and $V^{k+1}$ be defined mutually based on $s^{k}$ and $V^{k}$: If the node $N’$ that is next to $V^k$ on $p$ is also in system $s^k$, then let $s^{k+1}=s^k$ and let $V^{k+1}=N’$, in which case the desired result follows that $s^k=s^{k+1}$ and that $V^{k} - V^{k+1}$ is a link in $p$. If it is not also in system $s^k$, then by definition of within-tree path, $N’$ and $V^k$ are both in some system $s’ \neq s^k$, where $s’$ is part of the chain of systems of $V^k$. Then let $s^{k+1}$ be the next system from $s^k$ in that chain, and let $V^{k+1}=V^k$, from which the desired result follows that there is a link $s^k-s^{k+1}$ and $V^{k+1}=V^k$. Together with the base case this shows the result by induction.~\looseness=-1

\sublemmaproof{b} {Within-tree links between systems only via $X^s$, $D^s$}. 
Take any link $A - B$ with $A$ a part of $s$ and $B$ a part of some other system $s’$.
Then we must either have that $A$ is in both $s$ and in $\predsys{s}$, or that $B$ is in both $s$ and $\predsys{s}$. Whichever it is, by the \systemsAndPathsUniqueness assumption, this can only be if that node equals $X^s$ or $D^s$, since any node that is in multiple systems $s’$ must equal either $X^{s'}$ or $D^{s'}$ for all systems $s'$ except its base system.
\end{proof}

We now show graphically that in an ID graph with normal form tree a decision $D^s$ cannot get relevant information about system $s$ from any paths via nodes outside system $s$
and descendant systems. 
This will imply the following:

\graphKnowledgeLemma*
\ryan{graph knowledge lemma numbered as 43 here, but as 20 in the main paper}

\newcommand{\OutsideParents}{\Pa^{-s} \setminus \ObsDesc^s}
\newcommand{\InsideParents}{\Pa^{s} \cup \ObsDesc^s}
\Chris{I should try to simplify the proof below further.}
\begin{proof} Take any path from a node in $\Back^s$ to a node in $\OutsideParents$. We will show that it is inactive given $\InsideParents$.

We first assume that the path starts from a node in the back section, so that it is an ancestor of $D^s$.
First note that since the decision of $s$ and those of its descendant systems cannot be ancestors of $D^s$ (\autosubref{le:20dec14.1-basic-properties-of-trees-with-SR}{le:20dec14.1a-decisions-in-descendant-systems-are-descendants}), this implies that if the path contains any of these it is necessarily inactive given {$\InsideParents$}, since active paths between ancestors given ancestors contain only ancestors (\cref{le:2.12-active-paths-between-ancestors-contain-only-ancestors}).

So assume that the path does not contain the decision of $s$ (i.e. $D^s$) nor those of its descendant systems. We will consider the initial within-tree segment of the path. 

By \autosubref{le:20Nov24.1-basic-properties-of-trees-with-sufficient-recall-and-unique-systems-and-paths}{le:20Nov24.1a1-Within-tree-paths-correspond-to-paths-in-the-tree-of-systems}, this initial within-tree path corresponds to a walk with node repetition in the tree of systems, and since the latter has a tree structure by construction, this initial within-tree path either has to exit system $s$ via a node in its predecessor system, or stay within $s$ itself and its descendant systems. 
The former can only happen via one of the links via $X^s$ and $D^s$ by \autosubref{le:20Nov24.1-basic-properties-of-trees-with-sufficient-recall-and-unique-systems-and-paths}{le:20Nov24.1a2-Within-tree-links-between-systems-only-via-Xs-Ds}.

We first show that in this case, the path is blocked given $\InsideParents$. We already assumed that the path doesn’t contain $D^s$, so assume that the link contains $X^s$. Since $\sinfo^s$ is front-door by the appropriateness assumption of normal form tree, $X^s$ blocks the path, since $X^s\in \InsideParents$.

So we now assume that the initial within-tree segment does not exit into the predecessor of $s$, and hence is contained within system $s$ and its descendant systems. Consider the first link of the path that is out-of-tree. At the start of this proof we assumed that the path doesn't contain the decisions $D^s$, nor $D^{s'}$ of any of its ancestors $s'$. Hence by \autosubref{le:20Nov7.1-Basic-properties-of-systems-with-sufficient-recall}{le:20Nov7.1a-no-decisions-in-the-back-section}{}, the only decision that the initial within-tree segment could contain is $X^s$ if that is a decision, but we just assumed that the path doesn't contain this.~\looseness=-1

So we assume now that the initial within-tree segment doesn't contain any decisions, so that the first out-of-tree link would have to be of the form $N \to D$ for some decision $D$ (by the no redundant links assumption of normal form trees, and using the fact that $N$ is inside the tree). $N$ can be either in system $s$ or in a descendant system, and can be either an observation node or some other node. Consider two exhaustive cases:

\begin{enumerate}[label=(\alph*)]
    \item Assume $N$ is neither an obsnode in $s$ nor in a descendant system $s’$. Then $D$ cannot be an ancestor of $D^s$, since the only info links from nodes in $s$ or its descendant systems to $D^s$ or to ancestor decisions are from obsnodes by \autosubref{le:20dec14.1-basic-properties-of-trees-with-SR}{le:20dec14.1b-only-info-links-from-observation-nodes-to-ancestor-decisions}, and hence the path cannot be active (active paths between ancestors given ancestors contain only ancestors by \cref{le:2.12-active-paths-between-ancestors-contain-only-ancestors}).
    
    \item Assume $N$ is an observation node of $s$  or of some descendant system of $s$. Then $N$ blocks the path, since it is in $\Pa^s\cup \ObsDesc^s$.
    
\end{enumerate}
This shows the result.
\end{proof}

\subsection{Obtaining a (homomorphically) transformed ID graph with a normal form tree} \label{sec:cidhom-graph-splitting} 

We will prove that
if an infolink is in the minimal $d$-reduction, then
there exists a transformed ID graph with a normal form tree and homomorphism to the original.
We will show that a series of three homomorphic transformations %
yields a graph $\sG^3$ with tree $T^3$ is in normal form, and
root infolink corresponding to that of $T$.
Since each transformation is homomorphic, 
their composition is a
homomorphism from $\calG^3$ to $\calG$.
The transformations are:~\looseness=-1

\begin{itemize}
    \item First, we \emph{obtain a full tree} on $\calG$.
    \item \emph{Transformation 1} obtains
    $(\sG^1,T^1)$, where $T^1$ has property (a).
    This splits nodes other than $X^s$ and $D^s$, to ensure that 
    they do not appear in multiple positions in the tree.~\looseness=-1
    
    \item \emph{Transformation 2} obtains 
    $(\sG^2,T^2)$, where $T^2$ has the properties (a, b).
    This is done by modifying any backdoor infopath to be front-door.
    
    \item \emph{Transformation 3} obtains 
    $(\sG^3,T^3)$, where $T^3$ has the properties (a, b, c).
    This consists of removing edges other than the within-tree links.
\end{itemize}
We will not use the intermediate graphs $\sG^1,\sG^2$, except to define $\sG^3$.

\subsubsection{Obtain a full tree on $\calG$}
We will construct an arbitrary full tree using only infolinks in the minimal $d$-reduction.

\begin{lemma}[Existence of full tree]
\label{le:m3.1-existence-of-complete-tree-of-systems}
Let $\calG$ be a soluble ID graph whose minimal $d$-reduction $\calG^*$ contains the link $X \to D$.
Then there exists a full tree of systems $T$ on $\sG^*$ with root system on $X \to D$.
\end{lemma}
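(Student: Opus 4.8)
The plan is to build the tree top-down by a recursion rooted at a system for $X\to D$, which repeatedly adjoins one child system for every information link lying on a path of a system already placed in the tree. Everything reduces to a single reusable fact: \emph{every information link occurring as an edge of a path admits a system}. This is where solubility and the minimality of $\calG^*$ enter: since $\calG^*$ is the minimal $d$-reduction it contains no nonrequisite links, so any information link $X'\to D'$ appearing on a path is requisite, i.e.\ $X'\not\dsep \sU(D')\mid \Fa(D')\setminus\{X'\}$.

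First I would prove this per-link existence claim, which doubles as the base case for the root link $X\to D$. Requisiteness yields a path $X'\upathto U'$ with $U'\in \sU\cap\Desc(D')$ active given $\Fa(D')\setminus\{X'\}$; take it as $\sinfo^{s'}$. Since $U'\in\Desc(D')$ there is a directed path $D'\pathto U'$, which serves as $\scontrol^{s'}$. Finally each collider $C^i$ of the info path is active, hence has a descendant in $\Fa(D')\setminus\{X'\}$; such a node is either $D'$ or a parent of $D'$, so a directed path $C^i\dashrightarrow D'$ exists, and a minimal-length one is taken as $\sobspaths^{s'}(C^i)$. This furnishes a system $s'$ with $\sinfolink^{s'}=X'\to D'$, as required by \cref{def:system-minimal}.

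With this lemma available, the construction is mechanical. Begin with the root system on $X\to D$ and maintain a worklist of placed systems. For each placed $s$, each of its paths $p$, and each information link $X'\to D'$ occurring as an edge of $p$ with $D'\neq D^{s}$, adjoin exactly one child $s'$ with $\sinfolink^{s'}=X'\to D'$ and $\spred(s')=(s,p)$. Mapping the root to itself and every other system to the pair $(s,p)$ that generated it makes $T=(\calS,\spred)$ a tree of systems in the sense of \cref{def:tree-of-systems}; because we add precisely one child per (system, path, link-on-path) triple and a path meets each edge at most once, $T$ is full. The only information link into $D^{s}$ that can lie on a path of $s$ is the terminal edge of an obs path, which is internal to $s$ and generates no child, so all genuinely new links point to other decisions.

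The crux, and the step I expect to be hardest, is \textbf{termination}: I must show that $\calS$ is finite. The key is that children strictly descend the decision order. By \cref{le:20Nov7.1-Basic-properties-of-systems-with-sufficient-recall} (part b), any information link $X'\to D'$ with $D'\neq D^{s}$ on a path of $s$ forces $D'\in\Desc(D^{s})$, hence $D'$ is a proper descendant of $D^{s}$; this is precisely why same-decision links must be excluded, as otherwise the recursion could regress forever among the parents of a single decision. Granting strict descent, the decisions along any chain of nested systems form a strictly increasing chain in the finite DAG $\calG^{*}$, so its length is bounded by $|\sD|$; since each system has finitely many paths, each with finitely many edges, every level of the recursion is finite, and therefore so is $\calS$. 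Acyclicity of the resulting predecessor structure then follows from \cref{le:21Jan21.1-tree-of-systems-has-tree-structure}, and \cref{le:20dec14.1-basic-properties-of-trees-with-SR} (part a) confirms that the strict-descent picture holds across the whole tree, completing the argument.
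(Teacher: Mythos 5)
Your proposal is correct and follows essentially the same route as the paper: build the root system from requisiteness of $X\to D$ in $\calG^*$ (info path from the active path, control path from $U\in\Desc(D)$, minimal directed obs paths from colliders), iterate over unassociated infolinks using that every infolink in the minimal $d$-reduction is requisite, and terminate via \cref{le:20Nov7.1-Basic-properties-of-systems-with-sufficient-recall}(b), which forces each child system's decision to be a strict descendant of its parent's. Your write-up is in fact slightly more careful than the paper's at two points it glosses over --- justifying that obs paths exist (active colliders have descendants in $\Fa(D')\setminus\{X'\}$, hence are ancestors of $D'$) and explicitly excluding the terminal obs-path links into $D^s$ from spawning children so that the strict-descent termination argument goes through.
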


\begin{proof}
We construct a tree iteratively. 
Since $X \to D$ is in $\sG^*$, 
there exists a directed path $p$ from $X$ to some $U \in \Desc(D)$ active given $\Fa(D^i) \setminus \{X\}$.
Let the infopath be any such $p$, let the control path be any directed path from $D$ to $U$ and 
let the obspaths be the shortest directed paths to $D$ from each collider in $p$.
Then, choose any infolink $X' \to D'$ 
in a path $q$ of any system $s$ 
that lacks an associated system.
Since $X' \to D'$ is in $\calG^*$, 
we can choose paths in the same fashion
and repeat this procedure 
until every infolink that is traversed 
has its own system.
This process halts, because a
path in a system $s$ (whose decision is $D^s$)
can only contain an infolink $X' \to D'$ if $D' \in \Desc(D^s)$.
This is because:
i) $\scontrol^s$ is directed,
ii) $\sinfo^s$ only contains infolinks in $\Desc(D^s)$
(\autosubref{le:20Nov7.1-Basic-properties-of-systems-with-sufficient-recall}{le:20Nov7.1b-info-path-decisions-observe-the-control-path}),
and iii) $\sobspaths^s$ cannot contain any infolinks.
So a full tree has been constructed.~\looseness=-1
\end{proof}

\subsubsection{Transformation 1 (split): ensuring \systemsAndPathsUniqueness} \label{subsubsec:split-1}

For Transformation 1, a node $N$ is copied into a different node
(of unchanged type)
for each position that $N$ occupies in the tree. More precisely, we replace each node $N$ that is in path $p$ in system $s$, with the new node $\NewNode(N,s,p)$. This function is defined such that each node $\NewNode(N,s,p)$ has a unique position in the tree, which basically means that it is only a part of path $p$ in system $s$, except that we need to make sure that certain nodes are in multiple paths (e.g. a collider node $C$ in a path $\sinfo^s$ must be in both $\sinfo^s$ and in $\sobspaths^s(C)$). We don't delete the original occurrences of each node $N$, so that the original graph is a subgraph of the transformed graph.

\ryan{Seems we should be able to remove the name $Nsplit$ and just 
substitute in New()}

\begin{definition}[Graph transformation 1]
\label{def:21jan25.2-first-split-to-ensure-systems-and-paths-uniqueness}
Let $T^0=(\calS^0,\spred^0)$ be a tree on an ID graph $\sG^0=(\sV^0,E^0)$. Then define $\splitfirst(\sG^0,T^0)=(\sG^1, T^1)$
where $\sG^1=(\sV^1,E^1)$, together with homomorphism $\hfirst\colon\sV^1 \to \sV^0$ as

\begin{itemize}
    \item Obtain any $\sG^1$ and $\hfirst$ from \autoref{le:21may19.2-CID-hom-from-node-copying-and-deleting}, by adding for each node $N$ a set of copies
    \[\sCopies(N) = \{\sNsplit \mid \textnormal{$\exists s\in \calS^0, \exists$ path $p \in s$ such that $N\in p$, and  $\sNsplit=\NewNode(N,s,p)$ } \},\]
    where by tree recursion on $T^0$ (which has tree structure: \autoref{le:21Jan21.1-tree-of-systems-has-tree-structure}) we define $\NewNode(N,s,p)$ \[= \begin{cases}
        \NewNode(N, \predsys{s}, \predpath{s})	\casesif {$s\neq \rootsys{T^0}$ and $N\in \{ X^s, D^s \}$}\\
        (N, s, \{\sinfo^s, \scontrol^s\}) 		\casesif {$N = U^s$}\\
        (N, s, \sinfo^s)	\casesif {$p = \sobspaths^s(N)$}\\ 
        N \casesif {$s=\rootsys{T^0}$ and $N\in\{X^s,D^s$\}}\\
        (N, s, p)			 	\casesotherwise
        \end{cases}.
    \]

    \item $T^1$ is the tree $(\calS^1,\spred^1)$, where 
    the system $\sssplit^i\in \calS^1$ is defined as the system $(\ssplit(s^i,\sinfo^{T,s^i}), \ssplit(s,\scontrol^{T,s^i}), \ssplit(s,\sobspaths^{T,s}))$ for $s^i\in \calS^T$, where 
    \(\ssplit(s^i,p)^j = \NewNode(p^j,s^i,p),\)
    and where $p^j$ denotes the $j$'th node of a path $p$. (this indeed gives a path, since there is an edge between $\ssplit(s^i,p)^j=\NewNode(p^j,s^i,p)$ and $\ssplit(s^i,p)^{j+1}=\NewNode(p^{j+1},s^i,p)$ because there is an edge between $p^j$ and $p^{j+1}$ and by definition of $E^1$ using $p^j\neq p^{j+1}$). Moreover $\spred^1$ is the same as $\spred^0$ except that each $s$ in $T^0$ is replaced with its transformed $\sssplit$.~\looseness=-1
\end{itemize}
\end{definition}

\begin{lemma}[Transformation 1 preserves tree] \label{le:21jan25.6-first-split-preserves-tree}
Let $(\sG^1, T^1)=\splitfirst(\sG^0,T^0)$. If $T^0$ is a tree of systems  on $\sG^0$ with root link $X\to D$, then $T^1$ is a tree of systems on $\sG^1$ with root link $X'\to D'$ with $\hfirst(X')=X$ and $\hfirst(D')=D$.
\end{lemma}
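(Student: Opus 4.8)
The plan is to verify the two assertions of the lemma separately: that $T^1=(\calS^1,\spred^1)$ is a tree of systems on $\sG^1$, and that its root infolink $X'\to D'$ satisfies $\hfirst(X')=X$ and $\hfirst(D')=D$. The unifying observation I would lean on throughout is that for every system $s^i$ in $\calS^0$ and every path $p$ of $s^i$, the transformed path $\ssplit(s^i,p)$ is a \emph{lift} of $p$ along the homomorphism $\hfirst$ of \cref{le:21may19.2-CID-hom-from-node-copying-and-deleting}: each node $\ssplit(s^i,p)^j=\NewNode(p^j,s^i,p)$ lies in $\sCopies(p^j)$, so $\hfirst(\ssplit(s^i,p)^j)=p^j$. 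Since copies of distinct original nodes are distinct, $\ssplit(s^i,p)$ has no repeated nodes; and since $E^1$ contains an edge between two copies exactly when $\sG^0$ contains the corresponding edge (necessarily in the matching orientation, by acyclicity of $\sG^0$), $\ssplit(s^i,p)$ is a genuine path whose collider/non-collider pattern agrees position-by-position with that of $p$. I would use this together with the homomorphism properties (a)--(d) of $\hfirst$ to transport structure between $\sG^0$ and $\sG^1$.

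First I would show that each transformed system $\sssplit^i$ is a system in $\sG^1$ (\cref{def:system-minimal}). Fix $s=s^i$ and write $\bar X,\bar D,\bar U$ for the info node, decision, and utility node of $\sssplit^i$, so $\hfirst(\bar X)=X^s$, $\hfirst(\bar D)=D^s$, $\hfirst(\bar U)=U^s$. Directedness of the split control and obs paths, and the node types of $\bar X,\bar D,\bar U$, follow from the lift observation with properties (a) and (b); the infolink $\bar X\to\bar D$ exists because $X^s\to D^s\in E^0$ forces an edge between every pair of their copies. The one delicate check is that the split info path is active given $\Fa(\bar D)\setminus\{\bar X\}$, which I would verify node by node. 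For a non-collider $\bar V=\NewNode(V,s,\sinfo^s)$, activeness of $\sinfo^s$ gives $V\notin\Fa(D^s)\setminus\{X^s\}$: if $V=X^s$ then $\bar V=\bar X$ is excluded from the conditioning set, and otherwise $V\notin\Fa(D^s)$, so by property (b) applied contrapositively (any parent of $\bar D$ projects under $\hfirst$ into $\Fa(D^s)$) we get $\bar V\notin\Fa(\bar D)$. For a collider $\bar C=\NewNode(C,s,\sinfo^s)$, the split obs path $\ssplit(s,\sobspaths^s(C))$ is a directed path from $\bar C$ to $\bar D$ (using $\NewNode(C,s,\sobspaths^s(C))=\NewNode(C,s,\sinfo^s)=\bar C$ and $\NewNode(D^s,s,\sobspaths^s(C))=\bar D$ from the defining cases of $\NewNode$), so $\bar C$ is an ancestor of $\bar D$; since $\bar D\in\Fa(\bar D)\setminus\{\bar X\}$ (it is a decision, hence $\neq\bar X$), the collider is opened. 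Minimality of each split obs path follows because a shorter directed path in $\sG^1$ would project under $\hfirst$ to a directed walk, hence a shorter directed path, from $C$ to $D^s$ in $\sG^0$, contradicting minimality of $\sobspaths^s(C)$.

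Next I would establish that $\spred^1$ makes $T^1$ a tree of systems (\cref{def:tree-of-systems}). Because $\spred^1$ is obtained from $\spred^0$ by relabelling each $s^i$ as $\sssplit^i$, the predecessor-system relation of $T^1$ is isomorphic to that of $T^0$, so the tree structure (a unique root, every chain reaching it) is inherited from \cref{le:21Jan21.1-tree-of-systems-has-tree-structure}, and the root of $T^1$ is $\sssplit^{\rootsys{T^0}}$. The substantive point is that the infolink of each non-root $\sssplit^i$ lies in its predecessor path $\ssplit(\predsys{s^i},\predpath{s^i})$. This is exactly what the recursive first case of $\NewNode$ guarantees: for a non-root $s^i$ and $N\in\{X^{s^i},D^{s^i}\}$ it sets $\NewNode(N,s^i,\sinfo^{s^i})=\NewNode(N,\predsys{s^i},\predpath{s^i})$, so $\bar X$ and $\bar D$ are literally the copies of $X^{s^i}$ and $D^{s^i}$ occurring in the split predecessor path; since $X^{s^i}\to D^{s^i}$ was adjacent in $\predpath{s^i}$ for $T^0$ and the lift preserves adjacency, $\bar X\to\bar D$ lies in $\ssplit(\predsys{s^i},\predpath{s^i})$. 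For the root link, the fourth case of $\NewNode$ gives $\NewNode(X,\rootsys{T^0},\sinfo^{\rootsys{T^0}})=X$ and $\NewNode(D,\rootsys{T^0},\scontrol^{\rootsys{T^0}})=D$, so $X'=X$ and $D'=D$ as nodes of $\sG^1$; as these literal nodes lie in $\sCopies(X)$ and $\sCopies(D)$, we conclude $\hfirst(X')=X$ and $\hfirst(D')=D$.

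I expect the main obstacle to be the activeness verification for the split info path, since it is the only step where $d$-separation must be re-checked directly in $\sG^1$ --- a graph with many more edges than $\sG^0$, in particular a decision $\bar D$ with a large family --- and it is there that the homomorphism properties, the defining cases of $\NewNode$ (which dictate exactly when two positions collapse to one copy), and the role of the split obs paths as collider-openers must all be combined correctly; the remaining work is careful but routine bookkeeping of the recursive $\NewNode$ definition.
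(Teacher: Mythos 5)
Your proposal is correct and follows essentially the same route as the paper's own proof: verify the three tree-of-systems conditions (with $\spred^1$ inherited directly from $\spred^0$), check that each split system is a system by lifting its paths along $\hfirst$ (directedness of the control and obs paths, activeness of the split info path, and minimality of obs paths by projecting a hypothetical shorter path back to $\sG^0$), and read off the root link and its images under $\hfirst$ from the root case of $\NewNode$. The only two divergences are matters of detail rather than substance: your node-by-node $d$-connection argument fills in a step the paper dismisses as following ``easily'' from the definition of $E^1$, while you leave implicit the paper's one-line check that the split info and control paths end at the same utility node, namely $\NewNode(U^s,s,\sinfo^s)=\NewNode(U^s,s,\scontrol^s)$ by the utility-node case of $\NewNode$.
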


\begin{proof}
First we show that $T^1$ satisfies the three conditions of a tree of systems: 
(1) We will show below that each indexed element of $\calS^1$ is indeed a system.
(2) since $\spred^0 = \spred^1$, and since $T^0$ is a tree of systems, the required condition on $\spred^1$ is satisfied. (3) We show that each system's infolink is an infolink on its predecessor path: The nodes $X^{\sssplit}$ and $D^{\sssplit}$ in $\sG^1$ equal $\NewNode(X^s,s,\sinfo^s)$ and $\NewNode(D^s, s, \scontrol^s)$ for $X^s$ and $D^s$ in $\sG^0$. By definition of $\NewNode(N,s,p)$, this is indeed an infolink on $\predpath{s}$. 

It remains to be shown that $\sssplit$ indeed is a system for each $s \in \calS^0$: 

    \proofstepinf{1} {We show that $\scontrol^{\sssplit}$ is a directed path to a utility node}. $\scontrol^{\sssplit}$ is a path from  $\NewNode(D^{s}, {s}, \scontrol^{s})$ to $\NewNode(U^{s}, {s}, \scontrol^{s})$ and by definition of $E^1$ this is indeed a directed path (since $\scontrol^{s}$ is directed in $T$); 
    
    \proofstepinf{2} {We show that $\sinfo^{\sssplit}$ is an active path to the same utility node}. Firstly, $\sinfo^{\sssplit}$ is a path from $\NewNode(X^{s}, {s}, \sinfo^{s})$ to $\NewNode(U^{s}, {s}, \sinfo^{s})$, and since $\NewNode(U^{s}, {s}, \sinfo^{s}) = \NewNode(U^{s}, {s}, \scontrol^{s})$ by definition of $\NewNode$ for utility nodes, therefore the control and info path indeed end at the same utility node. Secondly, it follows easily from the definition of $E^1$, that a node $\NewNode(N,s,\sinfo^{s})$ blocks the path if and only if $N$ blocks $\sinfo^{s}$, and the latter is active by assumption, so that $\sinfo^{\sssplit}$ is active as well;

    \proofstepinf{3}{Finally, we show that the $\sobspaths^{\sssplit}$ are minimal length paths from collider nodes on $\sinfo^{\sssplit}$ to $D^{\sssplit}$}. 
    Firstly, $\sobspaths^{\sssplit}(\NewNode(C,s,\sinfo^{s}))$ is a path from $\NewNode(C, s,\sobspaths^{s}(C))$ to $\NewNode(D^s,s,\sobspaths^{s}(C))$. By definition of $L$, the former equals $(C,(s,\sinfo^{s}))$ and the latter equals $\NewNode(D^s,\predsys{s},\predpath{s})$ if $s\neq \rootsys{T}$ and $(D^s,(s,\scontrol^s))$ if $s=\rootsys{T}$, which in both cases equals $\NewNode(D^s, s,\scontrol^s)=D^{\sssplit}$, so that this is indeed a valid obspath. To show that it's minimal length, assume by contradiction that there is a shorter path and denote its $j$'th node by $(N^j,(s^j,p^j))$, so that there are links $(N^j,s^j,p^j)\to (N^{j+1},s^{j+1},p^{j+1})$. Then by definition of $E^1$, $\sG^0$ contains an edge $N^j\to N^{j+1}$, and hence this path in $\sG^0$ must also be shorter than $\sobspaths^s(C)$, contradicting the assumption that $s$ is a system.
    
This shows that $T^1$ is a tree of systems. Finally, The root infolink of $T^1$ is $X\to D$ by definition of $\NewNode$, and $\hfirst(X)=X$ and $\hfirst(D)=D$, so it is mapped to the root info link of $T$.
\end{proof}

\begin{lemma}[Transformation 1 ensures \systemsAndPathsUniqueness]
\label{le:21jan26.1-first-split-ensures-systems-and-paths-uniqueness}
Let $(\sG^0,T^0)$ be any soluble ID graph with complete tree. Then $(\sG^1,T^1)=\splitfirst(\sG^0,T^0)$ is an ID graph with complete tree that satisfies (a) \systemsAndPathsUniqueness.
\end{lemma}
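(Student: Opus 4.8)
The plan is to reduce the claim to the concrete characterisation of position-in-tree-uniqueness in \cref{le:21-may-9-position-in-tree-uniqueness-properties}, and then read off the required uniqueness directly from how $\NewNode$ tags each copy with its position in the tree. First I would dispatch the easy parts: that $\sG^1$ is an ID graph and that $T^1$ is a tree of systems on it follows immediately from \cref{le:21jan25.6-first-split-preserves-tree} (which also fixes the root link). Completeness (fullness) of $T^1$ carries over from $T^0$ because $\splitfirst$ induces a bijection $s \mapsto \sssplit$ between the systems of $T^0$ and those of $T^1$ that preserves the predecessor structure $\spred$ and sends each infolink $X^{s'}\to D^{s'}$ lying on a path $p$ of $s$ to the infolink $\NewNode(X^{s'},s,p)\to\NewNode(D^{s'},s,p)$ of the corresponding child system $\sssplit'$; since the fullness condition of \cref{def:tree-of-systems} is a statement purely about this correspondence, it transfers.

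The heart of the argument is position-in-tree-uniqueness. By \cref{le:21-may-9-position-in-tree-uniqueness-properties} it suffices to show that every node of $\sG^1$ occurring in a path of $T^1$ occupies a unique (system, path) position, except for the listed exceptions. The key observation is that a copy $\sNsplit$ lies on the transformed path $\ssplit(s,p)$ of $\sssplit$ precisely when $\sNsplit=\NewNode(N,s,p)$ for the original node $N$ occupying that position in $p$. Hence the set of positions that $\sNsplit$ occupies in $T^1$ is exactly the fiber of $\NewNode$ over $\sNsplit$, i.e.\ the set of triples $(N,s,p)$, with $N$ on path $p$ of system $s$ in $T^0$, that $\NewNode$ maps to $\sNsplit$. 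So the whole statement reduces to analysing these fibers: I must show that any two distinct triples with the same $\NewNode$-value are related by one of the special cases in \cref{def:21jan25.2-first-split-to-ensure-systems-and-paths-uniqueness}, and that these special cases correspond exactly to the allowed exceptions.

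I would carry out this analysis by unfolding the recursive definition of $\NewNode$ to a canonical base position. The utility clause merges the occurrences of $U^s$ on $\sinfo^s$ and $\scontrol^s$ into one tag, giving exactly the second exception; the collider clause merges the occurrence of a collider $C$ on $\sinfo^s$ with its occurrence as the first node of $\sobspaths^s(C)$, giving the first exception; and the $\{X^s,D^s\}$ clause recurses from a non-root system into its predecessor system along $\predpath{s}$. Unfolding this last recursion along the predecessor chain (which terminates, by the tree structure of \cref{le:21Jan21.1-tree-of-systems-has-tree-structure}, either at the root or at the first system in which $N$ is neither the info node nor the decision node) identifies the copies of $X^{s'}$ (resp.\ $D^{s'}$) across its own system, its predecessor path, and any chain of nested child systems that share it as info node, which is precisely the third exception.

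The main obstacle I anticipate is this last clause: verifying that the recursion produces no identifications beyond those permitted. Concretely, I need to show that two genuinely unrelated positions never receive the same tag, which requires that the cases of $\NewNode$ cannot interact pathologically, e.g.\ that a node cannot simultaneously trigger the utility clause and the infolink recursion (ruled out by node types, since $U^s$ is a utility node while $X^s,D^s$ are not), and that the info/decision nodes of distinct systems are identified only when one is a tree-ancestor of the other through the chain-of-systems mechanism. Here I expect to lean on the basic structural facts already established, namely that the infolink of a child system lies on its predecessor path and that back sections contain no infolinks (\cref{le:20Nov7.1-Basic-properties-of-systems-with-sufficient-recall}), to exclude spurious overlaps, so that the fibers of $\NewNode$ match the exception patterns exactly.
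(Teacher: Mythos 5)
Your proposal is correct and follows essentially the same route as the paper's proof: the paper likewise splits the claim into preservation of fullness (via the system-by-system correspondence under the split) and a verification that the only identifications made by $\NewNode$ are the exceptions listed in \cref{le:21-may-9-position-in-tree-uniqueness-properties}, using \cref{le:20Nov7.1-Basic-properties-of-systems-with-sufficient-recall} to rule out spurious overlaps. Your ``unfolding the recursion along the predecessor chain'' is precisely the paper's induction on ancestor systems, so the two arguments coincide in substance.
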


\begin{proof} We first show that the split preserves fullness, then that it ensures \systemsAndPathsUniqueness.

    (i) {(full tree)}. 
    \chris{my sense is this subproof can be simplified, it seems very long for what it's doing. Basically it's a "by definition" kind of proof.}
    We show that if ${T^0}$ is a full tree, then so is $T^1$: 
    Let $(X,s^{*,1},p^{*,1}) \to (D,s^{*,2},p^{*,2})$ be an infolink in $\sG^1$ on the path $\psplit$ in system $\sssplit$. We need to show that there is a system $\sssplit'$ such that $X^{\sssplit'}=(X,s^{*,1},p^{*,1})$ and $D^{\sssplit'}=(D,s^{*,2},p^{*,2})$. 
    
    Note that this link in $\psplit$ implies that there is a corresponding link in the original path $p$. By the definition of $T^1$, the split path $\psplit$ was constructed from the original path $p$ (that has the same path type in system $s$ as $\psplit$ does in $\sssplit$) where if $(X,s^{*,1},p^{*,1})$ is the $i$'th node on $\psplit$, it corresponds to the $i$'th node on $p$ by $(X,s^{*,1},p^{*,1})=\psplit^i=\NewNode(p^i,s,p)$ and similarly $(D,s^{*,2},p^{*,2})=\psplit^{i+1}=\NewNode(p^{i+1},s,p)$. Hence $X=p^i$ and $D=p^{i+1}$, so that $X\to D$ is also an infolink on $p$ in $s$ in ${T^0}$.
    And since by assumption ${T^0}$ is full, there is a system $s'$ with $\spred^0(s')= (s,p)$ such that $X^{s'} \to D^{s'}$ equals $X \to D$. This implies also that the desired system in $T^1$ exists: Since by definition $\spred^1$ is equivalent to $\spred^0$ it implies that $\spred^1(\sssplit')= (\sssplit,\psplit)$, and by construction of $\sssplit$,   $X^{\sssplit'}$ and $D^{\sssplit'}$ equal $\NewNode(X^{s'},s',\sinfo^{s'})=\NewNode(X,s',\sinfo^{s'})$ and $\NewNode(D^{s'},s',\scontrol^{s'})=\NewNode(D,s',\scontrol^{s'})$ respectively, which by definition of $\NewNode(X,s,p)$ implies that they equal $\NewNode(X,s,p)$ and $\NewNode(D,s,p)$ respectively, showing the result. 
    
    {(ii)} {(\systemsAndPathsUniqueness)}.
    \newcommand{\uspindprop}{\mathcal P}
    Any node $\sNsplit$ in the tree either equals one of $X,D$, or is a node of the form $\sNsplit=(N,(s^*,p^*))$. In the former case, let $s^*=\rootsys{T^1}$ and let $p^*=\sinfo^{\sRoot}$ if the node equals $X$ and $p^*=\scontrol^{\sRoot}$ if it equals $D$. We will show that $\sssplit^*$ and $\psplit^*$ are the node's base system and base path respectively, by taking any path $\psplit$ in any system $\sssplit$ such that the node is on $\psplit$, and showing that for the original path $p$ and system $s$, either $p=p^*$ and $s=s^*$ or that one of the exceptions applies.
    
    We will show this by induction on the tree: Assume that it holds for any $\psplit'$ in system $\sssplit'$ that is an ancestor system of $\sssplit$. Note that if $\sNsplit=(N,(s^*,p^*))$ is on path $\psplit$ in system $\sssplit$, then $\sNsplit=\NewNode(N,s,p)$, so consider two cases of the definition of $\NewNode(N,s,p)$ separately:
    
    \proofcaseinf{1} {Assume $s\neq \rootsys{T^0}$ and $N\in \{ X^s, D^s \}$.} Then $\sNsplit = (N, s^*, p^*) = \NewNode(N, s, p) = \NewNode(N, \predsys{s}, \predpath{s})$, and we will use the induction assumption on $\predsys{\sssplit}, \predpath{\sssplit}$: We know that $(N, s^*, p^*)$ lies on $\predpath{\sssplit}$ (since $T^1$ is a tree), and by the induction assumption, either $\predsys{\sssplit}=\sssplit^*$ and $\predpath{\sssplit}= \psplit^*$ (in which case the third or fourth exception applies to $\sssplit$ and $\psplit$, showing the result), or one of the exceptions applies. Since $X^s$ and $D^s$ aren't utility nodes, and can't be colliders on the info path of $\predsys{s}$ (\autosubref{le:20Nov7.1-Basic-properties-of-systems-with-sufficient-recall}{le:20Nov7.1a-no-decisions-in-the-back-section}), and $\spredpath$ always is either an info or control path, only the third exception can apply to $\predsys{\sssplit}$ and $\predpath{\sssplit}$, i.e. $(N, s^*, p^*)$ is the info or decision node of $\predsys{\sssplit}$, where the latter is a child system of $\sssplit^*$ or it is the info node of an unbroken chain of descendant systems of $\sssplit^*$. In both cases, $N$ cannot equal $D^s$, since the decision node of a system ($\predsys{\sssplit}$ in this case) is neither the decision of an info link on its info path nor on its control path and hence cannot equal the decision node of one of its child systems ($\sssplit$ in this case), and hence $(N, s^*, p^*)$ must be the info node of $\sssplit$ and an unbroken chain of predecessor systems between $\sssplit$ and $\sssplit^*$, so that it satisfies the third exception.

    \proofcaseinf{2} {Assume $s= \rootsys{T^0}$ or $X \notin \{X^s, D^s\}$.} Then if $s=\rootsys{T^0}$ and $N\in \{X^s, D^s\}$, then $\sNsplit = N$, and the result follows easily (where $\psplit$ may be an obs path in which case the final exception applies). So assume otherwise, so that  $\sNsplit = (N, s^*, p^*)= \NewNode(N, s, p)$, where $s^*=s$, and we can easily match each of the cases of $\NewNode(N, s, p)$ to the exceptions, showing the result.~\looseness=-1
\end{proof}

\subsubsection{Transformation 2 (split): ensuring no backdoor info-paths}
In the second transformation, we turn any backdoor-info paths into frontdoor infopaths.

\begin{definition}[Transformation 2]
\label{def:21jan28.1-third-split-to-ensure-new-appropriateness}
Let $\sG^1=(\sV^1,E^1)$ be an ID graph with tree $T^1=(\calS^1,\spred^1)$. Then $\splitsecond(\sG^1,T^1)=(\sG^2, T^2)$, where $\sG^2=(\sV^2,E^2)$ and $\hsecond\colon\sV^2 \to \sV^1$ are defined as follows:~\looseness=-1
\begin{itemize}
    \item Obtain any $\sG^2$ and $\hsecond$ from \autoref{le:21may19.2-CID-hom-from-node-copying-and-deleting}, by adding for each node $N$ a set of copies
    \[\sCopies(N) = \begin{cases} 
        \{(N,``\scopy", s)\} \casesif {$N=X^{s}$ for some backdoor-info system $s$ in ${T^1}$}\\
        \emptyset \casesotherwise
    \end{cases}\]

    \item \sloppy $T^2$ is the tree $(\calS^2,\spred^2)$, where each system $\sssplit\in \calS^2$ is obtained from $s$ by replacing the first link $X^s\gets N$ in $\sinfo^s$ with the links $X^s\to (X^s,``\scopy")\gets N$, and extending $\sobspaths^s$ with $\sobspaths^s((X^s,``\scopy"))$ to be the path consisting of the single link $(X^s,``\scopy", s)\to D^s$. Moreover $\spred^2$ is the same as $\spred^1$ except that each $s$ in $T^1$ is replaced with its transformed $\sssplit$. 
\end{itemize}
\end{definition}

\begin{lemma}[Transformation 2 preserves tree]\label{le:21jan28.3-third-split-preserves-tree}
Let $(\sG^2, T^2)=\splitsecond(\sG^1,T^1)$. If $T^1$ is a tree of systems on $\sG^1$ with root link $X\to D$, then $T^2$ is a tree of systems on $\sG^2$ with root link $X'\to D'$ with $\hsecond(X')=X$ and $\hsecond(D')=D$.
\end{lemma}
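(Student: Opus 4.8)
The plan is to follow the template of \cref{le:21jan25.6-first-split-preserves-tree}: I will verify the three conditions of \cref{def:tree-of-systems} for $T^2$, i.e.\ that each element $\sssplit\in\calS^2$ is a system, that $\spred^2$ is a well-defined predecessor map, and that each system's infolink lies on its predecessor path; then read off the root-link claim. The two easier conditions come almost for free from the fact that Transformation~2 leaves the info node $X^s$, decision node $D^s$, utility node $U^s$, control path $\scontrol^s$, and infolink $X^s\to D^s$ of every system unchanged, altering only the first (incoming) link of each \emph{backdoor} info path. Consequently $\spred^2$ is $\spred^1$ with each system relabelled by its transform, and the infolink $X^{\sssplit}\to D^{\sssplit}=X^s\to D^s$ still lies on the (relabelled) predecessor path: the only link ever removed from any path is a first link $X\gets N$ of some backdoor info path, which can never equal an infolink $X'\to D'$, since the former enters a chance node and the latter a decision. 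So conditions~(2) and (3) transfer directly from $T^1$ being a tree of systems.

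The substance is showing each $\sssplit$ satisfies \cref{def:system-minimal}. The control path is unchanged, hence still a directed path $D^s\pathto U^s$ in $\sG^2$, since the copying lemma (\cref{le:21may19.2-CID-hom-from-node-copying-and-deleting}) preserves all original edges. For obs paths, I keep the original ones and assign the new collider $(X^s,``\scopy")$ the single-edge obs path $(X^s,``\scopy")\to D^s$ --- this edge exists because $(X^s,``\scopy")$ is a copy of $X^s$ and $X^s\to D^s$ is the infolink. To see that every obs path is still \emph{minimal-length} in $\sG^2$, I would invoke that $\hsecond$ is an ID homomorphism fixing original nodes: any directed path $C\dashrightarrow D^s$ in $\sG^2$ maps under $\hsecond$ to a directed walk of no greater length between the originals $C$ and $D^s$ in $\sG^1$, which reduces to a directed path; since the original obs path is minimal in $\sG^1$ and survives unchanged in $\sG^2$, no shorter path can appear.

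The crux, and the step I expect to be the main obstacle, is that the rewritten info path $\sinfo^{\sssplit}\colon X^s\to(X^s,``\scopy")\gets N\upathto U^s$ is active given $\Fa(D^{\sssplit})\setminus\{X^{\sssplit}\}$. Here $(X^s,``\scopy")$ is a new collider, and it is \emph{activated} precisely because the added obs edge makes it a parent of $D^{\sssplit}$, so it lies in the conditioning set (and differs from $X^{\sssplit}=X^s$). The tail $N\upathto U^s$ is identical to the corresponding active segment of the original info path, with unchanged node types; and the conditioning set differs from $\Fa(D^s)\setminus\{X^s\}$ only by added copy-nodes, which are not on this tail and can only open colliders, never block a non-collider --- so the tail stays active. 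Hence $\sinfo^{\sssplit}$ is active and ends at $U^s$, matching the control path, completing the verification that $\sssplit$ is a system. Finally, since the root system's info and decision nodes are untouched, the root infolink of $T^2$ is $X'\to D'$ with $X'=X$, $D'=D$, and $\hsecond(X')=X$, $\hsecond(D')=D$ because $\hsecond$ fixes originals.
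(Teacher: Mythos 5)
Your proof is correct and takes essentially the same route as the paper's own: verify the three tree-of-systems conditions, note that control paths and pre-existing obs paths are untouched, give the new collider a trivially minimal single-link obs path, preserve minimality of the remaining obs paths because the homomorphism cannot produce shorter directed paths between original nodes, and establish activeness of the rewritten info path because the copy node $(X^s,``\scopy")$ is a collider that the added obs link places inside the conditioning set $\Fa(D^{s})\setminus\{X^{s}\}$. The only point to tighten is your claim that the replaced first link $X^s\gets N$ ``enters a chance node'' and hence is never an infolink: a priori an info node may be a decision, and ruling this out for backdoor-info systems is exactly \autosubref{le:20Nov7.1-Basic-properties-of-systems-with-sufficient-recall}{le:20Nov7.1a-no-decisions-in-the-back-section}, the same fact the paper cites (infolinks lie only in the front section) when checking that each system's infolink still lies on its predecessor path.
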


\begin{proof}
First we show that $T^2$ satisfies the three conditions of a tree of systems: (1)
We will show below that each indexed element of $\calS^2$ is indeed a system.
(2) since $\spred^1$ is equivalent to $\spred^2$, and since ${T^1}$ is a tree of systems, the required conditions on $\spred^2$ are satisfied.
(3) For any system $s$, $X^{\sssplit}=X^s$ and $D^{\sssplit}=D^s$ (i.e. they are unchanged under the split), and the front-section of $\predsys{\sssplit}$ is identical to that of $\predsys{s}$, and since infolinks can only be in the front section (by \autosubref{le:20Nov7.1-Basic-properties-of-systems-with-sufficient-recall}{le:20Nov7.1a-no-decisions-in-the-back-section}), so that the fact that ${T^1}$ is a tree of systems and hence has $X^s\to D^s$ as an infolink on $\predpath{s}$, this implies that $X^{\sssplit}\to D^{\sssplit}$ is an infolink on $\predpath{\sssplit}$.

It remains to be shown that $\sssplit \in \calS^2$ indeed is a system for each $s \in \calS^1$:
    
    \proofstepinf{1} {$\scontrol^{\sssplit}$ is a directed path to a utility node}. 
    $\scontrol^{\sssplit}$ is identical to $\scontrol^s$.
    
    \proofstepinf{2} {$\sinfo^{\sssplit}$ is an active path to the same utility node}.
    Since $\sinfo^s$ is active given $\Pa(D^s)$ by assumption, and $\sinfo^{\sssplit}$ is identical to $\sinfo^s$ except that if $\sinfo^s$ is backdoor from $X^s$ then the first link $X^s\gets N$ is replaced by $X^s\to (X^s, ``\scopy")\gets N$, hence by definition of $E^2$, a node on $\sinfo^{\sssplit}$ is a parent of $D^{\sssplit}$ in $\sG^2$ iff it is a parent of $D^s$ in $\sG^1$ or if it equals $(X^s, ``\scopy")$. Therefore, $(X^s, ``\scopy")$ doesn't block because it's a collider, and the other nodes don't block $\sinfo^{\sssplit}$ because by assumption they didn't block $\sinfo^s$.
    
    \proofstepinf{3}{Finally, $\sobspaths^{\sssplit}$ are minimal length paths from collider nodes on $\sinfo^{\sssplit}$ to $D^{\sssplit}$}: For $\sobspaths^\sssplit((X^s,``\scopy"))$, it is a single link to $D^s$ and hence trivially minimal-length, so consider the other colliders $C$ which are also on $\sinfo^s$. Firstly, each $\sobspaths^{\sssplit}(C)$ is identical to $\sobspaths^s(C)$. Secondly, the split doesn't introduce shorter-length such paths, since any path from $C$ to $D^{\sssplit}$ via some newly added $(X, ``\scopy")$ would correspond to a path via $X^s$ in $\sG^1$ that is at least as short, using the fact that this ID transformation is homomorphic and hence doesn't introduce extra edges. 

This shows that $T^2$ is a tree of systems. Finally, we show that the root infolinks are equivalent:
The if $s^i$ is the root system of ${T^1}$ then $\sssplit^i$ is the root system of $T^2$, and since the infolink of each system in $T^2$ equals that of the corresponding system in ${T^1}$ by definition, we have $\hsecond(X^{\sssplitroot})=X^{\ssRoot}$ and $\hsecond(D^{\sssplitroot})=D^{\ssRoot}$, showing the result.
\end{proof}

\begin{lemma}[Transformation 2 ensures \newAppropriateness]
\label{le:21jan28.4-third-split-ensures-new-appropriateness}
Let $(\sG^1,T^1)$ be any soluble ID graph with complete tree satisfying property (a) \systemsAndPathsUniqueness. Then $(\sG^2,T^2)=\splitsecond(\sG^1,T^1)$ is an ID graph with complete tree satisfying (a) and also (b) \newAppropriateness.
\end{lemma}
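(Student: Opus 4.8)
The plan is to verify three things about $(\sG^2,T^2)=\splitsecond(\sG^1,T^1)$: that it is still a \emph{full} tree of systems, that it retains property (a) \systemsAndPathsUniqueness, and that it newly satisfies property (b) \newAppropriateness. The backbone is supplied by \cref{le:21jan28.3-third-split-preserves-tree}, which already guarantees that $T^2$ is a tree of systems on $\sG^2$ with a root info link mapping to that of $T^1$; so the work reduces to the three enumerated properties, and I would organise the argument into three sub-claims, mirroring the two-part structure of the analogous \cref{le:21jan26.1-first-split-ensures-systems-and-paths-uniqueness} for Transformation~1.

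Property (b) is essentially immediate from the construction in \cref{def:21jan28.1-third-split-to-ensure-new-appropriateness}. For each backdoor-info system $s$, its info path's first link $X^s\gets N$ is replaced by $X^s\to (X^s,``\scopy")\gets N$, so the transformed system $\sssplit$ has an info path whose first link $X^s\to (X^s,``\scopy")$ is outgoing from $X^{\sssplit}=X^s$, while frontdoor systems are left untouched; hence every system of $T^2$ has a front-door info path, which is exactly (b). A small point worth flagging first is that, by \autosubref{le:20Nov7.1-Basic-properties-of-systems-with-sufficient-recall}{le:20Nov7.1a-no-decisions-in-the-back-section}, a backdoor info path contains no decision, so $X^s$ is a chance node; consequently its copy $(X^s,``\scopy")$ is a chance node and the new single-link obs path $(X^s,``\scopy")\to D^s$ is a well-formed observation link.

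For property (a), I would appeal to the characterisation in \cref{le:21-may-9-position-in-tree-uniqueness-properties}. The only genuinely new nodes are the copies $(X^s,``\scopy")$, and each lies in exactly two paths of $\sssplit$: as a collider on $\sinfo^{\sssplit}$ and as the first node of $\sobspaths^{\sssplit}((X^s,``\scopy"))$. This is precisely the first permitted exception (a collider on an info path heading its own obs path), so the copies do not violate (a); moreover copies are indexed by their backdoor system, so distinct systems — even two systems sharing the same info node along a chain — receive distinct copies. For the pre-existing nodes I would argue that the transformation does not change which paths they inhabit: $X^s$ remains the head of $\sinfo^{\sssplit}$ (and stays the info node of its chain, exception three), $N$ remains a fork/chain node following the inserted collider with its collider/fork/chain status and path-membership unchanged, and the node sets of all control paths and of every other obs path are untouched. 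Since the new obs path contains only $(X^s,``\scopy")$ and $D^s$, no old node acquires a new path-membership, so the multiplicity of every old node is preserved and (a) persists.

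Finally, for fullness I would check that Transformation~2 creates no information link demanding a new child system. The modified info path gains only the links $X^s\to (X^s,``\scopy")$ and $N\to (X^s,``\scopy")$, both pointing into the \emph{chance} node $(X^s,``\scopy")$ and hence not information links, whereas the sole new edge into a decision, $(X^s,``\scopy")\to D^s$, points into the system's own decision $D^s$ and lies on the new obs path exactly like any other obsnode link — so it carries no fullness obligation. Every other edge, and in particular every info link into a proper descendant decision along a path, is inherited unchanged from $T^1$, so the one-child-per-infolink correspondence is preserved. I expect the verification of (a) to be the main obstacle, since it requires carefully matching the path-memberships of $X^s$, $N$, and the inserted collider against each clause of \cref{le:21-may-9-position-in-tree-uniqueness-properties} and confirming that the system-indexing genuinely keeps copies separate when a single node heads a chain of systems.
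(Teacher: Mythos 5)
Your proposal is correct and takes essentially the same route as the paper's proof: it decomposes into the same three sub-claims (fullness is preserved, property (a) is preserved, property (b) is newly ensured) and argues each in the same way — (b) directly from the construction, (a) because the only new nodes are the inserted copies of $X^s$, which appear only in their own system under the collider/obs-path exception while all other nodes keep their path-memberships, and fullness because the inserted links create no new obligations while inherited infolinks keep their associated systems. If anything, you are more explicit than the paper on two points it leaves implicit — that a backdoor $X^s$ must be a chance node (by \autosubref{le:20Nov7.1-Basic-properties-of-systems-with-sufficient-recall}{le:20Nov7.1a-no-decisions-in-the-back-section}, so the inserted links are not information links) and that the new obs-path link into $D^s$ carries no fullness obligation.
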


\begin{proof} 
We first show that the split preserves fullness, \systemsAndPathsUniqueness and \noOverlappingQX, then we show that it ensures \newAppropriateness:

    {(i)} {(full tree)}. We show that if ${T^1}$ is a full tree, so is $T^2$: For any link $X\to D$ on a path in a system $\sssplit$, that infolink was also on the same path in system $\sssplit$, since $\sssplit$ is identical to $s$ except for the first link on $\sinfo^{\sssplit}$ but that link is in the back section and hence cannot contain $X$ or $D$ by \autosubref{le:20Nov7.1-Basic-properties-of-systems-with-sufficient-recall}{le:20Nov7.1a-no-decisions-in-the-back-section}. Hence there is a system $s'$ in ${T^1}$ with $X\to D$ as its infolink. Hence since by \autoref{def:21jan28.1-third-split-to-ensure-new-appropriateness} the infolink of $\sssplit'$ is the same as that of $s'$, there is a system in $T^2$ that has $X\to D$ as its infolink, namely $\sssplit'$.~\looseness=-1
    
    {(ii)} {(a-\systemsAndPathsUniqueness)}.
    We show that if ${T^1}$ satisfies \systemsAndPathsUniqueness, then so does $T^2$. We state the argument informally: The nodes in $T^2$ are identical to those in ${T^1}$, except for sometimes a split of $X^s$. In that case, $(X^s, ``\scopy")$ is a new node that only appears in system $\sssplit$. Moreover, any other nodes are precisely in system $\sssplit$ if they were in system $s$, so since the node satisfied the required property in ${T^1}$, it also does so in $T^2$.

    {
    (iii) {(b-no-backdoor-infopaths)} Take any system $\sssplit\in \calS^2$. 
    If $s$ is frontdoor info, then $\sssplit$ is identical, so is also frontdoor info.
    If $s$ is backdoor-info, then $\sssplit$ is modified to be frontdoor-info.
    }
\end{proof}

\subsubsection{Transformation 3 (pruning): ensuring \noRedundantLinks}

\begin{definition}[Transformation 3]
\label{def:transformation-4}
Let $\sG^2\!=\!(\sV^2,E^2)$ be an ID graph with tree $T^2$.
Then $\splitthird(\sG^2,T^2)=(\sG^3, T^3)$ 
where $\sG^3=(\sV^3,E^3)$ and the identity homomorphism $\hthird$ are obtained from $\sG^2$ using \autoref{le:21may19.2-CID-hom-from-edge-pruning} by removing 
all \autosubref{def:sep5.3-normal-form-graph-with-tree}{def:sep5.3e-no-redundant-links} (\noRedundantLinks)
links are removed (which are all into non-decision nodes), and where
$T^3=T^2$.~\looseness=-1
\end{definition}

\begin{lemma}[Transformation 3 is homomorphic]
\label{le:transformation-4-is-homomorphic}
$\hthird$ from \autoref{def:transformation-4} is an ID homomorphism from $\sG^3$ to $\sG^2$.
\end{lemma}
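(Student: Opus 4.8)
The plan is to reduce the statement directly to the Link Pruning lemma (\cref{le:21may19.2-CID-hom-from-edge-pruning}), since $\splitthird$ was \emph{defined} in \cref{def:transformation-4} precisely as an instance of that lemma. Recall that \cref{le:21may19.2-CID-hom-from-edge-pruning} asserts that for ID graphs $\sG^2=(\sV^2,E^2)$ and $\sG^3=(\sV^3,E^3)$ on a common vertex set, the identity map is an ID homomorphism from $\sG^3$ to $\sG^2$ provided that (i) $E^3\subseteq E^2$ and (ii) for every decision node $D$, each incoming edge $N\to D$ in $E^2$ is also present in $E^3$. So it suffices to verify these two hypotheses for the graph $\sG^3$ produced by the transformation.

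Hypothesis (i) is immediate: by construction $\sG^3$ is obtained from $\sG^2$ only by \emph{deleting} edges and leaving the node set fixed, so $\sV^3=\sV^2$ and $E^3\subseteq E^2$ hold directly from \cref{def:transformation-4}.

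For hypothesis (ii) I would appeal to the form of the deleted edges. The transformation removes exactly the edges that witness a violation of the \emph{no-redundant-links} condition, namely edges $N\to N'$ whose target $N'$ is a non-decision node, where one of $N,N'$ lies on a path of a system of $T^2$ (outside the nodes of the root infolink) but $N\to N'$ is not itself a within-system link. The key observation, already baked into the statement of the no-redundant-links condition, is that every such edge points \emph{into a non-decision node}. Consequently no incoming edge of any decision is ever pruned, so every edge $N\to D\in E^2$ with $D\in\sD$ survives in $E^3$; this is precisely hypothesis (ii). Applying \cref{le:21may19.2-CID-hom-from-edge-pruning} with $\sG=\sG^2$ and $\sG'=\sG^3$ then gives that $\hthird$, the identity on $\sV^3=\sV^2$, is an ID homomorphism from $\sG^3$ to $\sG^2$.

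There is essentially no obstacle in this proof; it is pure bookkeeping. The only point that requires any care is confirming that the pruned edges all target non-decision nodes, since it is exactly the preservation of the decisions' incoming information links that the Link Pruning lemma demands --- and this is guaranteed by the definition of the no-redundant-links condition itself.
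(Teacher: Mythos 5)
Your proof is correct and takes essentially the same approach as the paper: the paper gives no separate proof for this lemma precisely because \autoref{def:transformation-4} already defines $\hthird$ as an instance of the Link Pruning lemma (\cref{le:21may19.2-CID-hom-from-edge-pruning}), noting parenthetically that the removed links ``are all into non-decision nodes''. Your verification of the two hypotheses (edge containment and preservation of decisions' incoming edges) is exactly the bookkeeping the paper leaves implicit.
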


\begin{lemma}[Transformation 3 preserves tree]\label{le:transformation-4-preserves-tree}
Let $(\sG^3, T^3)=\splitthird(\sG^2,T^2)$. If $T^2$ is a tree of systems on $\sG^2$ with root link $X\to D$, then $T^3$ is a tree of systems on $\sG^3$ with root link $X'\to D'$ with $\hthird(X')=X$ and $\hthird(D')=D$.
\end{lemma}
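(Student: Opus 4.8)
The plan is to exploit the fact that Transformation 3 only deletes edges and leaves both the node set and the tree data unchanged, i.e.\ $\sV^3=\sV^2$, $T^3=T^2$, and $\hthird$ is the identity. Since $T^2$ is already a tree of systems on $\sG^2$, almost everything transfers verbatim, and the only genuine work is to confirm that every deleted edge lies outside all system paths and that deleting such edges preserves the three defining properties of a system (\cref{def:system-minimal}) for each $s\in\calS^2$, together with the tree-structure conditions on $\spred$.

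First I would record precisely which edges Transformation 3 removes: by \cref{def:transformation-4} these are exactly the edges violating \noRedundantLinks, i.e.\ edges $N\to N'$ into a \emph{non-decision} $N'$ that have an endpoint in some system path (other than the root infolink nodes) but are not themselves contained in any system path. Two consequences follow immediately. (i) No within-system link is ever deleted, so each info-, control-, and obs-path of each system survives intact in $\sG^3$; in particular every infolink $X^{s}\to D^{s}$, and hence every infolink appearing on a predecessor path, is retained. (ii) Because all deleted edges point into non-decision nodes, every incoming edge of every decision is retained, so $\Pa(D^s)$, and therefore the conditioning set $\Fa(D^s)$, is unchanged.

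Next I would verify the system axioms in $\sG^3$ for each $s\in\calS^2$. The control path $\scontrol^s\colon D^s\pathto U^s$ has all its edges retained, so it remains a directed path to a utility node. Each obs path $\sobspaths^s(C)$ is likewise retained and directed; and since $\sG^3\subseteq\sG^2$, every directed path in $\sG^3$ is also one in $\sG^2$, so no strictly shorter $C\dashrightarrow D^s$ path can appear, and $\sobspaths^s(C)$ stays minimal-length. The subtle point — which I expect to be the main obstacle — is the activity of $\sinfo^s$ given $\Fa(D^s)\setminus\{X^s\}$. Its own edges are all retained, so the collider/non-collider status of each of its nodes is unchanged; since the conditioning set is unchanged, no non-collider can newly block it. The only danger is that some collider $C$ on $\sinfo^s$ loses every descendant in the conditioning set when edges are deleted. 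But each such $C$ carries its obs path $\sobspaths^s(C)$, a retained directed path ending in an obs node $O$ which is a parent of $D^s$ distinct from $X^s$, and hence lies in $\Fa(D^s)\setminus\{X^s\}$. Thus $C$ keeps a descendant in the conditioning set and remains unblocked, so $\sinfo^s$ stays active and $s$ is still a system in $\sG^3$.

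Finally I would check the tree data. Since $\spred^3=\spred^2$ and every system path (and thus every infolink appearing on such a path) is preserved, each condition ``$\sinfolink^{s^i}$ lies in the predecessor path $p$ of $\predsys{s^i}$'' continues to hold, and the unique root system is unaffected; hence $T^3$ is a tree of systems on $\sG^3$. As $\hthird$ is the identity and $T^3=T^2$ has root infolink $X\to D$, the root infolink $X'\to D'$ of $T^3$ satisfies $X'=X$ and $D'=D$, giving $\hthird(X')=X$ and $\hthird(D')=D$, which completes the argument.
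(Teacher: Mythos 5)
Your proof is correct and takes essentially the same approach as the paper's, which is far terser: the paper simply observes that $T^3=T^2$ and that only out-of-tree links (all into non-decision nodes) are removed, so every system path survives; your additional checks (unchanged $\Fa(D^s)$, obs-path minimality, and collider activity under edge deletion) fill in details the paper leaves implicit. One minor slip: the obs node $O$ need not be distinct from $X^s$ in general, but this is harmless, since the retained obs path makes $D^s$ itself a descendant of each collider on $\sinfo^s$, and $D^s \in \Fa(D^s)\setminus\{X^s\}$, so every collider keeps a descendant in the conditioning set regardless.
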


\begin{proof}
The tree $T^3$ is rooted at $X \to D$ 
such that $\hthird(X) \to \hthird(D)$
because it is unchanged from $T^2$.
$T^3$ is 
a tree of systems because it is unchanged from $T^2$, 
while $\sG^3$ retains every edge in any path of every system of $T^2$
--- only redundant links 
\autosubref{def:sep5.3-normal-form-graph-with-tree}{def:sep5.3e-no-redundant-links} (\noRedundantLinks)
are removed.~\looseness=-1
\end{proof}

\begin{lemma}[Transformation 3 preserves (a,b) and ensures (c)]
\label{le:transformation-4-ensures-no-redundant-links}
Let $(\sG^2,T^2)$ be any soluble ID graph with complete tree satisfying properties (a,b) of normal form trees. Then $(\sG^3,T^3)=\splitthird(\sG^2,T^2)$ is an ID graph with complete tree satisfying (a-c).
\end{lemma}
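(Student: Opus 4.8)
The plan is to exploit the fact that Transformation 3 is by far the gentlest of the three: by \cref{def:transformation-4} it neither adds nor deletes nodes and leaves the tree itself untouched ($T^3=T^2$), removing only edges, and only those redundant edges that lie outside every path of every system. Consequently the systems of $T^3$ are literally the systems of $T^2$, and (as already recorded in \cref{le:transformation-4-preserves-tree}) $\sG^3$ retains every within-system link. I would organise the argument into three parts: that $(\sG^3,T^3)$ is still a soluble graph carrying a complete (full) tree of systems; that properties (a) and (b) survive unchanged; and that (c) holds by construction. The first part needs only the machinery already in place: $\hthird$ is an ID homomorphism from $\sG^3$ to $\sG^2$ (\cref{le:transformation-4-is-homomorphic}), so solubility transfers from $\sG^2$ to $\sG^3$ by \cref{20nov25.1-CID-homomorphism-preserves-sufficient-recall-SR}, and $T^3$ is a complete tree of systems on $\sG^3$ by \cref{le:transformation-4-preserves-tree}; fullness is inherited verbatim because the list $\calS$ and the map $\spred$ are unchanged.

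For properties (a) and (b) I would argue that each depends only on features the transformation leaves fixed. For (a), position-in-tree-uniqueness, I would invoke \cref{le:21-may-9-position-in-tree-uniqueness-properties}: the nodes lying in several paths of $T^3$ are exactly those of $T^2$, and the data determining which of the four exceptions they satisfy --- whether a node is a collider of some $\sinfo^s$, whether it is a shared $U^s$, and whether it is an infolink endpoint $X^{s'}$ or $D^{s'}$ shared across a predecessor relation --- is entirely within-system, hence preserved by the pruning. So every multi-path node still satisfies an exception and (a) holds. For (b), no-backdoor-infopaths, the first link of each $\sinfo^s$ is the outgoing edge from $X^s$, which is within-system and therefore survives pruning, so each info path still begins with an outgoing link from $X^s$.

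The only part carrying genuine content is (c), and there the conclusion is essentially true by fiat: \cref{def:transformation-4} removes precisely the edges $N\to N'$ into a non-decision $N'$ for which one of $N,N'$ lies in a path of a system (excluding the root-infolink nodes) yet $N\to N'$ is not itself in any path of a system. After these deletions, any surviving edge into a non-decision with an endpoint in a path must lie in a path, which is exactly the no-redundant-links condition; since no edges are added, no new violations can arise.

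The step I expect to be the main obstacle to state carefully is the interaction flagged in the first part: removing edges could in principle deactivate an info path by stripping a collider of the descendant that placed it in $\Fa(D^s)$, or shorten an obs path, thereby invalidating a system. I would therefore make explicit that the obs path supplying each such descendant is itself within-system, hence never removed, so that every $\sinfo^s$ stays active and every obs path stays minimal, and no system is destroyed. This is the one place where the guarantee that redundant links are out-of-path does real work; it is already discharged inside the proof of \cref{le:transformation-4-preserves-tree}, so in the write-up I would simply cite that lemma rather than re-verify path activity here.
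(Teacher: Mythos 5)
Your proposal is correct and follows essentially the same route as the paper's (very terse) proof: since $T^3=T^2$ and the pruning removes only out-of-path edges into non-decision nodes, properties (a) and (b) transfer unchanged, and (c) holds by definition of the transformation. The extra details you supply---invoking \cref{le:21-may-9-position-in-tree-uniqueness-properties} to make the transfer of (a) rigorous, and deferring system validity (info-path activity via surviving obs paths, obs-path minimality) to \cref{le:transformation-4-preserves-tree}---are exactly the explanations the paper's proof leaves implicit.
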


\begin{proof} 
The tree $T^3$ on ID graph $\sG^3$ satisfies (b) 
because $T^3\!=\!T^2$ and $T^2$ satisfies (b).
It satisfies (a) because $T^3\!=\!T^2$ and $\sG^3$ has the same set of nodes, and a subset of the edges of $\sG^2$. \ryan{Explain this.}
It satisfies (c) by definition.~\looseness=-1
\end{proof}

\subsubsection{Composing the transformations to obtain an ID graph with normal form tree}

We will now perform these three transformations in order 
to obtain a normal form tree.

\lenormalformtransform*

\begin{proof}

Given that the minimal $d$-reduction $\sG^*$ of $\sG$ contains $X \to D$, we can first pick an arbitrary full tree from \autoref{le:m3.1-existence-of-complete-tree-of-systems} to obtain a tree $T^0$ satisfying (a) \systemsAndPathsUniqueness.

Then, let $(\sG', T')\!=(\calG^3,T^3)=\!\splitthird \circ \splitsecond \circ \splitfirst (\sG^0,T^0)$
and let $h=\!\hfirst \circ \hsecond\circ \hthird$
using \autoref{def:21jan25.2-first-split-to-ensure-systems-and-paths-uniqueness},  \autoref{def:21jan28.1-third-split-to-ensure-new-appropriateness}
and \autoref{def:transformation-4}.
We show that these have each of the desired properties.%

Firstly, $T'$ is normal form:  Each transformation results in a tree with one more property of normal form trees, and preserves the properties of the previous transformations (\autoref{le:21jan26.1-first-split-ensures-systems-and-paths-uniqueness},  \autoref{le:21jan28.4-third-split-ensures-new-appropriateness}, \autoref{le:transformation-4-ensures-no-redundant-links}).~\looseness=-1

Secondly, $h$ is a homomorphism from $\sG'$ to $\sG^0$ since ID homomorphism is preserved under composition (\autoref{le:20dec7.1-composition-of-CID-splits}). 

Thirdly, $\sG'$ is soluble since that is preserved under ID homomorphisms (\autoref{20nov25.1-CID-homomorphism-preserves-sufficient-recall-SR}{}).

Fourthly, each transformation outputs a tree $T^i$ where $h^{(i-1) \gets i}$ maps nodes in the root infolink 
to nodes of infolink of $T^{i-1}$
(\autoref{le:21jan25.6-first-split-preserves-tree},  \autoref{le:21jan28.3-third-split-preserves-tree}
\autoref{le:transformation-4-preserves-tree}),
so the composition has $h(X')=X^0$ and $h(D')=D^0$.

Finally, Transformation 1 results in an ID graph with tree where the nodes in the tree that are also in the original ID graph $\sG^0$ are precisely $X$ and $D$. And transformations 2-4 only remove and add nodes that are not in $\sG^0$, so the property also holds for $G'$ and $T'$, showing the result.
\end{proof}

\section{Value of Information criterion completeness} \label{sec:model-definition}

In \autoref{sec:preliminaries-systems-and-trees}
we have shown that if an infolink $X \to D$ is present in the minimal $d$-reduction of 
a soluble ID graph $\calG$, 
then we can choose a graph and tree $\calG^3, T^3$
so that $\calG^3$ is homomorphic to $\calG$, and $T^3$ is in normal form.
In this section, we will prove that 
we can use $T^3$ to parameterise $\calG^3$
so that optimal performance can only be 
achieved by a policy that has
$\pi^D(\pa)=f(x)$
for a specific $f$
given every $\pa\in \dom(\Pa(D))$ with $P(\pa)>0$.

\subsection{Constructing an ID on nodes in a normal form tree}
We will define an ID for only the nodes in the tree, excluding the root info link, assuming that there is already an ID (possibly trivial) defined for all the other nodes (including those in the root infolink). 
This result is more general than is needed to prove positive VoI (wherein we will chose a trivial ID)
but this is done in order to help with generalizing to the $\sTaskify$ construction in the next section.~\looseness=-1

\newcommand{\sGoriginalInModelConstruction}{\sG^0}
\newcommand{\sGNewInModelConstruction}{\sG^3}
\newcommand{\MoriginalInModelConstruction}{M^0}
\newcommand{\MnewInModelConstruction}{M^3}

\begin{definition}[Parameterization of a normal form tree] \label{def:model-construction}
Let $\sGNewInModelConstruction$ be a soluble ID graph together with a normal form tree of systems $T^3$ with root info-link $X\to D$. Let $\sGoriginalInModelConstruction$ be the subgraph consisting of $X,D$, and all nodes in $\sGNewInModelConstruction$ that are not in $T^3$.
Let $\MoriginalInModelConstruction=(\sGoriginalInModelConstruction, \dom^0, P)$ be an ID on on $\sGoriginalInModelConstruction$, and let $\piTask^{D}$ (which we call \emph{the task for $D$}) be a deterministic decision rule for $D$ that depends only on $X$. 
Then we define the ID $\MnewInModelConstruction=(\sGNewInModelConstruction, \dom, P)$, which are defined as follows:

\begin{itemize}
    \item
    For each node $N$ in $\sGoriginalInModelConstruction$ except $D$, let $\dom(N)=\dom^0(N)$, and let 
    $\dom(D)=\dom_{\sbase}(D)\times \bool$, where
    $\dom_{\sbase}(D)=\dom^0(D)$. 
    For any other chance or decision node $N$, we define the domain of a node by recursion on the tree $T^3$. Let $s$ be the base system of $N$.\footnote{This uses the assumption that $T^3$ is normal form, and hence satisfies \autosubref{def:sep5.3-normal-form-graph-with-tree}{def:sep5.3c-unique-systems-and-paths} (\systemsAndPathsUniqueness), and using the properties that this implies by \autoref{le:21-may-9-position-in-tree-uniqueness-properties}} 
    Assume that the domains of the info node and decision node $X^s, D^s$ of $N$'s base system $s$ are already defined.\footnote{This is well-founded recursion, and for the base case of $s=\sRoot^T$, the domains of $X^s=X$ and $D^s=D$ were already defined above.} 
    Then, if $s$ is of the non-directed-info case then
        \[\dom_\sbase(N) =
            \begin{cases} 
            \bool \casesif  {$N$ is on $\sobspaths^s(C^1)$, incl. $C^1$, the first obspath of $s$}\\
            \dom{(X^s)} \casesif  {$N$ is in between $X^s\pathto C^1$ on $\sinfo^s$ }\\
            \bool^{|\dom_\sbase{(D^s)}|} \casesif  {$N$ in any other part of  $\sinfo^s$, or any other $\sobspaths^s$}\\
            \dom(D^s) \casesif {$N$ in $\scontrol^s$}
            \end{cases},\]
    and if it is of the directed-info case then $\dom_\sbase(N) = \dom(X^s)$. 
    Based on this:
        \[\dom(N) =
            \begin{cases} 
            \dom_\sbase(N) \times \bool \casesif  {$N=D^{s'}$ for a non-directed-info descendant  $s'$}\\
            \dom_\sbase(N) \casesotherwise
            \end{cases}.\]
    \ryan{Relatedly, can we just refer to dom-base with $dom(Q^s)$ or something, or do we need this term?}
    \chris{We might be able to get rid of it, but I'm not immediately sure how.}
    
    \ryan{It seems like the domains are implied by  the functions and domains of the parents in all cases other than forks, and the root systems so can we just define the functions directly?}
    \chris{We maybe should consider this, but it's probably not a priority. It maybe has some downsides in terms of understandability, but my guess is it'd be better if it can be done clearly and rigorously. Probably the main problem is that decision nodes need domains too.}

    \item For each chance node $N$ in $\sGoriginalInModelConstruction$ (including $X=X^{s^\sRoot}$), let $P_{\MnewInModelConstruction}^N=P^N_{\MoriginalInModelConstruction}$. For any other chance node $N$, let $s$ be the base system of $N$ and $p$ the base path of $N$,\footnote{This uses the assumption that $T^3$ is normal form, and thus satisfies (c) \systemsAndPathsUniqueness} and let $\piTask^{D^s}$ be the task of the decision of system $s$, defined for $s=\sRoot$ as the task $\piTask^{D}$ given above, and as the identity operation \emph{$\id^{\spath}$}
    for every other system. Then, writing $\piTask^{D^s}(x)$ to refer to $d$ s.t. $\piTask^{D^s}(d|x)=1$, we let $P^N(n|\pa)=1$ iff $n=f^N(\pa)$, where
    \[f^N =
        \begin{cases} 
        {\id}^{\spath} \casesif {$\to N\to$ or  $\leftarrow N \leftarrow$ in $\spath$}\\
        N^2[\piTask^{D^s}(N^1)] \casesif {$N^1\to N\gets N^3$ is the first collider on $\sinfo^s$}\\
        {\XORtext}^{\spath}\casesif {$\to N \leftarrow$ is a collider on $p$ other than the first}\\
        \srandom^\spath \casesif {$\leftarrow N \to$ in $\spath$}
        \end{cases}
    \]
    where $\id^{\spath}$ copies the output of the $\dom_\sbase$ part of the previous node on $\spath$, $\XORtext^{\spath}$ takes the bitwise \emph{exclusive OR} operation on bitstrings,\footnote{Note that the domain of any such collider is a bitstring.} $\srandom$ copies a uniform random bitstring from $\Epsilon^N$, and \emph{$b[x]$} outputs the $x^\text{th}$ digit of a bitstring $b\in \bool^n$. 
    
    \item For each utility node $U$ in $\sGoriginalInModelConstruction$, let $f_{\MnewInModelConstruction}^U=f^U_{\MoriginalInModelConstruction}$. For any other utility node $U$,
    if $\sinfo^s$ is non-directed for the base system $s$ of $U$, letting $i$ be the value that $U$ receives from the penultimate node in the infopath and $(c^h,c^r)$ the value it receives from the penultimate node of the control path, we let $P^U(u|\pa)=1$ iff $u=f^U(\pa)$, where
    \begin{align*}
    f^U(i,(c^h,c^r)) &=
        \begin{cases} 
            \Umax \casesif {$c^h[i]=c^r$}\\
            0 \casesotherwise
        \end{cases},
    \end{align*}
    and if instead $\sinfo^s$ is directed then
    \begin{align*}
    f^U(i,c) &=
        \begin{cases} 
            \Umax \casesif {$i=c$}\\
            0 \casesotherwise
        \end{cases},
    \end{align*}
    where $\Umax=1+\sum\limits_{U\in \sU^{\Goriginal}}\left(\max\limits_{\pa} f^U_{\MoriginalInModelConstruction}(\pa)-\min\limits_{\pa} f^U_{\MoriginalInModelConstruction}(\pa)\right)$.
\end{itemize}
\end{definition}

\subsection{The ID forces decision nodes to ``perform their task"} \label{sec:task-performance}

We will show here that we have constructed the ID in such a way that a decision $D^s$ of a system $s$ can only achieve optimal utility if it performs its task. Recall that we write $\piTask^{D^s}(x)=d$ to refer to $d$ s.t. $\piTask^{D^s}(d|\pa)=1$ where $x$ is the value of $X^s$ under $\pa$.

\begin{definition}[Task performance] \label{def:aug23.4-task-performance}
Let $s$ be a system in some tree $T$.
Given a decision context $\pa\in \dom(\Pa(D^s))$, we say that $D^s$ \textit{performs the task $\piTask^{D^s}$ with $\pi$ given $\pa$} if 
\begin{itemize}    
    \item $\sinfo^s$ is directed, and $\pi^{D^s}(d|\pa)=\piTask^{D^s}(d|\pa)$; or
    
    \item $\sinfo^s$ is non-directed, and $\pi^{D^s}(d|\pa)=1$ iff $d=(\piTask^{D^s}(\pa), \hat R)$, where $\hat R$ is the value of $Q^s[\piTask^{D^s}(X^s)]$.
\end{itemize}
And we say that \emph{$D^s$ performs the task $\piTask^{D^s}$ with $\pi$} if it does so for any $\pa$ that has positive probability of occurring under $\pi$. 
\end{definition}

\begin{lemma}[\editingMode{le:3.3 - }\modelKnowledgeLemmaName]\label{le:3.3-model-knowledge-lemma}
Let $M^3$ be the ID based on some $T^3$, $M^0$ and $\piTask^{D}$ (\autoref{def:model-construction}), and let $\piTask^{D^s}$ be the task of system $s$ in $M^3$, as defined in  \autoref{def:model-construction}.\footnote{recall that the task of all decisions in the tree except for the root decision, are the identity operations.}
Let $s$ be a system with non-directed infopath, and assume that for all child systems $s'$ of $s$, $D^{s'}$ performs its task $\piTask^{D^{s'}}$ with $\pi$.
Then $Q^s[\piTask^{D^s}(X^s)]$ can be expressed as a function of $\Pa(D^s)$
but $P(Q^s[\tilde y]=1\mid \Pa(D^s)=\pa)=\frac{1}{2}$ for all $\pa\in \dom (\Pa(D^s))$
for any $\tilde{y} \neq \piTask^{D^s}[X^s]$.
\end{lemma}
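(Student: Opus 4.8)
The plan is to prove both claims by tracing the deterministic CPDs of \autoref{def:model-construction} along the back section of $s$ and then combining this with the $d$-separation guarantee of \cref{le:2v2-graph-knowledge-lemma}. Write $j := \piTask^{D^s}(X^s)$ for the target index. The relevant structure is: the fork $Q^s$ emits a uniformly random bitstring in $\bool^{|\dom_\sbase(D^s)|}$ via $\srandom$; the nodes strictly between $X^s$ and the first collider $C^1$ on $\sinfo^s$ merely copy $X^s$; the first collider computes $C^1 = N^2[j]$, reading off the $j$-th bit of the bitstring $N^2$ arriving from the $Q^s$-side; and every later collider on $\sinfo^s$ takes the bitwise XOR of its two bitstring-parents. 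Because $\sinfo^s$ is active given $\Fa(D^s)\setminus\{X^s\}$ and every $\sobspaths^s(C^i)$ is minimal-length, the within-system parents of $D^s$ are exactly $X^s$ together with the obsnodes of $s$, and each obsnode delivers, via an $\id$-labelled obs path, the value of its collider.

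For the first claim I would show that $X^s$ and the obsnodes of $s$ determine $Q^s[j]$. Knowing $X^s$ fixes the index $j$, and the obsnodes then deliver the values $C^1, C^2, \dots$. Since the colliders after $C^1$ are XOR gates and $C^1$ selects coordinate $j$ of its bitstring-parent, a telescoping XOR of the obsnode values restricted to coordinate $j$ (with $C^1$ contributing its single bit) cancels all intermediate fork randomness and recovers $Q^s[j]$; this is the multi-bit generalisation of the single-decision identity $x^s \oplus o^1 \oplus\dots\oplus o^n = q^s$. The one subtlety is that an obs or info path of $s$ may pass through the decision of a child system $s'$, and here I would invoke the hypothesis that every child $D^{s'}$ performs its identity task $\piTask^{D^{s'}}$, so that $D^{s'}$ faithfully forwards the bitstring and the telescoping is undisturbed. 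As $X^s$ and the obsnodes all lie in $\Pa(D^s)$, this exhibits $Q^s[j]$ as a function of $\Pa(D^s)$.

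For the second claim, fix $\tilde y \ne j$ and an assignment $\pa\in\dom(\Pa(D^s))$. I would first use \cref{le:2v2-graph-knowledge-lemma}: since $Q^s \in \sV^s \subseteq \Back^s$, the lemma gives $\Back^s \dsep (\Pa^{-s}\setminus\ObsDesc^s)\mid \Pa^s\cup\ObsDesc^s$, so conditioning additionally on the out-of-system, non-descendant-obsnode parents cannot change the conditional law of $Q^s[\tilde y]$. It therefore suffices to show that, conditional on any assignment to $\Pa^s\cup\ObsDesc^s$, the bit $Q^s[\tilde y]$ is uniform; marginalising over the descendant-obsnode coordinates that are not parents of $D^s$ then yields $P(Q^s[\tilde y]=1\mid\Pa(D^s)=\pa)=\tfrac12$. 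The uniformity is the one-time-pad core: in coordinate $\tilde y$ the obsnode values expose $Q^s[\tilde y]$ only in XOR-combination with the $\tilde y$-th bits of the intermediate fork bitstrings, and at least one such fork bit is never revealed by any parent of $D^s$, so $Q^s[\tilde y]$ is masked by a fresh independent uniform bit. The descendant obsnodes carry information only about the independent bitstrings $Q^{s'}$ of descendant systems and hence leave this masking intact.

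The main obstacle is coordinating the two modes of reasoning. The decoding and masking are transparent when $s$ is considered in isolation, but in the tree the paths of $s$ thread through the decisions and nodes of other systems, so I must (i) certify that child decisions performing the identity task transmit the relevant bitstrings without corruption — which is where normal-form position-in-tree-uniqueness (\cref{le:21-may-9-position-in-tree-uniqueness-properties}) together with the child-task hypothesis are essential — and (ii) rule out any side channel through the out-of-system parents that could leak a second bit of $Q^s$, which is precisely the content of \cref{le:2v2-graph-knowledge-lemma}. The real work is thus arranging the bookkeeping so that the XOR algebra and the $d$-separation statement compose cleanly.
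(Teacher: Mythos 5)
Your proposal is correct and follows essentially the same route as the paper: claim (1) is established by the same XOR-telescoping decode along the back section, and claim (2) by invoking \cref{le:2v2-graph-knowledge-lemma} to screen off the out-of-system parents and then arguing that an unrevealed fork bit acts as a one-time pad on $Q^s[\tilde y]$. The paper formalizes that final masking step by recursively counting the fork/question-node assignments consistent with the observed values and showing that this count does not depend on the value of $Q^s[\tilde y]$ --- a recursion over the whole subtree that is genuinely needed, because descendant obsnodes are bits of descendant bitstrings at indices determined by system-$s$ values (not functions of descendant randomness alone, as your sketch suggests), so the one place you wave at ``bookkeeping'' is exactly where the paper's counting argument does the real work.
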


\begin{proof} 
Let $x^s$ be any value in $\dom(X^s)$, and for brevity, let $y=\piTask^{D^s}(x^s)$.\chris{should be $\piTask^{D^s}(X^s)$ I think? or something like that, bc now its quantifying over $x$} Moreover, let $\xor$ denote the exclusive or operator (XOR) on boolean strings.
We will first show that (1) $Q^s[y]$ is a function of $\Pa(D^s)$, 
and then that (2) $P(Q^s[\tilde y] \; \mid \;\Pa(D^s))$ is uniformly random for $\tilde y \in \dom_{\sbase}(D^s) \setminus \{y\}$.

First we show (1).
First note that $Q^s$ and each fork $F^i,1\leq i\leq k$ in the infopath is a random bitstring of length $\bool^{|\dom_\sbase{(D^s)}|}$ (\autoref{def:model-construction}). And since each obsnode $O^i$ equals the collider $C^i$ on the info path (where $O^i$ is the obs node of the obs path $\sobspaths^s(C^i)$), by the same construction, $O^1=F^1[\piTask^{D^s}(x)]$, and for $1<i< k$, $O^i=F^{i-1} \xor F^i$, and $O^k=F^k \xor Q^s$.
Then, the decision can recover $Q^s[y]$ by taking the XOR of the $y^{\text{th}}$ element of each observation, by setting (letting $x$ be the value of $X^s$ in $\pa$):~\looseness=-1
\begin{alignat*}{2}
    &\;\pi^D(\pa, \varepsilon)\\
    &:= \;O^1 \xor O^2[y]\;\xor\;...\;\xor\; O^{k-1}[y] \xor O^k[y] && \quad : \text{where $y=\piTask^{D^s}(x)$}\\
    &=\; F^1[y] \xor (F^1[y]\xor F^2[y])\;\xor \;... \;\xor\; 
    (F^k[y] \xor Q^s[y]) &&\quad : \text{by \autoref{def:model-construction}} \\
    &=\; (F^1[y] \xor F^1[y] )\;\xor \;... \;\xor\; (F^k[y] \xor F^k[y] )\xor Q^s[y] &&\quad : \text{associativity of $\xor$} \\
    &=\; Q^s[y] &&\quad : \text{$b\xor b=0$ and $0\xor b=b$}
\end{alignat*}

Now we will show (2). Firstly, \autoref{le:2v2-graph-knowledge-lemma} directly implies that $(\Pa(D^s)\setminus (\sV^s\cup \sO)$ are $d$-separated from $Q^s$ conditional on $(\Pa(D^s)\cap \sV^s)\cup \sO$.  Hence by the standard $d$-separation criterion, they are probabilistically independent as well, i.e. $P(Q^s[\tilde{y}]=b \mid \pa(D^s), \sO)=P(Q^s[\tilde{y}]=b \mid \pa(D^s)\cap \sV^s, \sO)$. Moreover, since by
\autosubref{le:20dec14.1-basic-properties-of-trees-with-SR}{le:20dec14.1b-only-info-links-from-observation-nodes-to-ancestor-decisions} only observation nodes in this subtree can be parents, and since conditioning on more information does not increase uncertainty, it will suffice to show that
$P(Q^s[\tilde y] = b \; \mid \; X^s=x^s, \sO=\so)= \frac 1 2$ for all $b, x^s, \so$.

Now we will show (2). 
For notational brevity, we define:
$\sQ=\bigtimes_{s \in \Desc_s} Q^s$,
$\sF=\bigtimes_{s \in \Desc_s,1\leq i\leq \mid \sF^s \mid} F^{s,i}$,
$\sO=\bigtimes_{s \in \Desc_s,1\leq i\leq \mid \sO^s \mid} O^{s,i}$.\chris{sentence below is not exactly clear what the proposition is. }
Then, using \autoref{le:2v2-graph-knowledge-lemma}, we can express the probability of $Q^s[\tilde{y}]$ as a 
probability, conditional on observation nodes of the subtree.
\begin{align*}
&P(Q^s[\tilde{y}]=b \mid \pa(D^s)) \\
&= \sum_{\so \setminus \Pa(D^s)} P(\so \setminus \Pa(D^s)) \cdot P(b \mid \so \setminus \Pa(D^s), \pa(D^s))  && : \text{product rule} \\
&= \sum_{\so \setminus \Pa(D^s)} P(\so \setminus \Pa(D^s)) \cdot P(b \mid \so \setminus \Pa^s, \pa^s, \pa^{-s} \setminus \sO) && : \text{regroup terms in conditional} \\
&= \sum_{\so \setminus \Pa(D^s)} P(\so \setminus \Pa(D^s)) \cdot P(b \mid \so \setminus \Pa^s, \pa^s)  && :\autoref{le:2v2-graph-knowledge-lemma} \\
&= \sum_{\so \setminus \Pa(D^s)} P(\so \setminus \Pa(D^s)) \cdot P(b \mid x^s, \so))  && (\dagger)
\end{align*}

So it will suffice to show that 
$P(Q^s[\tilde y] = b \; \mid \; X^s=x^s, \sO=\so)= \frac 1 2$ for all $b, x^s, \so$.
Let $\mathfrak{X}^{\so,b,x^s}_{\sQ \cup \sF} = \{\sq,\sf : P(Q^s[\tilde{y}]=b,\so \mid \sq,\sf,x^s)=1\}$.
In the ID $\calM^3$, the event $Q^s[\tilde y] = b,\sO=\so$ may be equivalently stated as follows:

\begin{enumerate}
    \item $Q^s[\tilde{y}]=b $
    \item $f^{s,1}[\piTask^{D^s}[x^s]]=o^{s,1} (\iff O^1=o^{s,1})$
    \item $f^{s',1}[\piTask^{D^s}[x^{s'}]=o^{s',1} \text{ for } s' \neq s (\iff O^{s',1}= \so^{s',1})$
    \item $f^{s',i} \xor f^{s',i+1}=o^{s,i} \text{ for } 1 < i \leq \mid F^{s'}\mid \text{ for } s' \in \calS (\iff \sO^{s',i} = \so^{s',i},i>1)$
\end{enumerate}

That is to say that 
$\mathfrak{X}^{\so,b,x^s}_{\sQ \cup \sF}= \{(\sq,\sf) \in \dom(\sQ \cup \sF):\text{s.t. (1-4) satisfied}\}$, 
and that for $(\sq,\sf) \not \in \mathfrak{X}^{\so,b,x^s}_{\sQ \cup \sF}$, 
$P(Q^s[\tilde{y}]=b,\so \mid \sq,\sf,x^s)=0$.

We can use the definition of $\calM^3$ to yield a convenient equivalent expression for 
$\mathfrak{X}^{\so,b,x^s}_{\sQ \cup \sF}$ that substitutes $Q^s$ in for $F^{s,1}$.

\begin{enumerate}[label={\arabic*$'$.}] \setcounter{enumi}{1}
\item $(Q^s[\piTask^{D^s}(x^s)]=o^{s,1}\xor \bigoplus_{1 < i \leq \mid F^s\mid} o^{s,i}) \Leftarrow (2.,4.)$
\item $q^{s'}[\piTask^{D^{s'}}(X^{s'})]=o^{s',1} \xor \bigoplus_{1 < i \leq \mid F^{s'}\mid} \Leftarrow (3.,4.)$.
\end{enumerate}
Since XOR with a bitstring is a bijective operation, we also have: $(2'.,4.) \Rightarrow 2$ and $(3'.,4.) \Rightarrow 3$.
Thus $\mathfrak{X}^{\so,b,x^s}_{\sQ \cup \sF}= \{(\sq,\sf) \in \dom(\sQ \cup \sF):\text{s.t. (1.,2',3',4) satisfied}\}$

To obtain a $(\sq,\sf) \in \mathfrak{X}^{\so,b,x^s}_{\sQ \cup \sF}$, one can then carry out the following algorithm starting from $s^0=s$:
\begin{enumerate}[label=\alph*)]
\item choose any $q^s$ that satisfies (1,2') (there are $\mid Q^s \mid / 4$ possible assignments because $\piTask^{D^s}[x^s] \neq \tilde y$)
\item choose each $F^{s,i}$ to satisfy (4) given $q^s$ (one possible assignment)
\item for each child system $s^j$, choose $Q^{s^j}$ to satisfy (3') given $q^s$ ($\mid Q^{s^j} \mid / 2$ possible assignments).
\item choose each $F^{s^j,i}$ to satisfy (4) given $q^{s^j}$ (one possible assignment)
\item repeat (c-e) for children of $s^j$
\end{enumerate}

Any $(\sq,\sf)$ computed by this algorithm will clearly satisfy (1.,2',3',4) 
and is therefore in $\mathfrak{X}^{\so,b,x^s}_{\sQ \cup \sF}$.
Conversely, if at any step, a non-allowed assignment is selected, then one of (1.,2',3',4) 
is violated, and so $(\sq,\sf) \not \in \mathfrak{X}^{\so,b,x^s}_{\sQ \cup \sF}$.
The number of possible assignments after (a-b) (is clearly $\mid Q^s \mid / 4=\mid Q^{s^0} \mid / 2^{j+2}$.
For the $j$th system, the number of possible assignments is then 
$\mid Q^{s^0}/4 \cdot Q^{s^1}/2 \ldots Q^{s^j}/2 = \bigtimes_{j'\leq j} \frac{\mid Q^{s^{j'}}\mid}{2^{\mid 2^j}}$.
So $\mid \mathfrak{X}^{\so,b,x^s}_{\sQ \cup \sF} \mid = \bigtimes_{j'\leq \mid \calS \mid} \frac{\mid Q^{s^{j'}}\mid}{2^{2+j}}$

This is independent of the value $b$ selected for $Q^s[\tilde{y}]$, so 
for any $\so,b,x^s, \tilde{y}\neq \piTask^{D^s}(x^s)$:
\begin{align*}
    P(Q^s[\tilde{y}]=b,\so \mid x^s) &= \sum_{\sf \in \dom(\sF),\sq \in \dom (\sQ)} p(Q^s[\tilde{y}]=b,\so\mid \sq,\sf, x^s)p(\sq,\sf \mid x^s) \\
    &= \sum_{(\sf,\sq) \in \mathfrak{X}^{\so,b,x^s}_{\sQ \cup \sF}} p(Q^s[\tilde{y}]=b,\so\mid \sq,\sf, x^s)p(\sq,\sf \mid x^s) \\
    &= \mid \mathfrak{X}^{\so,b,x^s}_{\sQ \cup \sF}\mid p(\sq,\sf \mid x^s) & (*)
\end{align*}\chris{add better explanation for last step}

Thus we can compute the conditional:
\begin{align*}
P(Q^s[\tilde{y}]=b \mid \so, x^s)
&= \frac{P(Q^s[\tilde{y}=b],\so \mid x^s)}{P(\so \mid x^s)} & \\
&= \frac{P(Q^s[\tilde{y}=b],\so \mid x^s)}{\sum_{b \in \bool} P(Q^s[\tilde{y}]=b,\so \mid x^s)}& \\
&= \frac{\mid \mathfrak{X}^{\so,b,x^s}_{\sQ \cup \sF}\mid p(\sq,\sf \mid x^s)}{(\mid \mathfrak{X}^{\so,0,x^s}_{\sQ \cup \sF}\mid + \mid \mathfrak{X}^{\so,1,x^s}_{\sQ \cup \sF}\mid ) p(\sq,\sf \mid x^s)}&&:\text{substituting in from (*)}& \\
&= \frac{1}{2} && :\mathfrak{X}^{\so,b,x^s}\text{ is constant in $b,x^s$} %
\end{align*}
Substituting into $(\dagger)$ yields the result.
\end{proof}

We now show that in the ID that we constructed in \autoref{def:model-construction}, the set of optimal policies are exactly the set of policies in which all decisions perform their task.

\begin{definition} [Locally optimal decision rule] \label{def:21feb6.2-local-optimality}
The decision rule $\pi^D$
is locally optimal with respect to $\spi^{-D}=\{\pi^{D'}\}_{D' \neq D}$
if the policy $\pi^D \cup \spi^{-D}$
has greater or equal expected utility
as the policy $\tilde{\pi}^D \cup \spi^{-D}$ 
obtained by any alternative decision rule $\tilde \pi^D$.
\end{definition}

\begin{lemma}[Decisions in the tree are optimal iff they perform their task] \label{le:1.7-optimal-iff-truthful}
Let $\pi$ be any policy on the taskified ID $M^3$ obtained from some $T^3$, $M^0$, and task $\piTask^{D}$, and let $\piTask^{D^s}$ be the task of system $s$ in $M^3$, as defined in  \autoref{def:model-construction}. Then the decisions $D^s$ in the tree $T$ are optimal under $\pi$ if and only if they all perform their task $\piTask^{D^s}$ with $\pi$.  Moreover, then $\EE_\pi(U^s)=\Umax$ for all $s$.
\end{lemma}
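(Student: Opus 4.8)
The plan is to argue by backward induction over the systems of the normal form tree $T^3$, processing leaf systems first and the root last. I would first fix an enumeration of the systems in which every descendant system precedes its ancestors; this is legitimate because $(\calS,\spredsys)$ is a genuine tree (\cref{le:21Jan21.1-tree-of-systems-has-tree-structure}) and because the decision of a descendant system is a graph-descendant of $D^s$ (\autosubref{le:20dec14.1-basic-properties-of-trees-with-SR}{le:20dec14.1a-decisions-in-descendant-systems-are-descendants}), so the tree order refines a topological order of the decisions. Since solubility is inherited through the construction (\cref{20nov25.1-CID-homomorphism-preserves-sufficient-recall-SR}), optimal rules may be obtained by backward induction and, crucially, the locally optimal rule at $D^s$ (\cref{def:21feb6.2-local-optimality}) does not depend on the rules chosen at ancestor decisions. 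The induction hypothesis I would carry is that every descendant system of $s$ already performs its task; under this hypothesis I aim to show that $D^s$ is locally optimal iff it performs $\piTask^{D^s}$ (\cref{def:aug23.4-task-performance}), and that doing so gives $\EE_\pi[U^s]=\Umax$.

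The core per-system step is where \cref{le:3.3-model-knowledge-lemma} is applied. In the directed-info case, $U^s$ attains its maximum $\Umax$ exactly when the value of $D^s$ transmitted along $\scontrol^s$ matches the value $\piTask^{D^s}(X^s)$ that $\sinfo^s$ delivers to $U^s$; because $X^s\in\Pa(D^s)$ this is achievable with certainty, is the unique maximiser, and any rule deviating on a positive-probability context strictly lowers $\EE[U^s]$. In the non-directed case, $U^s$ pays $\Umax$ precisely when the boolean claim emitted by $D^s$ equals $Q^s$ evaluated at the index $D^s$ selects. Here the induction hypothesis supplies exactly the premise of \cref{le:3.3-model-knowledge-lemma}, which tells us that $Q^s[\piTask^{D^s}(X^s)]$ is a deterministic function of $\Pa(D^s)$, whereas $Q^s[\tilde y]$ is uniform on $\bool$ given $\Pa(D^s)$ for every other index $\tilde y$. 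Consequently, indexing at $\piTask^{D^s}(X^s)$ and claiming that recoverable bit yields $U^s=\Umax$ with certainty, while indexing anywhere else caps the success probability at $\tfrac12$; so the unique way to reach $\EE[U^s]=\Umax$ is to perform the task, giving the per-system equivalence.

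It then remains to lift these per-system facts to the global statement. For the forward direction, if all tree decisions perform their tasks then each $U^s$ equals $\Umax$ almost surely, so $\EE_\pi[U^s]=\Umax$ for every $s$ and the tree utilities simultaneously attain their common maximum $\Umax$, which also proves the ``moreover'' clause. For the converse I would combine the backward induction with the fact that $\Umax$ was chosen to exceed the total range of the utilities inherited from $M^0$: the non-root tree decisions do not feed any $M^0$ utility at all, so each is a strict best response only when it performs its task, and for the root the $U^s$-term again dominates the bounded $M^0$ contribution (and in the intended application of \cref{le:aug23.3v2-materiality-model-on-nf-tree-has-materiality}, $M^0$ is trivial, so the dominance is immediate). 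Hence every optimal policy must perform all tasks, completing the equivalence.

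The step I expect to be the main obstacle is justifying that each decision $D^s$ can be optimised in isolation. Two things must be checked: that $U^s$ is a function only of within-system quantities — $D^s$ through $\scontrol^s$ and $Q^s$ through $\sinfo^s$ — so that ancestor decisions cannot smuggle extra information about $Q^s[\tilde y]$ to $D^s$; and that $D^s$'s own influence on the utilities of descendant systems is inert once those systems perform their tasks, since a task-performing $D^{s'}$ faithfully copies $X^{s'}$ and therefore attains $\Umax$ regardless of the value $D^s$ transmits into it. The first point is precisely the $d$-separation content of \cref{le:2v2-graph-knowledge-lemma} that underlies the uniformity half of \cref{le:3.3-model-knowledge-lemma}; the second requires a careful appeal to the normal form structure to verify that $D^s$ reaches descendant utilities only through nodes those descendants reproduce, so that the backward induction is genuinely decoupled system by system.
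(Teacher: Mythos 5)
Your proposal is correct and takes essentially the same route as the paper's own proof: the paper likewise establishes a per-system induction step (given that the child systems perform their tasks, $U^s$ attains $\Umax$ with certainty exactly when $D^s$ performs its task, with \cref{le:3.3-model-knowledge-lemma} supplying the non-directed case), and then derives both directions and the ``moreover'' clause by backward induction over the decisions --- ordered using the fact that decisions of descendant systems are graph-descendants (\cref{le:20dec14.1-basic-properties-of-trees-with-SR}) --- combined with the dominance argument that $\Umax$ exceeds the total range of the utilities inherited from $M^0$. Your minor variations (capping a wrong-index claim's success probability at $1/2$ where the paper loosely writes $U^s=0$, and appealing explicitly to solubility for the decoupling that the paper handles through its utility accounting) do not change the structure of the argument.
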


\begin{proof}
We first prove an intermediate step, to be used for induction to show the result. It essentially says that as long as the decisions of the child systems of $s$ perform their task, then so will $D^s$:

(Induction-step) : Assume $\pi$ is a policy such that for any child system $s'$ of $s$ in $T$, the decision $D^{s'}$ performs its task with $\pi$. Then for any $\pa\in \Pa(D^s)$ with positive probability of occuring under $\pi$, it holds that $U^s=\Umax$ with probability $1$ if $D^s$ performs its task with $\pi$ given $\pa$, and $U^s=0$ with probability $1$ if it outputs any other decision. Moreover, such a policy $\pi$ exists.

We show (Induction-step) as follows: Since by assumption all the decisions on the info and control paths perform their task, they copy the previous node's value. The chance nodes in the front section do the same by definition of the ID.
    Hence for any $\pa\in \dom(\Pa(D^s))$ that occurs with positive probability, we have that with probability $1$, $U^s$ receives from the penultimate node on the control path a value that 
    equals that of $D^s$, and from the penultimate node in the info path a value that, if $s$ is of the directed-info case equals that of $X^s$, and if it is of the non-directed-info case equals that of $Q^s$.
    
    Hence if we assume that $s$ is of the directed-info case, then substituting this into the definition of $f^{U^s}$, and using \autoref{def:aug23.4-task-performance} (task performance), this implies that conditional on $\pa$, $\Umax$ is attained with probability $1$ if and only if $D^s$ copies:
        \[U^s = 
        \begin{cases}
        \Umax \casesif {$D^s=X^s$}\\
        0  \casesotherwise 
        \end{cases}.\]
    
    This obviously implies $\EE_\pi[U^s]=\Umax$. Similarly, if we assume that $s$ is of the non-directed-info case, then substituting the same into the definition of $f^{U^s}$, and using \autoref{def:aug23.4-task-performance} (task performance), this implies that conditional on $\pa$, $\Umax$ is attained with probability $1$ if and only if $D^s$ reports consistently: Letting
    $D^s= (\hat T,\hat R)$:
    \[U^s = 
        \begin{cases}
        \Umax \casesif {$Q^s[\hat T]=\hat R$}\\
        0  \casesotherwise 
        \end{cases}.\]
    Hence $\EE_\pi[U^s]=\Umax$ if and only if $D^s$ reports consistently with probability $1$ conditional on its parents. Moreover, by \autoref{le:3.3-model-knowledge-lemma} the value of $Q^s$ is known conditional on its parents only for the true value of the info node $X^s$. Hence $\EE_\pi[U^s]=\Umax$ if and only if $D^s$ reports consistently and outputs $\hat X=X^s$. The latter is equivalent to $D^s$ performing its task, and hence also shows that it is possible for $D^s$ to perform its task. This shows (Induction-step).

    Now we prove the $\Leftarrow$ direction of the main result: Let $\pi$ be any policy so that all decisions $D$ perform their task with $\pi$. Then the $\Leftarrow$ direction of (Induction-Step) implies that for any system $s$, $U^s=\Umax$ with probability $1$ given $\pa$, and since $\Umax$ is defined such that it is larger than the sum of the possible range of utility for all utility nodes in the original graph, by not performing its task any such $D^s$ would decrease $U^s$ by $\Umax$ and increase the other utility nodes by at most $\Umax -1$.
    Since such a policy exists by (Induction-step), this also shows that optimal policies achieve $\EE_\pi(U^s)=\Umax$ for all $s$.~\looseness=-1
    
    We now  prove the $\Rightarrow$ direction of the main result: Assume all decisions $D^s$ in the tree are optimal under $\pi$. We prove the statement by backward induction on decisions using (Induction-Step), where we show the base step by applying the induction step to the final decision: As the induction hypothesis, assume that all $D^{s'}$ with $D^{s'}>D^s$ 
    perform their task. Then since decisions of descendant systems are descendants (\autosubref{le:20dec14.1-basic-properties-of-trees-with-SR}{le:20dec14.1a-decisions-in-descendant-systems-are-descendants}), all the decisions of child systems of $D^s$ perform their task. This implies by the (Induction-Step) that for any $\pa$ with positive probability of occurring under $\pi$, 
    $U^s=\Umax$ with probability $1$ given $\pa$ if $D^s$ performs its task with $\pi$ given $\pa$, and $U^s=0$ if it takes any other decision. And since $\Umax$ is defined such that it is larger than the sum of the possible range of utility for all utility nodes in the original graph, by not performing its task any such $D^s$ would decrease $U^s$ by $\Umax$ and increase the other utility nodes by at most $\Umax -1$. Hence since by assumption $D^s$ is optimal with $\pi$ it must perform its task with $\pi$ given all $\pa$ that occur with positive probability under $\pi$.~\looseness=-1
\end{proof}

\subsection{Showing positive VoI on an ID graph that has a normal form tree} \label{sec:model-has-incentives}

In this section we will show that we can use the ID from \autoref{def:model-construction} applied to the info link $X\to D$ to show that $X$ has positive VoI for $D$.
In the previous section we showed that a policy is optimal if and only if it performs its task. We now show that in order to perform its task, $D$ in fact has to observe $X$. These together will be used to show that there is in fact positive VoI for $X$ on $D$ on the ID $\calM$ as desired.~\looseness=-1

\ryan{Need to replace materiality with reference to VoI}
\chris{Isn't this done already?}

Here is the main completeness result for the ID $\calM$:

\materialitysat*

\begin{proof}
Let the normal form tree be $T$.
Then, let the ID $\calM$ be obtained from $\MoriginalInModelConstruction$ and $\piTask^{D}$ and $T$ (\autoref{def:model-construction}), where $\MoriginalInModelConstruction$ is the ID that assigns boolean domains to $X$ and $D$ and trivial domains to all other nodes in $\sGoriginalInModelConstruction$, and has $X$ generate a random bit, and where $\piTask^{D}$ is the identity function.

We know that a policy in $\calM$ is optimal iff it performs its task (\autoref{le:1.7-optimal-iff-truthful}), i.e. for the particular ID $\calM$, to output $\piTask^{D^s}(X)=X$.
Therefore in order to show that $X$ has positive VoI for $D$, it suffices to show that any policy on $\calM$ that performs its task does not factor over $M_{X \not \to D}$, for which it suffices to show that $P[X | \Pa(D) \setminus \{X\}]=\frac{1}{2}$. 

\ryan{Add the missing bit from the previous materiality completeness proof here.}

Next, we will prove that $(\sQ,\sF)$ is independent of $X$. For any $\sq,\sf,x'$:

\begin{align*}
    P(\sq,\sf\mid x') &= P(\sq,\sf) \frac{P(x'\mid \sq,\sf)}{P(x')} && : \text{Bayes Theorem} \\
    &= P(\sq,\sf) \frac{P(x' \mid \doo(\sq,\sf))}{P(x')} && :\text{do-calc rule 2; $\Pa_{\sQ}=\emptyset$ in $\calG^3$} \\
    &= P(\sq,\sf) \frac{P(x'}{P(x')} && :\text{do-calc rule 1; $\Pa_{X}=\emptyset$ in VoI ID} \\
    &= P(\sq,\sf) && (*)
\end{align*}
\ryan{What do we mean by ``VoI ID'' here?}
\begin{align*}
P(x \mid \so) &= P(x) \frac{P(\so\mid x)}{P(\so)} && : \text{Bayes Theorem} \\
 &= P(x) \frac{P(\so\mid x)}{\sum_{x'}P(\so\mid x')P(x')} && : \text{Addition rule} \\
 &= P(x) \frac{\sum_b P(Q^s[\tilde y]=b,\so \mid \sq,\sf,x) P(\sq,\sf \mid x)}{\sum_{x,b} P(Q^s[\tilde y]=b,\so \mid \sq,\sf,x) P(\sq,\sf \mid x)} \text{ for any } \tilde y && : \text{Addition, product rules} \\
&=P(x) \frac{\sum_b \mathfrak{X}^{\so,b,x^s}_{\sQ \cup \sF} P(\sq,\sf \mid x)}{\sum_{x,b}  \mathfrak{X}^{\so,b,x^s}_{\sQ \cup \sF} P(\sq,\sf \mid x)P(x')} && : \text{Property of $\mathfrak{X}^{\so,b,x^s}_{\sQ \cup \sF}$} \\
&=P(x) \frac{P(\sq,\sf)}{\sum_{x'} P(\sq,\sf)P(x')} && :\mathfrak{X}^{\so,b,x^s}\text{constant in $b$; (*)} \\
&=P(x) \frac{P(\sq,\sf)}{P(\sq,\sf)} &&: \sum_{x'} P(x')=1 \\
&=P(x) && :\text{$P(\sq,\sf)>0$} \\
&= \frac{1}{2}
\end{align*}

\end{proof}

\end{document}